\title{Distribution Compression in Near-linear Time}
\author{%
Abhishek Shetty$^1$,
Raaz Dwivedi$^{2}$, 
Lester Mackey$^3$\\
  $^1$ Department of EECS, UC Berkeley \\
   $^2$ Department of Computer Science, Harvard University and Department of EECS, MIT \\
  $^3$ Microsoft Research New England \\
  \href{mailto:shetty@berkeley.edu}{\textsc{shetty@berkeley.edu}}, \href{mailto:raaz@mit.edu}{\textsc{raaz@mit.edu}},
  \href{mailto:lmackey@microsoft.com}{\textsc{lmackey@microsoft.com}}
}
\definecolor{mydarkblue}{rgb}{0,0.08,0.45}
\renewcommand*{\backrefalt}[4]{%
    \ifcase #1 \footnotesize{(Not cited.)}%
    \or        \footnotesize{(Cited on page~#2.)}%
    \else      \footnotesize{(Cited on pages~#2.)}%
    \fi}
\newtheorem{example}{Example}
\crefname{appendix}{App.}{Apps.}
\crefname{subsubsubappendix}{App.}{Apps.}
\crefname{equation}{}{}
\crefname{lemma}{Lem.}{Lems.}
\crefname{theorem}{Thm.}{Thms.}
\crefname{Corollary}{Cor.}{Cors.}
\crefname{algorithm}{Alg.}{Algs.}
\crefname{section}{Sec.}{Secs.}
\crefname{table}{Tab.}{Tabs.}
\crefname{remark}{Rem.}{Rems.}
\crefname{example}{Ex.}{Exs.}
\crefname{definition}{Def.}{Defs.}
\crefname{Proposition}{Prop.}{Props.}
\crefname{mytheorem}{Thm.}{Thms.}
\crefname{myremark}{Rem.}{Rems.}
\crefname{mylemma}{Lem.}{Lems.}
\crefname{mydefinition}{Def.}{Defs.}
\crefname{myproposition}{Prop.}{Props.}
\crefname{mycorollary}{Cor.}{Cors.}
\crefname{enumi}{}{}
\crefname{name}{}{} %
\newcommand{\examend}{\hfill\small{\ensuremath{\blacksquare}}}
\newcommand{\para}[1]{\noindent\tbf{#1}\ \ }
\newcommand{\order}{\mc{O}}
\newcommand{\otil}{\wtil{\order}}
\newcommand{\indicator}{\mbf 1}
\newcommand{\eventnotag}{\mc{E}}
\newcommand{\event}[1][]{\eventnotag_{#1}}
\newcommand{\snorm}[1]{\Vert #1 \Vert}
\newcommand{\sinfnorm}[1]{\snorm{#1}_\infty}
\newcommand{\fun}{f}
\renewcommand{\natural}{\mbb{N}}
\newcommand{\real}{\ensuremath{\mathbb{R}}}
\newcommand{\sless}[1]{\stackrel{#1}{\leq}}
\newcommand{\seq}[1]{\stackrel{#1}{=}}
\newcommand{\x}{x}
\renewcommand{\l}{\ell}
\newcommand{\vareps}{\varepsilon}
\newcommand{\Exs}{\ensuremath{{\mathbb{E}}}}
\DeclareMathOperator{\mmd}{MMD}
\newcommand{\kernel}{\mbf{k}}
\newcommand{\rkhs}{\mc{H}_{\kernel}}
\newcommand{\knorm}[1]{\Vert{#1}\Vert_{\kernel}}
\newcommand{\dotrkhs}[2]{\angles{#1}_{#2}}
\newcommand{\doth}[1]{\dotrkhs{#1}{\rkhs}}
\newcommand{\brackets}[1]{\left[ #1 \right]}
\newcommand{\parenth}[1]{\left( #1 \right)}
\newcommand{\sparenth}[1]{( #1 )}
\newcommand{\braces}[1]{\left\{ #1 \right \}}
\newcommand{\abss}[1]{\left| #1 \right |}
\newcommand{\sabss}[1]{| #1  |}
\newcommand{\angles}[1]{\left\langle #1 \right \rangle}
\newcommand{\tp}{^\top}
\newcommand{\lam}{\lambda}
\def\balign#1\ealign{\begin{align}#1\end{align}}
\def\baligns#1\ealigns{\begin{align*}#1\end{align*}}
\def\balignat#1\ealign{\begin{alignat}#1\end{alignat}}
\def\balignats#1\ealigns{\begin{alignat*}#1\end{alignat*}}
\def\bitemize#1\eitemize{\begin{itemize}#1\end{itemize}}
\def\benumerate#1\eenumerate{\begin{enumerate}#1\end{enumerate}}
\newenvironment{talign*}
 {\csname align*\endcsname}
 {\endalign}
\newenvironment{talign}
 {\csname align\endcsname}
 {\endalign}
\def\balignst#1\ealignst{\begin{talign*}#1\end{talign*}}
\def\balignt#1\ealignt{\begin{talign}#1\end{talign}}
\newcommand{\qtext}[1]{\quad\text{#1}\quad} 
\newcommand{\stext}[1]{\text{ #1 }} 
\let\originalleft\left
\let\originalright\right
\renewcommand{\left}{\mathopen{}\mathclose\bgroup\originalleft}
\renewcommand{\right}{\aftergroup\egroup\originalright}
\def\tinycitep*#1{{\tiny\citep*{#1}}}
\def\tinycitealt*#1{{\tiny\citealt*{#1}}}
\def\tinycite*#1{{\tiny\cite*{#1}}}
\def\smallcitep*#1{{\scriptsize\citep*{#1}}}
\def\smallcitealt*#1{{\scriptsize\citealt*{#1}}}
\def\smallcite*#1{{\scriptsize\cite*{#1}}}
\def\mbi#1{\boldsymbol{#1}} %
\def\mbf#1{\mathbf{#1}}
\def\mbb#1{\mathbb{#1}}
\def\mc#1{\mathcal{#1}}
\def\mrm#1{\mathrm{#1}}
\def\trm#1{\textrm{#1}}
\def\tbf#1{\textbf{#1}}
\def\wtil#1{\widetilde{#1}}
\def\mfk#1{\mathfrak{#1}}
\def\textsum{{\textstyle\sum}} %
\def\reals{\mathbb{R}} %
\def\R{\mathbb{R}}
\def\naturals{\mathbb{N}} %
\def\<{\left\langle} %
\def\>{\right\rangle}
\def\implies{\quad\Longrightarrow\quad}
\def\defeq{\triangleq} %
\def\half{\frac{1}{2}}
\def\quarter{\frac{1}{4}}
\newcommand{\floor}[1]{\lfloor{#1}\rfloor}
\newcommand{\ceil}[1]{\lceil{#1}\rceil}
\def\norm#1{\left\|{#1}\right\|} %
\newcommand{\twonorm}[1]{\norm{#1}_2} %
\newcommand{\infnorm}[1]{\norm{#1}_{\infty}} %
\newcommand{\opnorm}[1]{\norm{#1}_{\mathrm{op}}} %
\def\staticnorm#1{\|{#1}\|} %
\newcommand{\staticinfnorm}[1]{\staticnorm{#1}_\infty} %
\def\what#1{\widehat{#1}}
\def\E{\mbb{E}} %
\def\Earg#1{\E\left[{#1}\right]}
\def\P{\mbb{P}} %
\newcommand{\iid}{\textrm{i.i.d.}\ }
\providecommand{\tr}{\mathop\mathrm{tr}}
\newenvironment{proof}{\paragraph{Proof}}{\hfill$\square$\\}
\newenvironment{proof-sketch}{\noindent\textbf{Proof Sketch}
  \hspace*{1em}}{\qed\bigskip\\}
\newenvironment{proof-idea}{\noindent\textbf{Proof Idea}
  \hspace*{1em}}{\qed\bigskip\\}
\newenvironment{proof-of-lemma}[1][{}]{\noindent\textbf{Proof of Lemma {#1}}
  \hspace*{1em}}{\qed\\}
\newenvironment{proof-of-theorem}[1][{}]{\noindent\textbf{Proof of Theorem {#1}}
  \hspace*{1em}}{\qed\\}
\newenvironment{proof-attempt}{\noindent\textbf{Proof Attempt}
  \hspace*{1em}}{\qed\bigskip\\}
\newcommand{\normfunoutput}{\phi}
\newcommand{\subgammavariance}{\nu}
\newcommand{\cset}{\mc{S}}
\newcommand{\alg}{\textsc{Alg}\xspace}
\newcommand{\algset}{\cset_{\alg}}
\newcommand{\inputcoreset}{\cset_{\mrm{in}}}
\newcommand{\outputcoreset}{\cset_{\mrm{out}}}
\newcommand{\kersplit}{\kernel_{\mrm{split}}}
\newcommand{\kt}{\textsc{KT}\xspace}
\newcommand{\ktsplit}{\hyperref[algo:ktsplit]{\color{black}{\textsc{kt-split}}}\xspace}
\newcommand{\pin}{\P_{\mrm{in}}}
\newcommand{\halve}{\textsc{Halve}\xspace}
\newcommand{\osname}{oversampling parameter\xspace}
\newcommand{\ossymb}{\ensuremath{\mfk{g}}\xspace}
\newcommand{\compress}{\textsc{Compress}\xspace}
\newcommand{\compresssub}{\textsc{C}}
\newcommand{\compresspp}{\textsc{Compress++}\xspace}
\newcommand{\compressppsub}{\textsc{C++}\xspace}
\newcommand{\thin}{\textsc{Thin}\xspace}
    \newcommand{\subgsn}{\mathcal{G}}
    \newcommand{\fsingsubgamma}{\mathcal{G}^{f}  }
    \newcommand{\subgammaerror}{\vareps}
    \newcommand{\thinning}{\textsc{Thin}\xspace}
\renewcommand{\thin}{\textsc{Thin}\xspace}
\newcommand{\nin}{n}
\newcommand{\nout}{n_{\mrm{out}}}
 \newcommand{\thintag}{\textsc{T}} 
\newcommand{\halvetag}{\textsc{H}}
\newcommand{\lng}{\l_n}
\newcommand{\rounds}{\beta_n}
\newcommand{\spsi}{{\psi}^{\mrm{H}}} %
    \newcommand{\genmat}{\mbf{M}}
    \newcommand{\intag}{{\mrm{in}}}
    \newcommand{\outtag}{{\mrm{out}}}
      \newcommand{\mat}[1]{\mbf{#1}}
   \newcommand{\randmat}{\mat{Y}}
\begin{document}
\etocdepthtag.toc{mtchapter}
\etocsettagdepth{mtchapter}{subsection}
\etocsettagdepth{mtappendix}{none}
\maketitle

\begin{abstract}%
In distribution compression, one aims to accurately summarize a probability distribution $\mathbb{P}$ using a small number of representative points.
Near-optimal thinning procedures achieve this goal by sampling $n$ points from a Markov chain and identifying $\sqrt{n}$ points with $\widetilde{\mathcal{O}}(1/\sqrt{n})$ discrepancy to $\mathbb{P}$. Unfortunately, these algorithms suffer from quadratic or super-quadratic runtime in the sample size $n$. To address this deficiency, we introduce Compress++, a simple meta-procedure for speeding up any thinning algorithm while suffering at most a factor of $4$ in error. When combined with the quadratic-time kernel halving and kernel thinning algorithms of Dwivedi and Mackey (2021), Compress++ delivers $\sqrt{n}$ points with $\mathcal{O}(\sqrt{\log n/n})$ integration error and better-than-Monte-Carlo maximum mean discrepancy in $\mathcal{O}(n \log^3 n)$ time and  $\mathcal{O}( \sqrt{n} \log^2 n )$ space. Moreover, Compress++ enjoys the same near-linear runtime given any quadratic-time input and reduces the runtime of super-quadratic algorithms by a square-root factor. In our benchmarks with high-dimensional Monte Carlo samples and Markov chains targeting challenging differential equation posteriors, Compress++ matches or nearly matches the accuracy of its input algorithm in orders of magnitude less time. 

\end{abstract}

    \section{Introduction}

Distribution compression---constructing a concise summary of a probability distribution---is at the heart of many learning and inference tasks. For example, in Monte Carlo integration and Bayesian inference, $n$ representative points are sampled \iid or from a Markov chain to approximate expectations and quantify uncertainty under an intractable (posterior) distribution~\citep{robert1999monte}. However, these standard sampling strategies are not especially concise.
For instance, the Monte Carlo estimate $\P_{\mrm{in}} f \defeq \frac1n \sum_{i=1}^n f(x_i)$ of an unknown expectation $\P f \defeq \Exs_{X\sim \P}[f(X)]$ based on $n$ \iid points has $\Theta(n^{-\half})$ integration error $\abss{\P\fun-\P_{\mrm{in}}\fun}$, requiring $10000$ points for $1\%$ relative error and $10^6$ points for $0.1\%$ error. 
Such bloated sample representations preclude downstream applications with critically %
expensive function
evaluations like computational cardiology, where a 1000-CPU-hour tissue or organ simulation is required for each sample point~\citep{niederer2011simulating,augustin2016anatomically,strocchi2020simulating}. %

To restore the feasibility of such critically expensive tasks, it is common to thin down the initial point sequence to produce a much smaller coreset. The standard thinning approach, select every $t$-th point~\citep{owen2017statistically}, while being simple often 
leads to an substantial increase in error: e.g., standard thinning $n$ points from a fast-mixing Markov chain yields
$\Omega(n^{-\quarter})$ error when $n^{\half}$ points are returned. Recently,  \cite{dwivedi2021kernel} introduced a more effective alternative, \emph{kernel thinning} (KT), that provides near optimal $\otil_d(n^{-\half})$ error when compressing $n$ points in $\real^d$ down to size  $n^{\half}$. While practical for moderate sample sizes, the runtime of this algorithm scales quadratically with the input size $n$, making its execution prohibitive for very large $n$. Our goal is to significantly improve the runtime of such compression algorithms while providing comparable error guarantees.

\para{Problem setup}
Given a sequence $\inputcoreset$ of $n$ input points summarizing a target distribution $\P$, our aim is to identify a high quality coreset $\outputcoreset$ of size $\sqrt{n}$  in time nearly linear in $n$.
We measure coreset quality 
via its integration error
$|\P f - \P_{\outputcoreset} f| \defeq |\P f - \frac{1}{|\outputcoreset|}\sum_{x\in\outputcoreset} f(x)|$
for functions $f$ in the reproducing kernel Hilbert space (RKHS) $\rkhs$ induced by a given kernel $\kernel$~\citep{berlinet2011reproducing}.
We consider both single function error and kernel \emph{maximum mean discrepancy} \citep[MMD,][]{JMLR:v13:gretton12a}, the worst-case integration error over the unit RKHS norm ball: %
\begin{talign}
	\mmd_{\kernel}(\P, \P_{\outputcoreset})&\defeq \sup_{\knorm{\fun}\leq 1}\abss{\P\fun-\P_{\outputcoreset}\fun}.
	\label{eq:kernel_mmd_distance}
\end{talign}

\para{Our contributions}
We introduce a new simple meta procedure---\compresspp---that significantly speeds up a generic thinning algorithm while simultaneously inheriting the error guarantees of its input up to a factor of $4$.  A direct application of \compresspp to KT improves its quadratic $\Theta(n^2)$ runtime to near linear $\order(n \log^3 n)$ time while preserving its error guarantees.
Since the $\otil_d(n^{-\half})$ KT MMD guarantees of \citet{dwivedi2021kernel} match the $\Omega(n^{-\half})$ minimax lower bounds of \citet{tolstikhin2017minimax,phillips2020near} up to factors of $\sqrt{\log n}$ and constants depending on $d$, KT-\compresspp also provides near-optimal MMD compression for a wide range of $\kernel$ and $\P$.
Moreover, %
the practical gains from applying \compresspp are substantial:  KT  thins $65,000$ points in 10 dimensions in $20$m, while KT-\compresspp needs only 1.5m; KT takes more than a day to thin $250,000$ points in $100$ dimensions, while KT-\compresspp takes less than 1hr (a 32$\times$ speed-up). For larger $n$, the speed-ups are even greater due to the  order $\frac{n}{\log^3 n}$ reduction in runtime.

\compresspp can also be directly combined with any thinning algorithm, even those that have suboptimal guarantees but often perform well in practice, like kernel herding~\citep{chen2012super}, MMD-critic~\citep{kim2016examples}, 
and Stein thinning~\citep{riabiz2020optimal}, all of which run in $\Omega(n^2)$ time. 
As a demonstration, we combine \compresspp with the popular kernel herding algorithm and observe 45$\times$ speed-ups when compressing $250,000$ input points.
In all of our experiments, \compresspp leads to minimal loss in accuracy and, surprisingly, even improves upon herding accuracy for lower-dimensional problems.

Most related to our work are the merge-reduce algorithms of \citet{matousek1995approximations,chazelle1996linear,phillips2008algorithms} which also speed up input thinning algorithms while controlling approximation error.
In our setting, merge-reduce runs in time $\Omega(n^{1.5})$ given an $n^2$-time input and in time $\Omega(n^{(\tau+1)/2})$ for slower $n^{\tau}$-time inputs \citep[see, e.g.,][Thm.~3.1]{phillips2008algorithms}.
In contrast, \compresspp runs in near-linear $\order(n \log^3 n)$ time for any $n^2$-time input and in $\order(n^{\tau/2}\log^\tau n)$ time for slower $n^\tau$-time inputs.
After providing formal definitions in \cref{sec:definitions}, we introduce and analyze \compresspp and its primary subroutine \compress in  \cref{sec:compress,sec:compresspp}, 
demonstrate the empirical benefits of \compresspp in \cref{sec:experiments}, and present conclusions and opportunities for future work in~\cref{sec:conclusion}.

\para{Notation}
We let $\P_{\cset}$ denote the empirical distribution of $\cset$.
For the output coreset $\cset_{\alg}$ of an algorithm $\alg$ with input coreset $\inputcoreset$, we use the simpler notation $\P_{\alg} \defeq \P_{\cset_{\alg}}$ and $ \P_{\mrm{in}} \defeq \P_{\inputcoreset}$.
We extend our MMD definition to point sequences $(\cset_1 , \cset_2)$ via 
$\mmd_{\kernel}(\cset_1, \cset_2) \defeq \mmd_{\kernel}(\P_{\cset_1}, \P_{\cset_2})$ and  $\mmd_{\kernel}(\P, \cset_1) \defeq \mmd_{\kernel}(\P, \P_{\cset_1})$.
We use $a\precsim b$ to mean $a = \order(b)$, $a\succsim b$ to mean $a =\Omega(b)$, $a=\Theta(b)$ to mean both $a=\order(b)$ and $a=\Omega(b)$, and $\log$ to denote the natural logarithm. %

\vspace{-3mm}
\section{Thinning and Halving Algorithms}
\label{sec:definitions}
\vspace{-3mm}
We begin by defining the thinning and halving algorithms that our meta-procedures take as input.

    \begin{definition}[\tbf{Thinning and halving algorithms}]
    \label{def:thinning_algo}
    A \emph{thinning algorithm} \alg 
    takes as input a point sequence $\inputcoreset$ of length $\nin$ and returns a (possibly random) point sequence $\algset$ of length $\nout$. 
    We say \alg is 
    \emph{$\alpha_n$-thinning} if $\nout = \floor{\nin/\alpha_n}$ and \emph{root-thinning} if $\alpha_n = \sqrt n$.
    Moreover, we call \alg a \emph{halving algorithm} if $\algset$ always contains exactly $\floor{\frac{n}{2}}$ of the input points.
    \end{definition}

Many thinning algorithms offer high-probability bounds on the integration error $|\P_{\inputcoreset} f - \P_{\cset_{\alg}} f|$.
We capture such bounds abstractly using the following definition of a sub-Gaussian thinning %

\begin{definition}[\tbf{Sub-Gaussian thinning algorithm}]\label{def:subgamma_algo}
      For a function $\fun$, we call a thinning algorithm $\alg$ \emph{$\fun$-sub-Gaussian} with  parameter $\subgammavariance$ and write $\alg \in \fsingsubgamma(\subgammavariance)$ if
    \begin{talign}
    \E[\exp(\lam(\P_{\inputcoreset} f - \P_{\cset_{\alg}} f))\mid \inputcoreset]
    \leq
    \exp\parenth{\frac{\lambda^2\subgammavariance^2(n)}{2}}
    \qtext{for all}
    \lambda \in \real.
    \end{talign}
\end{definition}

\vspace{-3mm}
\cref{def:subgamma_algo} is equivalent to a sub-Gaussian tail bound for the integration error, and, by \citet[Section 2.3]{boucheron2013concentration},
if $\alg \in \fsingsubgamma (\subgammavariance)$ then
$\E[\P_{\cset_{\alg}} f\mid \inputcoreset] = \P_{\inputcoreset} f$
and, for all $\delta \in (0,1)$, 
\begin{talign}
    \label{eq:tail_bounds}
    |\P_{\inputcoreset} f \!-\! \P_{\cset_{\alg}} f| \leq \subgammavariance(n) \sqrt{2\log (\frac2\delta) },
    \stext{with probability at least}
    1-\delta
    \stext{given} \inputcoreset.
\end{talign}
Hence the integration error of $\alg$ is dominated by the sub-Gaussian parameter $\subgammavariance(n)$.
\begin{example}[\ktsplit]%
\label{ex:kh_subgamma}
\normalfont
Given a kernel $\kernel$ and $n$ input points $\inputcoreset$, the $\ktsplit(\delta)$ algorithm\footnote{\label{footnote:ktdelta}The $\delta$ argument of $\ktsplit(\delta)$ or $\kt(\delta)$ indicates that each  parameter $\delta_i = \frac{\delta}{\l}$ in \citet[Alg.~1a]{dwivedi2022generalized}, where $\l$ is the size of the input point sequence compressed by $\ktsplit(\delta)$ or $\kt(\delta)$.} of \citet[Alg.~1a]{dwivedi2022generalized,dwivedi2021kernel} takes $\Theta(n^2)$ kernel evaluations to output a coreset of size $\nout$ with better-than-\iid integration error.
Specifically, \citet[Thm.~1]{dwivedi2022generalized} prove that, on an event with probability $1-\frac{\delta}{2}$, $\ktsplit(\delta) \in \fsingsubgamma ( \subgammavariance)$ with
    \begin{talign}
    \subgammavariance(n) =
   \frac{2}{\nout\sqrt{3}}  \sqrt{ \log( \frac{6\nout\log_2(n/\nout)}{\delta} )  \infnorm{\kernel}}
   \label{eq:ktsplit_subgauss}
\end{talign} 
for all $f$ with $\knorm{f}=1$.
\examend
\end{example}

Many algorithms also offer high-probability bounds on the kernel MMD \cref{eq:kernel_mmd_distance}, the worst-case integration error across the unit ball of the RKHS.
We again capture these bounds abstractly using the following definition of a $\kernel$-sub-Gaussian thinning algorithm.

\newcommand{\ksubgamma}{\mc{G}_{\kernel}}
\newcommand{\shiftparam}{a}
\newcommand{\kgaussparam}{v}
\begin{definition}[\tbf{$\mbf{k}$-sub-Gaussian thinning algorithm}]\label{def:mmd_subgamma_algo}
    For a kernel $\kernel$, we call a thinning algorithm $\alg$  \emph{$\kernel$-sub-Gaussian} 
    with parameter $\kgaussparam$ and shift $\shiftparam$ and write $\alg\in\ksubgamma(\kgaussparam, \shiftparam)$ if %
    \begin{talign}
     \label{eq:mmd_thin}
            \P[\mmd_{\kernel} \left( \inputcoreset , \cset_{\alg} \right) \geq \shiftparam_{n} + \kgaussparam_{n} \sqrt{t}  \,\big\vert\, \inputcoreset] \leq e^{-t}
        \qtext{for all} t \geq 0.
    \end{talign}
    We also call $\subgammaerror_{\kernel,\alg}(n)\defeq\max(v_{n}, a_{n})$ the \emph{$\kernel$-sub-Gaussian error} of \alg. 
\end{definition}

\begin{example}[Kernel thinning]%
\label{ex:kt_subgamma}
\normalfont
Given a kernel $\kernel$ and $n$ input points $\inputcoreset$, the generalized kernel thinning ($\kt(\delta)$)  algorithm\Cref{footnote:ktdelta} of \citet[Alg.~1]{dwivedi2022generalized,dwivedi2021kernel} takes $\Theta(n^2)$ kernel evaluations to output a coreset of size $\nout$ with near-optimal MMD error. In particular, by leveraging an appropriate auxiliary kernel $\kersplit$, \citet[Thms.~2-4]{dwivedi2022generalized}  establish that, on an event with probability $1-\frac{\delta}{2}$, $\kt(\delta) \in \ksubgamma(\shiftparam, \kgaussparam)$ with
\begin{talign}
\label{eq:kt_params}
\shiftparam_n = \frac{C_{\shiftparam}}{\nout}\sqrt{\staticinfnorm{\kersplit}},
\qtext{and}
\kgaussparam_n = \frac{C_{\kgaussparam}}{\nout}\sqrt{\staticinfnorm{\kersplit}\log(\frac{6\nout\log_2(n/\nout)}{\delta})}\ \mathfrak{M}_{\inputcoreset,\kersplit},
\end{talign}
where $\sinfnorm{\kersplit} = \sup_{x} \kersplit(x,x)$, $C_{\shiftparam}$ and $C_{\kgaussparam}$ are explicit constants, and $\mathfrak{M}_{\inputcoreset,\kersplit} \geq 1$ is non-decreasing in $n$ and varies based on the tails of $\kersplit$ and the radius of the ball containing $\inputcoreset$.~\examend
\end{example}

\newcommand{\compressalg}{
  \begin{algorithm2e}[t]
\caption{\compress}
\label{algo:compress}
\SetAlgoLined
  \DontPrintSemicolon
\small{
  \KwIn{%
  halving algorithm \halve, \osname~$\ossymb$, point sequence $ \inputcoreset$ of size $n$} 
    \lIf{  $n = 4^{\ossymb}$ }{
        \KwRet{ $\inputcoreset $ }{}
    }  
        Partition $\inputcoreset$ into four arbitrary subsequences $ \{ \cset_i \}_{i=1}^4 $ each of size $n/4$ \\ 
        \For{$i=1, 2, 3, 4$}
        {$ \wtil{ \cset_i } \leftarrow \compress( \cset_i, \halve, \ossymb)$ \qquad // return coresets of size $2^{\ossymb} \cdot \sqrt{\frac n4}$
        }
        $ \wtil{\cset} \gets\textsc{Concatenate}( \wtil{\cset}_1, \wtil{\cset}_2,\wtil{\cset}_3, \wtil{\cset}_4)$ \qquad // coreset of size   $2 \cdot 2^{\ossymb} \cdot \sqrt{n}$ \\
        \KwRet{ \textup{  $\halve(\wtil{\cset} )$ \qquad\qquad\qquad\qquad\quad\,// coreset of size $2^{\ossymb} \sqrt n$} } 
    }
\end{algorithm2e}}
\vspace{-3mm}
\section{\compress}
\label{sec:compress}
\vspace{-3mm}
The core subroutine of \compresspp is a new meta-procedure called \compress that,
 given a halving algorithm \halve, an \osname \ossymb, and $n$ input points,  outputs a thinned coreset of size $2^\ossymb \sqrt{n}$. 
The \compress algorithm (\cref{algo:compress}) is very simple to implement: first, divide the input points into four subsequences of size $\frac n4$ (in any manner the user chooses); second,  recursively call \compress on each subsequence to produce four coresets of size $ 2^{\ossymb-1} \sqrt{n} $; finally, call \halve on the concatenation of those coresets to produce the final output of size $2^\ossymb \sqrt{n}$.
As we show in \cref{sec:compress_streaming}, 
\compress can also be implemented in a streaming fashion to consume at most $\order(4^{\ossymb} \sqrt{n})$ memory. %

\newcommand{\rechalve}{\textsc{RecHalve}}

\subsection{Integration error and runtime guarantees for \compress}
Our first result relates the runtime and single-function integration error of \compress to the runtime and error of \halve. 
We measure integration error for each function $f$ probabilistically in terms of the  sub-Gaussian parameter $\subgammavariance$ of \cref{def:subgamma_algo} and 
measure runtime by the number of dominant operations performed by \halve (e.g., the number of kernel evaluations performed by kernel thinning). %
\compressalg

\newcommand{\compresssubgammathmname}{Runtime and integration error of \compress}

\begin{theorem}[\tbf{\compresssubgammathmname}]
\label{thm:compress_sub_gamma}
 If \halve has runtime $r_{\halvetag}(n)$ for inputs of size $n$, then \compress has runtime %
    \begin{talign}
       r_{\compresssub}(n) &=
          \sum_{i=0}^{\rounds} 4^{i}\cdot  r_{\halvetag}( \lng 2^{-i}),
          \label{run_time_cp}
    \end{talign}
    where $\lng \!\defeq\! 2^{\ossymb+1} \sqrt{n}$ (twice the output size of \compress), and $\rounds \!\defeq\! \log_2(\frac{n}{\lng}) =\log_4 n\! -\! \ossymb\!-\!1$.
    Furthermore, if, for some function $f$, $ \halve \in \fsingsubgamma ( \subgammavariance_{\halvetag} ) $, then  $\compress \in \fsingsubgamma(\subgammavariance_{\compresssub})$ with
    \begin{talign}
    \label{eq:nu_cp}
        \subgammavariance^2_{\compresssub}(n) &=\textstyle \sum_{i=0}^{ \rounds } 4^{-i} \cdot  \nu^2_{\halvetag} ( \lng 2^{-i} ). 
    \end{talign} 
\end{theorem}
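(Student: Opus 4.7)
The plan is to prove both claims by induction on the recursion depth $\rounds$ (equivalently, by strong induction on $n$). The base case is $n = 4^{\ossymb}$, where $\compress$ returns its input directly without invoking $\halve$: then $\rounds = -1$, the sums in \cref{run_time_cp,eq:nu_cp} are empty, the runtime is $0$, and $\P_{\inputcoreset} f = \P_{\cset_{\compresssub}} f$ so $\alg$ is trivially $f$-sub-Gaussian with parameter $0$.

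For the inductive step at input size $n > 4^{\ossymb}$, $\compress$ makes four recursive calls on subsequences of size $n/4$ and one $\halve$ call on the concatenation $\wtil{\cset}$ of size $\lng = 2^{\ossymb+1}\sqrt{n}$. For runtime, this gives the recurrence $r_{\compresssub}(n) = 4\, r_{\compresssub}(n/4) + r_{\halvetag}(\lng)$; using the inductive formula for $r_{\compresssub}(n/4)$ (where the corresponding $\lng$ value is $2^{\ossymb+1}\sqrt{n/4} = \lng/2$ and $\rounds$ decreases by $1$) and re-indexing gives exactly \cref{run_time_cp}.

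For the sub-Gaussian parameter, I decompose
\begin{talign*}
\P_{\inputcoreset} f - \P_{\compresssub} f = \bigl(\P_{\inputcoreset} f - \P_{\wtil{\cset}} f\bigr) + \bigl(\P_{\wtil{\cset}} f - \P_{\compresssub} f\bigr).
\end{talign*}
Since $|\wtil{\cset}_1|=\cdots=|\wtil{\cset}_4|$, the first term equals $\frac{1}{4}\sum_{i=1}^4 (\P_{\cset_i} f - \P_{\wtil{\cset}_i} f)$. Conditionally on $\inputcoreset$, the four recursive invocations use independent randomness, so these four summands are conditionally independent; by the inductive hypothesis each is $f$-sub-Gaussian with parameter $\subgammavariance_{\compresssub}(n/4)$. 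Multiplying MGFs yields that $\P_{\inputcoreset}f - \P_{\wtil{\cset}}f$ is $f$-sub-Gaussian (given $\inputcoreset$) with parameter squared $\tfrac{1}{16}\cdot 4\cdot \subgammavariance_{\compresssub}^2(n/4) = \subgammavariance_{\compresssub}^2(n/4)/4$. Then, conditioning on $\wtil{\cset}$ and applying the $\halve \in \fsingsubgamma(\subgammavariance_{\halvetag})$ hypothesis to the second term, the tower property gives
\begin{talign*}
\E[\exp(\lambda(\P_{\inputcoreset} f - \P_{\compresssub} f)) \mid \inputcoreset] \leq \exp\!\Bigl(\tfrac{\lambda^2}{2}\bigl(\subgammavariance_{\halvetag}^2(\lng) + \tfrac{1}{4}\subgammavariance_{\compresssub}^2(n/4)\bigr)\Bigr),
\end{talign*}
establishing the recurrence $\subgammavariance_{\compresssub}^2(n) = \subgammavariance_{\halvetag}^2(\lng) + \tfrac{1}{4}\subgammavariance_{\compresssub}^2(n/4)$. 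Unrolling via the inductive formula (again noting that one recursion level shrinks $\lng$ by a factor of $2$ and $\rounds$ by $1$) recovers \cref{eq:nu_cp}.

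The main obstacle is the MGF-combination step: one must carefully exploit (i) the equal sizes of the sub-coresets $\wtil{\cset}_i$ so that $\P_{\wtil{\cset}}$ is the \emph{uniform} average of $\P_{\wtil{\cset}_i}$, (ii) the conditional independence of the recursive calls given $\inputcoreset$, which is what lets the four sub-Gaussian MGFs multiply rather than merely summing via a union-of-variances bound with suboptimal constants, and (iii) the tower property to chain the recursive error with the final $\halve$ error without double-counting randomness. Everything else is bookkeeping to verify that the unrolled sums terminate exactly at $i = \rounds$ when the recursion hits the base case $n = 4^{\ossymb}$.
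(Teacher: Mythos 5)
Your proof is correct and follows essentially the same route as the paper: the identity you induct on is exactly the paper's recursive decomposition of the \compress measure discrepancy into the four conditionally independent recursive errors plus the final \halve error, combined via MGF multiplication and the tower property (the paper's summation-and-scaling lemma), with the paper simply unrolling the recursion into one weighted sum rather than inducting level by level. The runtime recurrence and its unrolling also match the paper's argument.
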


As we prove in \cref{sec:compress_subgamma}, the runtime guarantee~\cref{run_time_cp} is immediate once we unroll the \compress recursion and identify that \compress makes $4^i$ calls to \halve with input size $\lng 2^{-i}$.
The error guarantee~\cref{eq:nu_cp} is more subtle: here, \compress benefits significantly from 
random cancellations among the \emph{conditionally independent} and \emph{mean-zero} \halve errors.
Without these properties, the errors from each \halve call could compound without cancellation leading to a significant degradation in \compress quality.
Let us now unpack the most important implications of \cref{thm:compress_sub_gamma}.

\newcommand{\rtpower}{\tau}
    \begin{remark}[Near-linear runtime and quadratic speed-ups for \compress]\normalfont
    \label{rem:runtime_speedup}
    \cref{thm:compress_sub_gamma} implies that a quadratic-time \halve with $r_{\halvetag}(n) = n^2$ yields a near-linear time \compress with $r_{\compresssub}(n) \leq 4^{\ossymb + 1}\,n (\log_4(n)-\ossymb)$.
    If \halve instead has super-quadratic runtime $r_{\halvetag}(n) = n^\rtpower$, \compress enjoys a quadratic speed-up: 
    $r_{\compresssub}(n) \leq c_{\rtpower}'\,n^{\rtpower/2}$ for $c_{\rtpower}' \defeq \frac{2^{\rtpower(\ossymb+2)}}{2^{\rtpower}-4}$.
    More generally, whenever \halve has superlinear runtime  $r_{\halvetag}(n) = n^\rtpower\, \rho(n)$ for some $\rtpower\geq 1$ and non-decreasing $\rho$,  \compress  satisfies
    \begin{talign}
    r_{\compresssub}(n) 
        \leq 
        \begin{cases} 
        c_{\rtpower} \cdot n\, (\log_4(n)-\ossymb)\,\rho(\lng) & \text{for } \rtpower \leq 2 \\
       c_{\rtpower}'\cdot n^{\rtpower/2}\, \rho(\lng) & \text{for } \rtpower > 2
        \end{cases}
        \qtext{where}
        c_{\rtpower} \defeq 4^{(\rtpower-1)(\ossymb+1)}.
    \end{talign}
    \end{remark}

\begin{remark}[\tbf{\compress inflates sub-Gaussian error by at most $\mbi{\sqrt{\log_4 n}}$}]\label{rem:compress_error_inflation} \label{rem:compress_approx}
    \normalfont
    \cref{thm:compress_sub_gamma} also implies 
    \begin{talign}
    \subgammavariance_{\compresssub} (n) \leq \sqrt{\rounds+1}\,\subgammavariance_{\halvetag}(\lng)  = \sqrt{\log_4 n - \ossymb}\,  \subgammavariance_{\halvetag}(\lng)
    \end{talign}
    in the usual case that $ n \, \subgammavariance_{\halvetag}(n) $ 
    is non-decreasing in $n$. Hence the sub-Gaussian error of \compress is at most $\sqrt{\log_4 n}$ larger than that of halving an input of size $\lng$.
    This is an especially strong benchmark, as $\lng$ is twice the output size of \compress, and thinning from $n$ to $\frac{\lng}{2}$ points should incur at least as much approximation error as halving from $\lng$ to $\frac{\lng}{2}$ points.
    \end{remark}

\begin{example}[\ktsplit-\compress] \label{ex:ktsplitcompress}
\normalfont
Consider running \compress with, for each \halve input of size $\l$, $\halve =
 \ktsplit(\frac{\l^2}{n 4^{\ossymb+1} (\beta_n+1)} \delta)
 $ from \cref{ex:kh_subgamma}.
Since \ktsplit runs in time $\Theta(n^2)$,  \compress runs in near-linear $\order(n \log n)$ time by \cref{rem:runtime_speedup}.
In addition, as we detail in \cref{sec:proof_of_ktsplitcompress}, on an event of probability $1-\frac{\delta}{2}$, every \halve call invoked by \compress is $f$-sub-Gaussian with
\begin{talign}\label{eq:ktsplit-halve-subgsn}
\subgammavariance_{\halvetag}(\ell)
    = \frac{4}{\ell\sqrt{3}}  \sqrt{ \log( 
    \frac{12n4^{\ossymb} (\beta_n+1)}{\ell \delta}
    )  \infnorm{\kernel}}
    \qtext{for all $f$ with $\knorm{f}=1$.}
\end{talign}
Hence, \cref{rem:compress_error_inflation} implies that \compress is $f$-sub-Gaussian on the same event with 
$\subgammavariance_{\compresssub}(n) \! \leq \! \sqrt{\log_4 n\! -\! \ossymb}\,  \subgammavariance_{\halvetag}(\lng),$
a guarantee within $\sqrt{\log_4 n}$ of the original $\ktsplit(\delta)$ error \cref{eq:ktsplit_subgauss}.
 \examend
\end{example}
\newcommand{\ckh}{C_{\tbf{KH}}}
\newcommand{\tkh}{C'_{\tbf{KH}}}
\newcommand{\cgs}{C_{\tbf{GS}}}
\newcommand{\tgs}{C'_{\tbf{GS}}}
\newcommand{\cgsq}{C_{\tbf{GSq}}}
\newcommand{\tgsq}{C'_{\tbf{GSq}}}

\subsection{MMD guarantees for \compress}
Next, we bound the MMD error of \compress in terms of the MMD error of \halve.  Recall that $\mmd_{\kernel}$ \cref{eq:kernel_mmd_distance} represents the worst-case integration error across the unit ball of the RKHS of $\kernel$.
Its proof, based on the concentration of subexponential matrix martingales, is provided in  \cref{sec:compress_mmd}.
\newcommand{\compressmmdthmname}{MMD guarantees for \compress}

\begin{theorem}[\compressmmdthmname]
\label{thm:compress_mmd}
    Suppose $\halve \in \ksubgamma(\shiftparam, \kgaussparam)$ for $n\,\shiftparam_n$ and $n\, \kgaussparam_n$ non-decreasing and $\Earg{\P_{\halve}\kernel\mid \inputcoreset} = \pin \kernel$.
    Then $\compress \in \ksubgamma(\wtil{\shiftparam}, \wtil{\kgaussparam})$ with
    \begin{talign}
    \wtil{\kgaussparam}_n \defeq  4(\shiftparam_{\lng} \!+\! \kgaussparam_{\lng})\sqrt{2(\log_4 n\!-\!\ossymb)},
    \qtext{and}
    \wtil{\shiftparam}_n \defeq \wtil{\kgaussparam}_n \sqrt{\log(n\!+\!1)},
    \label{eq:cp_params}
    \end{talign}
    where $\lng = 2^{\ossymb+1} \sqrt{n}$ as in \cref{thm:compress_sub_gamma}.
\end{theorem}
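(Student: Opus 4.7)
The plan is to induct on the input size $n$. The base case $n = 4^{\ossymb}$ is immediate since \compress returns its input, so $\mmd_\kernel(\inputcoreset, \compress(\inputcoreset)) = 0$. For the inductive step, let $\wtil{\cset}_i$ denote the recursive outputs on the four size-$n/4$ subsequences $\cset_i$, and let $\wtil{\cset}$ be their concatenation. Working in $\rkhs$ via the kernel mean embedding, I decompose
\begin{equation*}
\P_{\compress(\inputcoreset)}\kernel - \P_{\inputcoreset}\kernel
= \bigl(\P_{\halve(\wtil{\cset})}\kernel - \P_{\wtil{\cset}}\kernel\bigr)
+ \tfrac{1}{4}\sum_{i=1}^{4}\bigl(\P_{\wtil{\cset}_i}\kernel - \P_{\cset_i}\kernel\bigr),
\end{equation*}
whose $\rkhs$-norm is exactly $\mmd_\kernel(\inputcoreset, \compress(\inputcoreset))$. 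The first bracket is the final halving error, bounded by \halve's $\kernel$-sub-Gaussian tail at input size $\lng$.

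A key observation is that the unbiasedness assumption $\E[\P_\halve\kernel\mid\inputcoreset] = \P_{\inputcoreset}\kernel$ propagates through the recursion, giving $\E[\P_{\wtil{\cset}_i}\kernel\mid\cset_i] = \P_{\cset_i}\kernel$ by induction. Thus, conditional on the partition, the four vectors $\P_{\wtil{\cset}_i}\kernel - \P_{\cset_i}\kernel$ are independent, mean-zero, $\rkhs$-valued random variables, each with norm sub-Gaussian of parameters $(\wtil{\shiftparam}_{n/4}, \wtil{\kgaussparam}_{n/4})$ by the induction hypothesis. I would invoke a Hilbert-space Pinelis/Bernstein-type inequality to bound the norm of their average, gaining a factor of $\tfrac{1}{\sqrt{4}}$ over the naive triangle inequality.

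For a cleaner aggregation, it helps to unroll the recursion entirely and treat the total compression error as a single $\rkhs$-valued martingale. Writing
\begin{equation*}
\P_{\compress(\inputcoreset)}\kernel - \P_{\inputcoreset}\kernel
= \sum_{d=0}^{\beta_n}\tfrac{1}{4^d}\sum_{j=1}^{4^d}\bigl(\P_{\halve(\wtil{\cset}_{d,j})}\kernel - \P_{\wtil{\cset}_{d,j}}\kernel\bigr),
\end{equation*}
where $d$ indexes recursion levels (each halve call at level $d$ has input size $\ell_d = \lng 2^{-d}$) and $j$ indexes the $4^d$ halve calls at that level, the increments form a martingale-difference sequence under a level-wise filtration. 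The conditional variance proxy at level $d$ is $4^d\cdot (4^{-d})^2(\shiftparam_{\ell_d}+\kgaussparam_{\ell_d})^2 = 4^{-d}(\shiftparam_{\ell_d}+\kgaussparam_{\ell_d})^2$. The monotonicity of $n\shiftparam_n$ and $n\kgaussparam_n$ yields $\shiftparam_{\ell_d}\leq 2^d\shiftparam_\lng$ and likewise for $\kgaussparam$, so each level contributes at most $(\shiftparam_\lng+\kgaussparam_\lng)^2$. Summing over the $\beta_n+1 = \log_4 n - \ossymb$ levels and applying a subexponential Hilbert-space / matrix Bernstein bound recovers $\wtil{\kgaussparam}_n \asymp (\shiftparam_\lng+\kgaussparam_\lng)\sqrt{\log_4 n - \ossymb}$ with the stated constants.

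The hard part will be executing the Hilbert-space (or matrix) Bernstein inequality for sub-exponential martingale increments whose tails carry \emph{two} parameters (a shift $\shiftparam$ distinct from the scale $\kgaussparam$) rather than a single sub-Gaussian parameter, and keeping the arithmetic tight enough across levels to land on the precise constants in \cref{eq:cp_params}. The $\sqrt{\log(n+1)}$ factor in $\wtil{\shiftparam}_n$ should arise naturally from the standard conversion of a sub-Gaussian tail into a high-probability a.s.\ bound that must hold simultaneously across the $O(n)$ halve invocations in the recursion tree.
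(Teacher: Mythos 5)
Your decomposition is the same one the paper uses: unrolling the recursion into the weighted sum $\sum_{k}4^{-k}\sum_{j}\bigl(\P_{\cset^{\intag}_{k,j}}\kernel-\P_{\cset^{\outtag}_{k,j}}\kernel\bigr)$ of conditionally independent, mean-zero \halve discrepancies (unbiasedness propagating through the recursion), and your variance bookkeeping is also correct: the level-$k$ contribution $4^{-k}(\shiftparam_{\ell_k}+\kgaussparam_{\ell_k})^2$ is at most $(\shiftparam_{\lng}+\kgaussparam_{\lng})^2$ by the monotonicity of $n\shiftparam_n$ and $n\kgaussparam_n$, giving the $(\log_4 n-\ossymb)(\shiftparam_{\lng}+\kgaussparam_{\lng})^2$ variance proxy. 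However, the step you defer --- ``invoke a Hilbert-space Pinelis/Bernstein-type inequality'' for martingale increments whose conditional tails carry a shift and a scale --- is not a citable off-the-shelf tool in the form you need, and it is exactly where the entire technical content of the argument lives. The paper supplies it by (i) noting all points lie in the span of the $n$ kernel functions, so each \halve discrepancy's RKHS norm equals the Euclidean norm of a vector $u_{k,j}=\mbf{K}^{1/2}v_{k,j}\in\real^n$; (ii) passing to the symmetric dilation $\genmat_{u_{k,j}}\in\real^{(n+1)\times(n+1)}$ so that the MMD becomes a maximum eigenvalue of a matrix martingale; (iii) converting the two-parameter tail of $\twonorm{u_{k,j}}$ into conditional moment bounds $\E[\genmat_{k,j}^q\mid\cdot]\preceq(\frac q2)!\,R_{k,j}^q\,\mbf I$ via a tail-to-moment lemma; and (iv) proving a sub-Gaussian matrix Freedman inequality to conclude. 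Without some equivalent of steps (iii)--(iv), your sketch does not yet constitute a proof.

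A second concrete issue is your account of the $\sqrt{\log(n+1)}$ factor in $\wtil{\shiftparam}_n$. It does not come from a union bound over the $O(n)$ \halve invocations: union-bounding each call's MMD and combining by the triangle inequality forfeits exactly the cancellation you are relying on, and would inflate the level aggregation from $\sqrt{\log_4 n-\ossymb}$ to $\log_4 n-\ossymb$ (this is the failure mode the paper warns about after \cref{thm:compress_sub_gamma}). In the paper's proof the $\log(n+1)$ is the dimension factor $m=n+1$ in the matrix Freedman bound, inherited from $\tr\exp(\mbf A)\leq m\,\opnorm{\exp(\mbf A)}$ for the $(n+1)$-dimensional dilations. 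If you genuinely had a dimension-free Hilbert-space Bernstein inequality for these two-parameter increments you might avoid that factor altogether, but then you would need to prove such an inequality (handling the nonzero shift $\shiftparam$), and your writeup neither does so nor explains how the truncation/union-bound alternative would preserve the $\sqrt{\cdot}$ aggregation and land on the constants in \cref{eq:cp_params}. As it stands, the proposal has the right architecture but a genuine gap at its core concentration step.
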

\begin{remark}[Symmetrization]
\label{rem:sym}
\normalfont
     We can convert any halving algorithm into one that satisfies
    the unbiasedness condition $\Earg{\P_{\halve}\kernel\mid \inputcoreset} = \pin \kernel$ without impacting integration error by \emph{symmetrization}, i.e., by returning either the outputted half or its complement with equal probability. 
\end{remark}

\newcommand{\ifactor}{10}
\newcommand{\ifactortwo}{8.5} %
\newcommand{\ifactorthree}{3.1} %
\begin{remark}[\tbf{\compress inflates MMD guarantee by at most $\mbi{\ifactor \log(n\!+\!1)}$}]\label{rem:compress_mmd_inflation}
\normalfont
\cref{thm:compress_mmd} implies that the $\kernel$-sub-Gaussian error of \compress is always at most \mbox{$\ifactor\log (n\!+\!1)$} times that of \halve with input size $\l_n\! =\! 2^{\ossymb+1} \sqrt n$ since
\begin{talign}
\subgammaerror_{\kernel, \compress}(n) \seq{\mrm{\cref{def:mmd_subgamma_algo}}} \max(\wtil{\shiftparam}_n, \wtil{\kgaussparam}_n)
&\sless{\cref{eq:cp_params}} 
\ifactor \log (n+1) \max(\shiftparam_{\l_n}, \kgaussparam_{\l_n}) 
= \ifactor \log(n+1) \cdot \subgammaerror_{\kernel, \halve}(\l_n).
\end{talign}
As in \cref{rem:compress_error_inflation}, \halve applied to an input of size $\lng$ is a particularly strong benchmark, as 
thinning from $n$ to $\frac{\lng}{2}$ points should incur at least as much MMD error as halving from $\lng$ to $\frac{\lng}{2}$.

\end{remark}

\begin{example}[\kt-\compress] \label{ex:ktcompress}
\normalfont
Consider running \compress with, for each \halve input of size $\l$, $\halve = \kt(\frac{\l^2}{n 4^{\ossymb+1} (\beta_n+1)} \delta)
$ from \cref{ex:kt_subgamma} after symmetrizing as in \cref{rem:sym}. 
Since \kt has $\Theta(n^2)$ runtime, \compress yields near-linear $\order(n \log n)$ runtime by \cref{rem:runtime_speedup}.
Moreover, as we detail in \cref{sec:proof_of_ktcompress}, using the notation of \cref{ex:kt_subgamma}, on an event $\mc E$ of probability at least $1-\frac{\delta}{2}$, every \halve call invoked by \compress is $\kernel$-sub-Gaussian  with 
\begin{talign}
\shiftparam_\l = \frac{2C_{\shiftparam}}{\l}\sqrt{\staticinfnorm{\kersplit}},
\qtext{and}
\kgaussparam_\l = \frac{2C_{\kgaussparam}}{\l}\sqrt{\staticinfnorm{\kersplit}\log( 
    \frac{12n4^{\ossymb} (\beta_n+1)}{\ell \delta})}\ \mathfrak{M}_{\inputcoreset,\kersplit}.
\end{talign}
Thus, \cref{rem:compress_mmd_inflation} implies that, on $\mc E$, \kt-\compress has $\kernel$-sub-Gaussian error 
$\subgammaerror_{\kernel, \compress}(n)\! \leq\! \ifactor \log(n\!+\!1) \subgammaerror_{\kernel, \halve}(\l_n)$, a guarantee within $\ifactor \log(n\!+\!1)$ of the original $\kt(\delta)$ MMD error \cref{eq:kt_params}.
\examend
\end{example}

\section{\compresspp} \label{sec:compresspp}

To offset any excess error due to \compress while maintaining its near-linear runtime, 
we next introduce \compresspp (\cref{alg:compresspp}), a simple two-stage meta-procedure for faster root-thinning.
\compresspp takes as input an oversampling parameter $\ossymb$, a halving algorithm \halve, and a $2^\ossymb$-thinning algorithm \thin (see \cref{def:thinning_algo}).  In our applications, \halve and \thin are derived from the same base algorithm (e.g., from KT with different thinning factors), but this is not required.
\compresspp first runs the faster but slightly more erroneous $\compress(\halve, \ossymb)$ algorithm to produce an intermediate coreset of size $2^{\ossymb} \sqrt{n}$.
Next, the slower but more accurate \thin algorithm is run on the  greatly compressed intermediate coreset to produce a final output of size $\sqrt{n}$.
In the sequel, we demonstrate how to set $\ossymb$ to offset error inflation due to \compress while maintaining its fast runtime.

\begin{algorithm2e}[ht!]
\caption{\compresspp \label{alg:compresspp}}
\SetAlgoLined
  \DontPrintSemicolon
\small
  \KwIn{\osname \ossymb, halving alg.\ $\halve$, $2^\ossymb$-thinning alg.\  $\thin$,  point sequence $ \inputcoreset $ of size $n$} 
  $\cset_{\compresssub} \ \quad\gets\quad \compress( \halve, \ossymb,  \inputcoreset) $  \quad // coreset of size $2^{\ossymb}\sqrt{n}$ \\
  $\cset_{\compressppsub}  \ \ \gets \quad \thin\left(\cset_{\compresssub} \right)$ \ \ \qquad\qquad \qquad\quad\ \  // coreset of size $\sqrt{n}$ \\
  \KwRet{ $\cset_{\compressppsub}$}
\end{algorithm2e}
\vspace{-3mm}

    \subsection{Integration error and runtime guarantees for \compresspp} 
    The following result, proved in \cref{sec:compresspp_subgamma}, relates the runtime and single-function integration error of \compresspp to the runtime and error of \halve and \thin. 
    \newcommand{\compressppsubgammathmname}{Runtime and integration error  of \compresspp}
    \begin{theorem}[\tbf{\compressppsubgammathmname}]
     \label{thm:compresspp_subgamma}
If $\halve$ and $\thin$ have runtimes $r_{\halvetag}(n)$ and  $r_{\thintag}(n)$ respectively for inputs of size $n$, then \compresspp has runtime %
    \begin{talign}
        \label{eq:runtime_cpp}
            r_{\compressppsub} (n) &= 
             r_{\compresssub}(n)+
           r_{\thintag}({\lng}{/2}) 
           \qtext{where}
           r_{\compresssub}(n)\seq{\cref{run_time_cp}}\sum_{i=0}^{\rounds} 4^{i}\cdot  r_{\halvetag} ( \lng 2^{-i} ),
    \end{talign}
    $\lng\!=\! 2^{\ossymb+1} \sqrt{n}$, and $\rounds\!=\!\log_4 n \!-\! \ossymb\!-\!1$ as in \cref{thm:compress_sub_gamma}.
Furthermore, if for some function $\fun$, $\halve \in~\fsingsubgamma(\subgammavariance_{\halvetag})$ and $\thin\in\fsingsubgamma (\subgammavariance_{\thintag})$, then $\compresspp \in \fsingsubgamma(\subgammavariance_{\compressppsub})$ with 
    \begin{talign}
    \label{eq:nu_cpp}
        \subgammavariance^2_{\compressppsub}(n) &=
        \subgammavariance^2_{\compresssub}(n)
        +
        \subgammavariance^2_{\thintag}({\lng}{/2})
        \qtext{where}
        \subgammavariance^2_{\compresssub}(n)
        \seq{\cref{eq:nu_cp}}\sum_{i=0}^{\rounds} 4^{-i} \cdot \subgammavariance^2_{\halvetag} ( \lng 2^{-i}  ).
    \end{talign}
\end{theorem}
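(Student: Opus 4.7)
The runtime identity is immediate: \compresspp runs \compress on the full input $\inputcoreset$ and then applies \thin to the resulting coreset $\cset_{\compresssub}$, whose size is $2^{\ossymb}\sqrt{n} = \lng/2$. So I would simply invoke \cref{thm:compress_sub_gamma} for the $r_{\compresssub}(n)$ term and add $r_{\thintag}(\lng/2)$ for the downstream \thin call, yielding \cref{eq:runtime_cpp}.

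For the sub-Gaussian guarantee, the plan is to decompose the integration error across the two stages and bound the conditional MGF using the tower property. Write
\begin{talign*}
\pin \fun - \P_{\compressppsub} \fun = (\pin \fun - \P_{\compresssub} \fun) + (\P_{\compresssub} \fun - \P_{\compressppsub} \fun),
\end{talign*}
and for any $\lambda \in \real$ consider
\begin{talign*}
\Exs\!\left[e^{\lambda(\pin \fun - \P_{\compressppsub} \fun)} \mid \inputcoreset\right]
= \Exs\!\left[ e^{\lambda(\pin \fun - \P_{\compresssub} \fun)}\, \Exs\!\left[e^{\lambda(\P_{\compresssub} \fun - \P_{\compressppsub} \fun)} \mid \cset_{\compresssub}, \inputcoreset\right] \,\big\vert\, \inputcoreset \right].
\end{talign*}
Since \thin uses only $\cset_{\compresssub}$ together with independent randomness, the inner conditional expectation equals $\Exs[e^{\lambda(\P_{\compresssub}\fun - \P_{\compressppsub}\fun)}\mid\cset_{\compresssub}]$, which by the $\fun$-sub-Gaussianity of \thin applied to input size $\lng/2$ is at most $\exp(\lambda^2 \subgammavariance^2_{\thintag}(\lng/2)/2)$. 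Pulling this deterministic factor out and applying \cref{thm:compress_sub_gamma} to the remaining conditional expectation yields the bound $\exp\!\big(\lambda^2 (\subgammavariance^2_{\compresssub}(n) + \subgammavariance^2_{\thintag}(\lng/2))/2\big)$, giving \cref{eq:nu_cpp}.

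The only subtle point—and thus the main thing to get right—is the conditional independence argument: I need to justify that conditioning on both $\cset_{\compresssub}$ and $\inputcoreset$ leaves the \thin step $\fun$-sub-Gaussian with parameter $\subgammavariance_{\thintag}(\lng/2)$, i.e., that additionally conditioning on $\inputcoreset$ does not change the MGF bound supplied by \cref{def:subgamma_algo}. This is standard provided the internal randomness of \thin is independent of that of \halve (equivalently, of $\inputcoreset$), which is the implicit convention for the composed meta-procedure; I would state this explicitly at the start of the proof. The remainder is mechanical bookkeeping, just combining the two MGF bounds multiplicatively inside the tower.
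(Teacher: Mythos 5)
Your proposal is correct and follows essentially the same route as the paper: the runtime bound is the same immediate sum, and the error bound uses the same two-stage decomposition $\pin\fun-\P_{\compressppsub}\fun=(\pin\fun-\P_{\compresssub}\fun)+(\P_{\compresssub}\fun-\P_{\compressppsub}\fun)$ combined with conditional sub-Gaussianity of \thin given $\cset_{\compresssub}$. The only cosmetic difference is that the paper packages the tower-property MGF computation you write out explicitly into \cref{subgsn_sum} (whose proof is exactly your argument), and your remark about the independence of \thin's internal randomness is the same implicit convention the paper relies on when invoking that lemma.
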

\vspace{-4mm}
    \begin{remark}[Near-linear runtime and near-quadratic speed-ups for \compresspp]
    \label{rem:runtime_compresspp}
    \normalfont
    When \halve and \thin have quadratic runtimes with  $\max(r_{\halvetag}(n), r_{\thintag}(n)) = n^{2}$, 
    \cref{thm:compresspp_subgamma,rem:runtime_speedup} yield that $r_{\compressppsub}(n) \leq 4^{\ossymb + 1}\,n (\log_4(n)-\ossymb) + 4^{\ossymb} n$.
    Hence, \compresspp maintains a near-linear runtime
     \begin{talign}
    \label{eq:cpp_runtime_speedup}
    r_{\compressppsub}(n) = \order(n \log_4^{c+1}(n)) 
    \qtext{whenever} 4^\ossymb = \order(\log_4^c n). 
    \end{talign}
    If \halve and \thin instead have super-quadratic runtimes with $\max(r_{\halvetag}(n), r_{\thintag}(n)) = n^\rtpower$, then by  \cref{rem:runtime_speedup} we have 
    $r_{\compressppsub}(n) \leq (\frac{4^{\rtpower}}{2^{\rtpower}-4}+1)\, 2^{\ossymb \rtpower} n^{\rtpower/2}$, 
    so that \compresspp provides a near-quadratic speed up
    $%
    r_{\compressppsub}(n) = \order(n^{\rtpower/2} \log_4^{c\rtpower/2}(n))$
    whenever
    $4^\ossymb = \order(\log_4^c n).
    $%
    \end{remark}

     \begin{remark}[\compresspp inflates sub-Gaussian error by at most $\sqrt{2}$]\label{rem:compresspp_error}
    \normalfont
    In the usual case that $n\,\subgammavariance_{\halvetag} (n)$ is non-decreasing in $n$, 
    \cref{thm:compresspp_subgamma,rem:compress_error_inflation} imply that
    \begin{talign}
    \label{eq:compresspp_error}
     \subgammavariance^2_{\compressppsub}(n)
     \leq ( \log_4 n - \ossymb) \subgammavariance^2_{\halvetag} (\lng) + \subgammavariance^2_{\thintag}(\frac{\lng}{2})
     &=  \subgammavariance^2_{\thintag}(\frac{\lng}{2}) 
     \cdot
     \parenth{1+ \frac{\log_4 n- \ossymb}{4^{\ossymb}} \cdot (\frac{\zeta_{\halvetag}(\lng)}{\zeta_{\thintag}(\lng/2)})^2 }%
    \end{talign}
    where we have introduced the rescaled quantities
    $\zeta_{\halvetag}(\lng) \defeq \frac{\lng}{2} \subgammavariance_{\halvetag}(\lng)$ and $\zeta_{\thintag}(\frac{\lng}{2}) \defeq \sqrt{n}\, \subgammavariance_{\thintag}(\frac{\lng}{2})$.
    Therefore, \compresspp satisfies
    \begin{talign}\label{eq:ossymb-condition}
    \subgammavariance_{\compressppsub}(n) \leq \sqrt{2} \subgammavariance_{\thintag}(\frac{\lng}{2})
    \qtext{whenever}
    \ossymb 
        \geq \log_4\log_4 n
        + \log_2(\frac{\zeta_{\halvetag}(\lng)}{\zeta_{\thintag}(\lng/2)}).
    \end{talign} 
    That is, whenever \compresspp is run with an \osname $\ossymb$ satisfying  \cref{eq:ossymb-condition} its sub-Gaussian error is never more than $\sqrt{2}$ times the second-stage \thin error.
    Here, \thin represents a strong baseline for comparison as thinning from $\lng/2$ to $\sqrt n$ points should incur at least as much error as thinning from $n$ to $\sqrt{n}$ points.
    \end{remark}

    As we illustrate in the next example, when \thin and \halve are derived from the same thinning algorithm, the ratio $\frac{\zeta_{\halvetag}(\lng)}{\zeta_{\thintag}(\lng/2)}$ is typically bounded by a constant $C$ so that the choice $\ossymb = \ceil{\log_4 \log_4 n + \log_2 C}$ suffices to simultaneously obtain the $\sqrt{2}$ relative error guarantee \cref{eq:ossymb-condition} of \cref{rem:compresspp_error} and the substantial speed-ups \cref{eq:cpp_runtime_speedup} of \cref{rem:runtime_compresspp}. 

    \begin{example}[\ktsplit-\compresspp]
    \label{ex:ktsplitcompresspp}
    \normalfont
    In the notation of \cref{ex:kh_subgamma}, 
    consider running \compresspp with $\halve = \ktsplit(
    \frac{\l^2}{4n2^{\ossymb}(\ossymb+2^{\ossymb}(\rounds+1))}\delta
    )$ when applied to an input of size $\l$ and $\thin = \ktsplit(
    \frac{\ossymb}{\ossymb+2^{\ossymb}(\rounds+1)}  \delta
    )$.
    As detailed in \cref{sec:proof_of_ktsplitcompresspp}, on an event of probability $1 - \frac{\delta}{2}$, all \compresspp invocations of \halve and \thin are simultaneously $f$-sub-Gaussian with parameters satisfying
    \begin{talign}\label{eq:ktsplit_zetas}
    \zeta_{\halvetag}(\ell) \!=\! \zeta_{\thintag}(\ell) \!=\! \frac{2}{\sqrt{3}}\sqrt{\log(
    \frac{6\sqrt{n}(\ossymb+2^{\ossymb}(\rounds+1))}{\delta}
    )\infnorm{\kernel}} 
    \Longrightarrow
    \frac{\zeta_{\halvetag}(\lng)}{\zeta_{\thintag}(\frac{\lng}{2})} \!=\! 1 
    \text{ for all $f$ with $\knorm{f}\!=1$.}
    \end{talign}
    Since \ktsplit runs in $\Theta(n^2)$ time,  \cref{rem:runtime_compresspp,rem:compresspp_error} imply that \ktsplit-\compresspp with  
    $\ossymb \!=\!\ceil{ \log_4 \log_4 n} $ 
    runs in near-linear $\order ( n \log^2 n )$ time and inflates sub-Gaussian error by at most $\sqrt{2}$. 
    \examend
    \end{example}

    \subsection{MMD guarantees for \compresspp}

    \newcommand{\compressppmmdthmname}{MMD guarantees for \compresspp}
    Next, we bound the MMD error of \compresspp in terms of the MMD error of \halve and \thin.
    The proof of the following result can be found in \cref{sec:compressppmmd}. 
     \begin{theorem}[\compressppmmdthmname] \label{thm:compressppmmd}
        If $\thin \in \ksubgamma(\shiftparam',\!\kgaussparam')$, $\halve \in \ksubgamma(\shiftparam,\!\kgaussparam)$ for $n\,\shiftparam_n$ and $n\, \kgaussparam_n$ non-decreasing, and $\Earg{\P_{\halve}\kernel\mid \inputcoreset} = \pin \kernel$, then $\compresspp \in \ksubgamma(\what{\shiftparam},\what{\kgaussparam})$ with
         \begin{talign}
        \what{\kgaussparam}_n
        \defeq   \wtil{\kgaussparam}_n + \kgaussparam'_{\lng/2} 
        \qtext{and}
        \what{\shiftparam}_n \defeq  \wtil{\shiftparam}_n 
        +\shiftparam'_{\lng/2}
        + \what{\kgaussparam}_n \sqrt{\log 2} 
        \label{eq:cpp_params}
        \end{talign}
        for $\wtil{\kgaussparam}_n$ and $\wtil{\shiftparam}_n $  defined in \cref{thm:compress_mmd} and $\lng = 2^{\ossymb+1} \sqrt n$ as in \cref{thm:compress_sub_gamma}.

    \end{theorem}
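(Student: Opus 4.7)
The plan is to decompose the \compresspp error by triangle inequality and then combine the two conditionally independent sub-Gaussian tail bounds with a careful union bound. Since \compresspp first produces the intermediate coreset $\cset_{\compresssub}$ via \compress and then sets $\cset_{\compressppsub} = \thin(\cset_{\compresssub})$, I would start from
\begin{talign*}
\mmd_{\kernel}(\inputcoreset, \cset_{\compressppsub})
\leq \mmd_{\kernel}(\inputcoreset, \cset_{\compresssub}) + \mmd_{\kernel}(\cset_{\compresssub}, \cset_{\compressppsub}),
\end{talign*}
which reduces the problem to bounding the two summands.

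For the first summand, I would directly invoke \cref{thm:compress_mmd}: the hypotheses on \halve (namely $\halve \in \ksubgamma(\shiftparam,\kgaussparam)$ with $n\,\shiftparam_n, n\,\kgaussparam_n$ non-decreasing and $\Earg{\P_{\halve}\kernel\mid \inputcoreset}=\pin \kernel$) are exactly those required there, giving
$\P[\mmd_{\kernel}(\inputcoreset, \cset_{\compresssub}) \geq \wtil{\shiftparam}_n + \wtil{\kgaussparam}_n\sqrt{t_1}\mid \inputcoreset]\leq e^{-t_1}$. For the second summand, the key observation is that $\cset_{\compresssub}$ has size $2^{\ossymb}\sqrt n = \lng/2$, so applying $\thin \in \ksubgamma(\shiftparam',\kgaussparam')$ conditionally on $\cset_{\compresssub}$ yields
$\P[\mmd_{\kernel}(\cset_{\compresssub}, \cset_{\compressppsub}) \geq \shiftparam'_{\lng/2} + \kgaussparam'_{\lng/2}\sqrt{t_2}\mid \cset_{\compresssub}]\leq e^{-t_2}$. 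By the tower property (integrating over $\cset_{\compresssub}$ given $\inputcoreset$), this conditional bound also holds given only $\inputcoreset$.

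The next step is to merge the two tail bounds into a single exponential tail. I would pick the symmetric split $t_1 = t_2 = t + \log 2$ so that a union bound yields
$\P[\text{either bound fails}\mid \inputcoreset] \leq 2e^{-(t+\log 2)} = e^{-t}$.
On the complementary event, combining the two high-probability inequalities gives
\begin{talign*}
\mmd_{\kernel}(\inputcoreset, \cset_{\compressppsub})
&\leq (\wtil{\shiftparam}_n + \shiftparam'_{\lng/2}) + (\wtil{\kgaussparam}_n + \kgaussparam'_{\lng/2})\sqrt{t + \log 2}.
\end{talign*}
I would then separate the $\sqrt{t+\log 2}$ via the elementary bound $\sqrt{t+\log 2}\leq \sqrt{t}+\sqrt{\log 2}$ and absorb the $\sqrt{\log 2}$ piece into the shift, producing exactly $\what{\kgaussparam}_n = \wtil{\kgaussparam}_n + \kgaussparam'_{\lng/2}$ and $\what{\shiftparam}_n = \wtil{\shiftparam}_n + \shiftparam'_{\lng/2} + \what{\kgaussparam}_n\sqrt{\log 2}$, matching \cref{eq:cpp_params}.

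The most delicate point is ensuring the conditioning structure is handled correctly: the \thin guarantee must be invoked conditionally on $\cset_{\compresssub}$ (since \thin's randomness is independent of \compress's only after conditioning on its input), and then integrated back against the \compress event to obtain a single bound given $\inputcoreset$. Everything else is bookkeeping: verifying that $\lng/2 = 2^{\ossymb}\sqrt n$ matches the actual input size of \thin, and choosing the symmetric split $t_1 = t_2 = t + \log 2$ so that the kernel-sub-Gaussian parameters add rather than combine in Euclidean norm — this additive combination is what produces the advertised $\what{\kgaussparam}_n$.
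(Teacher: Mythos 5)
Your proposal is correct and follows essentially the same route as the paper's proof: triangle inequality to split the error into the \compress stage (bounded via \cref{thm:compress_mmd}) and the \thin stage on an input of size $\lng/2$, followed by a union bound and the shift $t \mapsto t+\log 2$ with $\sqrt{t+\log 2}\leq\sqrt{t}+\sqrt{\log 2}$ to absorb the factor of $2$ into the shift parameter. The only difference is cosmetic bookkeeping (choosing $t_1=t_2=t+\log 2$ up front versus shifting after the union bound), so nothing further is needed.
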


\begin{remark}[\tbf{\compresspp inflates MMD guarantee by at most $4$}]\label{rem:compresspp_mmd_inflation}
\normalfont
\newcommand{\ztilh}{\wtil{\zeta}_{\halvetag}(\lng)}
\newcommand{\ztilt}{\wtil{\zeta}_{\thintag}(\frac{\lng}{2})}
\cref{thm:compressppmmd} implies that the \compresspp
$\kernel$-sub-Gaussian error $\subgammaerror_{\kernel, \compresspp}(n) = \max(\what{\shiftparam}_{n},\what{\kgaussparam}_{n})$   satisfies
\begin{talign}
\subgammaerror_{\kernel, \compresspp}(n) 
&\leq (\ifactor \log(n+1)\, \subgammaerror_{\kernel, \halve}(\lng) +  \subgammaerror_{\kernel, \thin}(\frac{\lng}{2}))\,(1+\sqrt{\log 2}) \\
&\leq \subgammaerror_{\kernel, \thin}(\frac{\lng}{2}) (\frac{\ifactor \log(n+1)}{2^{\ossymb}} \frac{\ztilh}{\ztilt} + 1)(1+\sqrt{\log 2}),\label{eq:cppzeta1} 
\end{talign}
where we have introduced the rescaled quantities $\ztilh\!\defeq\! \frac{\lng}{2} \,\subgammaerror_{\kernel, \halve}(\lng)$
and
$\ztilt\! \defeq \!\sqrt{n} \,\subgammaerror_{\kernel, \thin}(\frac{\lng}{2})$.
Therefore, \compresspp satisfies
\begin{talign}
\subgammaerror_{\kernel, \compresspp}(n) 
\leq 4\,\subgammaerror_{\kernel, \thin}(\frac{\lng}{2})
\qtext{whenever}
\ossymb &\geq { \log_2 \log (n+1) + \log_2 (\ifactortwo\frac{\ztilh}{\ztilt}) }.
\label{eq:g_compresspp}
\end{talign}
In other words, relative to a strong baseline of thinning from $\frac{\lng}{2}$ to $\sqrt{n}$ points, \compresspp inflates  $\kernel$-sub-Gaussian error by at most a factor of $4$ whenever  $\ossymb$ satisfies~\cref{eq:g_compresspp}. For example, when the ratio ${\ztilh}{/\ztilt}$ is bounded by $C$, it suffices to choose $\ossymb = \ceil{\log_2\log(n\!+\!1)\!+\! \log_2(\ifactortwo C)}$.
\end{remark}

    \begin{example}[KT-\compresspp]\normalfont
    \label{ex:ktcompresspp}
    In the notation of \cref{ex:kt_subgamma,rem:sym}, 
    consider running \compresspp with $\halve = \trm{symmetrized } \kt(
    \frac{\l^2}{4n2^{\ossymb}(\ossymb+2^{\ossymb}(\rounds+1))}\delta)$ when applied to an input of size $\l$ and $\thin = \kt(\frac{\ossymb}{\ossymb+2^{\ossymb}(\rounds+1)}  \delta)$.
    As we detail in \cref{sec:proof_of_ktcompresspp}, on an event of probability $1 - \frac{\delta}{2}$, all \compresspp invocations of \halve and \thin are simultaneously $\kernel$-sub-Gaussian with %
    \begin{talign}
   \wtil{\zeta}_{\halvetag}(\l_n) = \wtil{\zeta}_{\thintag}(\frac{\l_n}{2}) = C_{\kgaussparam}\sqrt{\staticinfnorm{\kernel}\log(
   \frac{6\sqrt{n}(\ossymb+2^{\ossymb}(\rounds+1))}{\delta}
   )}\ \mathfrak{M}_{\inputcoreset,\kernel}
   \implies
    \frac{\wtil{\zeta}_{\halvetag}(\lng)}{\wtil{\zeta}_{\thintag}(\frac{\lng}{2})} = 1.
    \label{eq:verify_rho_kt}
    \end{talign}
    As \kt runs in $\Theta(n^2)$ time,  \cref{rem:runtime_compresspp,rem:compresspp_mmd_inflation} imply that \kt-\compresspp with 
    $\ossymb \!=\ceil{ \!\log_2 \log n\! +\! \ifactorthree}$ %
runs in near-linear $\order ( n \log^3 n )$ time and inflates $\kernel$-sub-Gaussian error by at most~$4$. \examend

    \end{example}

     \newcommand{\iidmmdfig}{%
\begin{figure}[t]
        \centering
        \includegraphics[width=\linewidth]{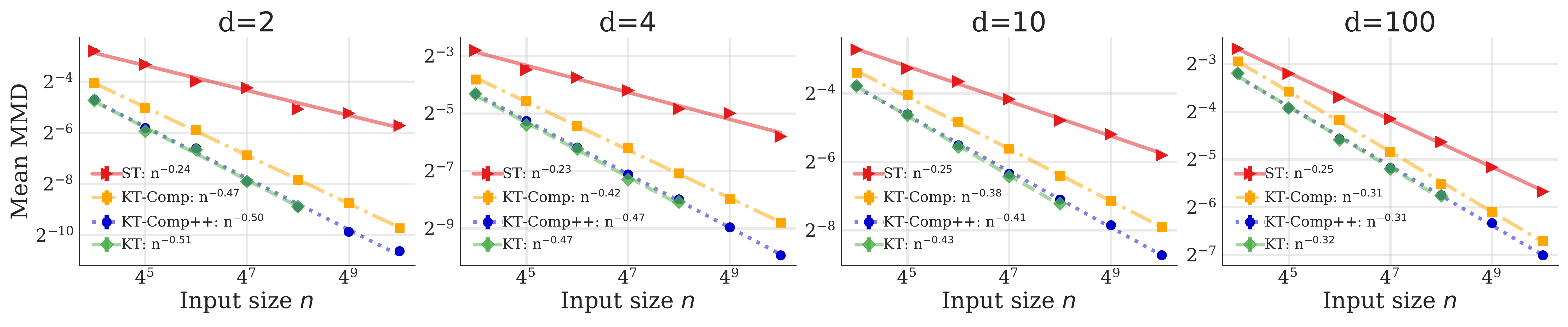} \\
        \includegraphics[width=\linewidth]{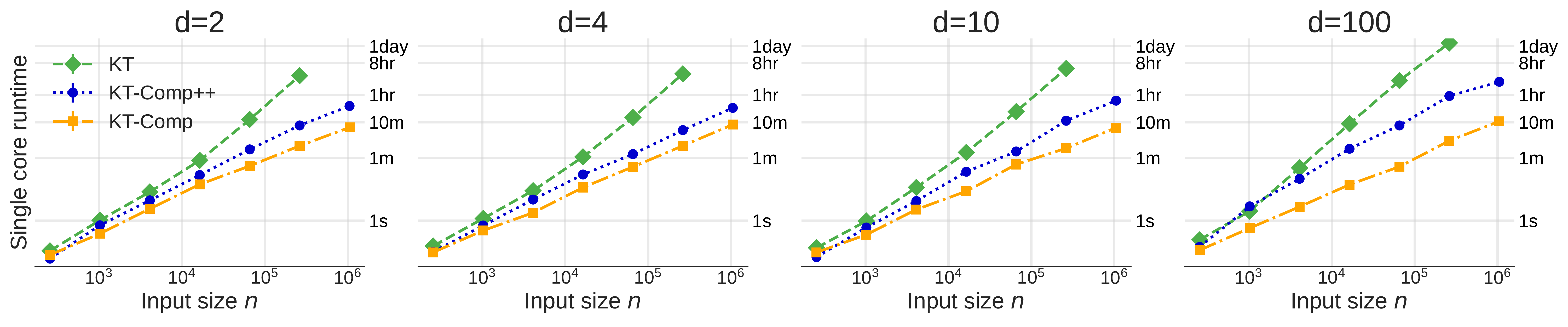} 
         \hrule
        \includegraphics[width=\linewidth]{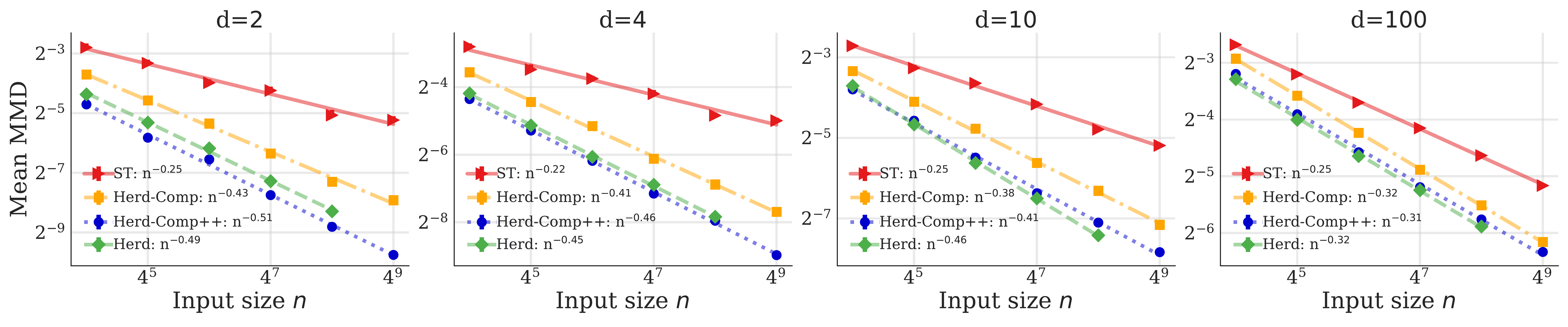} \\
       \includegraphics[width=\linewidth]{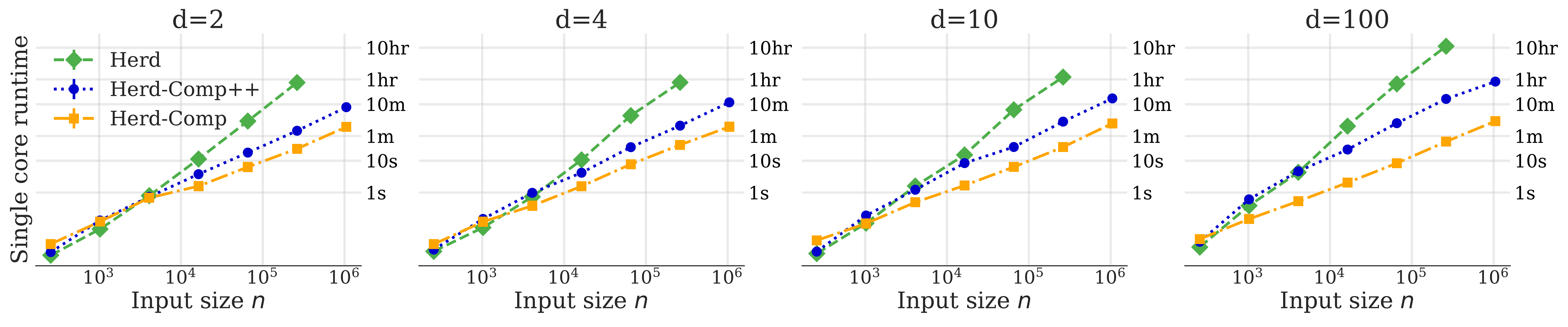}
        \caption{For Gaussian targets $\P$ in $\R^d$, %
        KT-\compresspp and Herd-\compresspp improve upon the MMD of \iid sampling (ST), closely track the error of their respective quadratic-time input algorithms KT and kernel herding (Herd), and substantially reduce the runtime.
        }
        \label{fig:gauss}
\end{figure}}
 \newcommand{\mcmcmmdfig}{%
       \begin{figure}[htb!]
        \centering
        \begin{tabular}{c}
        \includegraphics[width=\linewidth]{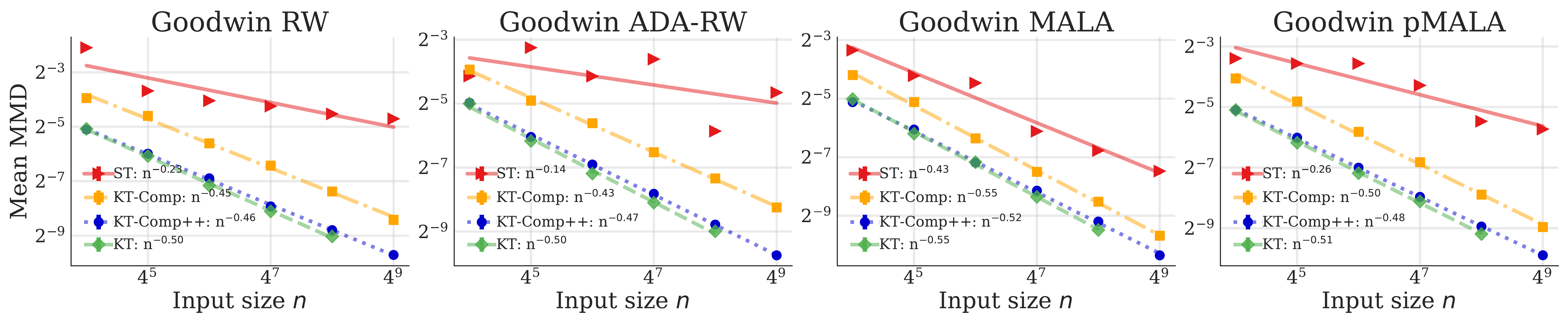} \\
\includegraphics[width=\linewidth]{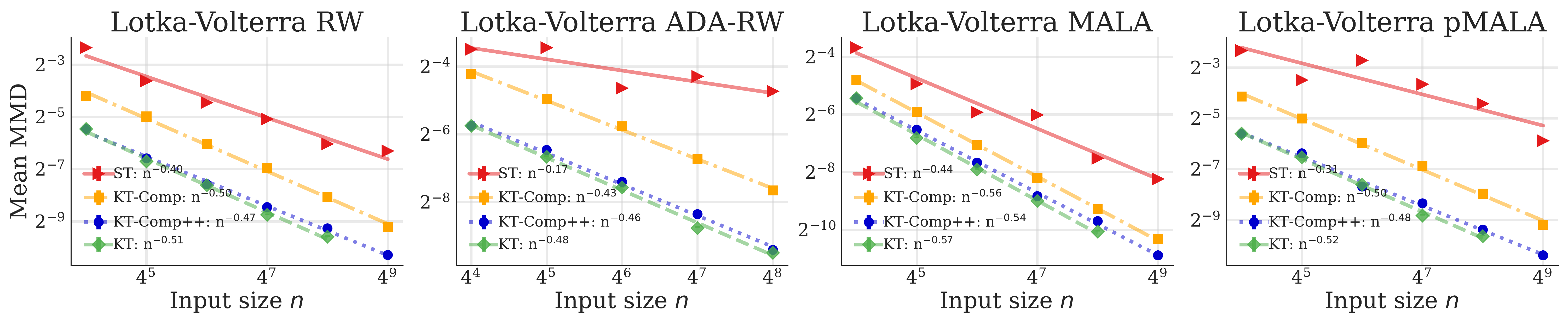}\\
        \includegraphics[width=\linewidth]{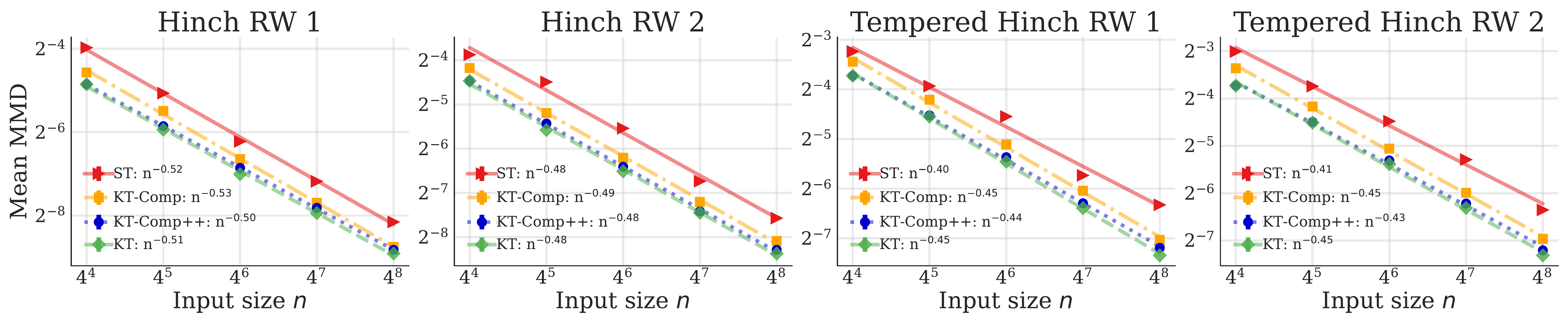}
        \end{tabular}
        \caption{Given MCMC sequences summarizing   challenging differential equation posteriors $\P$,  KT-\compresspp consistently improves upon the MMD of standard thinning (ST) and matches or nearly matches the error of of its quadratic-time input algorithm KT.
        }
        \label{fig:mcmc}
    \end{figure}}
\newcommand{\mogfig}{%
\begin{figure}[hbt!]
    \centering
    \includegraphics[width=0.88\linewidth]{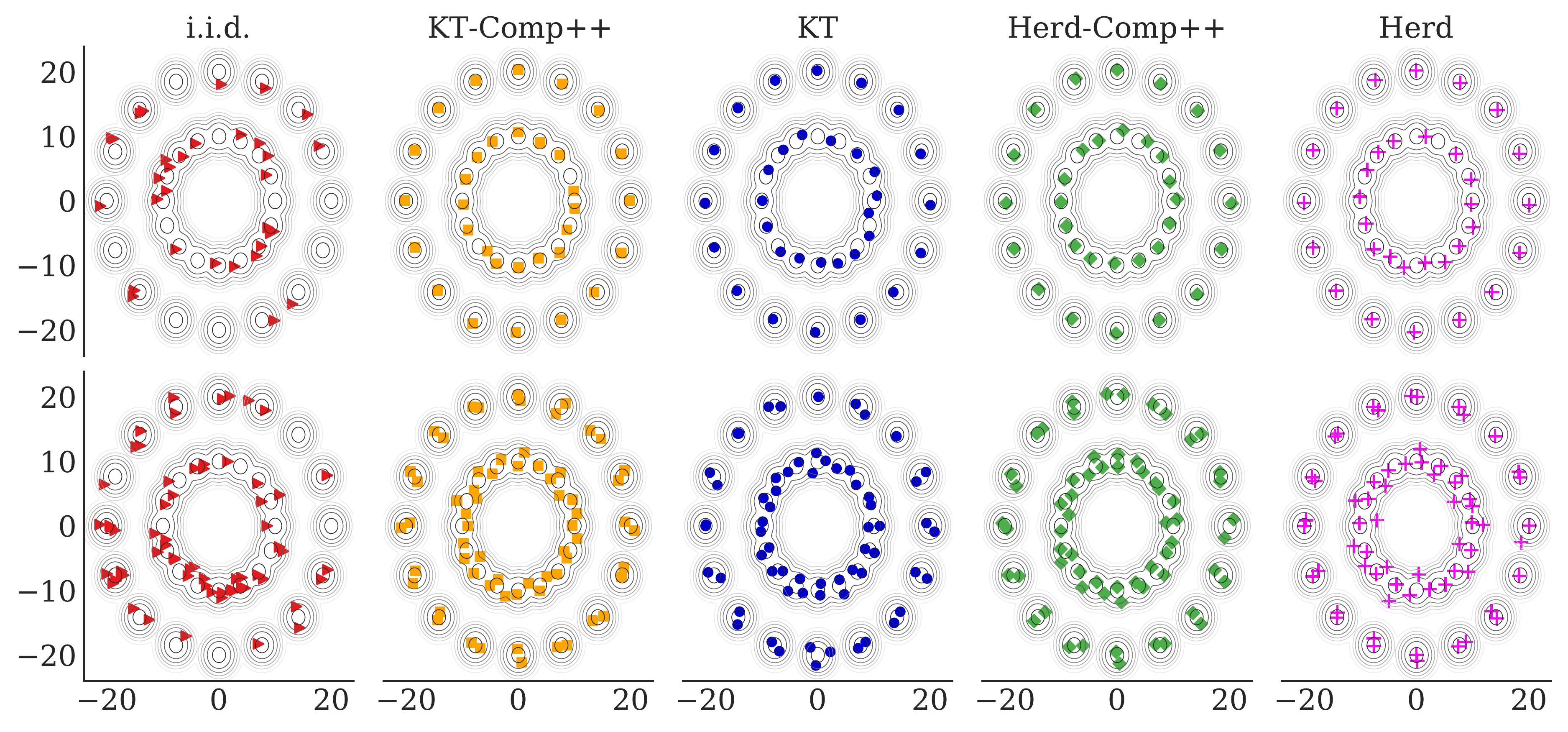}
    \caption{Coresets of size $32$ (top) or $64$ (bottom) %
    with equidensity contours of the target underlaid.
    }
    \label{fig:8mog}
\end{figure}}

   \section{Experiments}
    \label{sec:experiments}
    We now turn to an empirical evaluation of the speed-ups and error of \compresspp.
    We begin by describing the thinning algorithms,  compression tasks, evaluation metrics, and kernels used in our experiments.
    Supplementary experimental details and results can be found in \cref{sec:additional_experiments}.
    
     \iidmmdfig
    \para{Thinning algorithms} Each experiment  compares a high-accuracy,  quadratic time thinning algorithm---either target kernel thinning~\citep{dwivedi2022generalized} or kernel herding~\citep{chen2012super}---with our near-linear time \compress and \compresspp variants that use the same input algorithm to \halve and \thin.  In each case, we perform root thinning, compressing $n$ input points down to $\sqrt n $ points, so that  \compress is run with $\ossymb=0$.
    For \compresspp, we use $\ossymb = 4$ throughout to satisfy the small relative error criterion \cref{eq:ossymb-condition} in all  experiments. 
    When halving we restrict each input algorithm to return distinct points and symmetrize the output as discussed in \cref{rem:sym}.

    \para{Compressing \iid summaries} To demonstrate the advantages of \compresspp over equal-sized \iid summaries we compress input point sequences $\inputcoreset$ drawn \iid from either (a) Gaussian targets $\P = \mc N(0, \mbf{I}_d)$ with $d \in \braces{2, 4, 10, 100}$ or (b) $M$-component mixture of Gaussian targets $\P = \frac{1}{M}\sum_{j=1}^{M}\mc{N}(\mu_j, \mbf{I}_2)$ with $M \in \braces{4, 6, 8, 32}$ and component means $\mu_j\in\reals^2$ defined in \cref{sec:additional_experiments}.
    
    \para{Compressing MCMC summaries} 
    To demonstrate the advantages of \compresspp over standard MCMC thinning, we also compress input point sequences $\inputcoreset$ 
    generated by a variety of popular MCMC algorithms (denoted by RW, ADA-RW, MALA, and pMALA) 
    targeting four challenging Bayesian posterior distributions $\P$.
    In particular, we adopt the four posterior targets of \citet{riabiz2020optimal} based on the \emph{\citet{goodwin1965oscillatory} model} of oscillatory enzymatic control ($d=4$), the \emph{\citet{lotka1925elements,volterra1926variazioni} model} of oscillatory predator-prey evolution ($d=4$), the  \emph{\citet{hinch2004simplified} model} of calcium signalling in cardiac cells ($d=38$), and a tempered Hinch model posterior ($d=38$).
    Notably, for the Hinch experiments, each summary point discarded via an accurate thinning procedure saves 1000s of downstream CPU hours by avoiding an additional critically expensive whole-heart simulation \citep{riabiz2020optimal}.
    See \cref{sec:additional_experiments} for MCMC algorithm and target details.

    \para{Kernel settings} Throughout we use a Gaussian kernel $\kernel(x, y) = \exp(-\frac1{2\sigma^2} \twonorm{x-y}^2)$ with $\sigma^2$ as specified by \citet[Sec. K.2]{dwivedi2021kernel} for the MCMC targets and $\sigma^2=2d$ otherwise.%
    \mcmcmmdfig
    
    \para{Evaluation metrics} For each thinning procedure we report mean runtime across 3 runs and mean MMD error across 10 independent runs $\pm$ 1 standard error (the error bars are often too small to be visible). All runtimes were measured on a single core of an Intel Xeon CPU. %
    For the \iid targets, we report  $\mmd_{\kernel}(\P, \P_{\mrm{out}})$ which can be exactly computed in closed-form. For the MCMC targets, we report the thinning error $\mmd_{\kernel}(\P_{\mrm{in}}, \P_{\mrm{out}})$ analyzed directly by our theory (\cref{thm:compress_mmd,thm:compressppmmd}).
    \para{Kernel thinning results}
    We first apply \compresspp to the near-optimal KT algorithm to obtain comparable summaries at a fraction of the cost.
    \cref{fig:gauss,fig:mcmc} reveal that, in line with our guarantees, KT-\compresspp matches or nearly matches the MMD error of KT in all experiments while also substantially reducing runtime.
    For example, KT thins $65000$ points in $10$ dimensions in $20$m, while KT-\compresspp needs only $1.5$m; KT takes more than a day to thin $250000$ points in $100$ dimensions, while KT-\compresspp takes less than an hour (a 32$\times$ speed-up).
    For reference we also display the error of standard thinning (ST) to highlight that KT-\compresspp significantly improves approximation quality relative to the standard practice of \iid summarization or standard MCMC thinning. 
    See \cref{fig:mog} in \cref{sec:mog_supplement} for analogous results with mixture of Gaussian targets.

    \para{Kernel herding results}
    A strength of \compresspp is that it can be applied to any thinning algorithm, including those with suboptimal or unknown performance guarantees that often perform well in practical. 
    In such cases, \cref{rem:compresspp_error,rem:compress_mmd_inflation} still ensure that \compresspp error is never much larger than that of the input algorithm.
    As an illustration, we apply \compresspp to the popular quadratic-time kernel herding algorithm (Herd).  
    \cref{fig:gauss} shows that Herd-\compresspp matches or nearly matches the MMD error of Herd in all experiments while also substantially reducing runtime.
    For example, Herd requires more than $11$ hours to compress $250000$ points in $100$ dimensions, while Herd-\compresspp takes only $14$ minutes (a 45$\times$ speed-up).
    Moreover, surprisingly, Herd-\compresspp is consistently more accurate than the original kernel herding algorithm for lower dimensional problems.
    See \cref{fig:mog} in \cref{sec:mog_supplement} for comparable results with mixture of Gaussian $\P$.

    \mogfig
      \para{Visualizing coresets}
      For a 32-component mixture of Gaussians target, \cref{fig:8mog} visualizes the coresets produced by \iid sampling, KT, kernel herding, and their \compresspp variants.  The \compresspp coresets closely resemble those of their input algorithms and, compared with \iid sampling, yield visibly improved stratification across the mixture components.

    \vspace{-3mm}
\section{Discussion and Conclusions}
\label{sec:conclusion}
\vspace{-3mm}
We introduced a new general meta-procedure, \compresspp, for speeding up thinning algorithms while preserving their error guarantees up to a factor of $4$.
When combined with the quadratic-time \ktsplit and kernel thinning algorithms of \citet{dwivedi2021kernel,dwivedi2022generalized}, the result is near-optimal distribution compression in near-linear time.
Moreover, the same simple approach can be combined with any slow thinning algorithm to obtain comparable summaries in a fraction of the time.
Two open questions recommend themselves for future investigation.
First, why does Herd-\compresspp improve upon the original kernel herding algorithm in lower dimensions, and can this improvement be extended to higher dimensions and to other algorithms?
Second, is it possible to thin significantly faster than \compresspp without significantly sacrificing approximation error?
Lower bounds tracing out the computational-statistical trade-offs in distribution compression would provide a  precise benchmark for optimality and point to any remaining opportunities for improvement.
\clearpage\newpage
\subsubsection*{Reproducibility Statement}
See the \texttt{goodpoints} Python package for Python implementations of all methods in this paper and
\begin{center}
    \url{https://github.com/microsoft/goodpoints}
\end{center}
for code reproducing each experiment.

\subsubsection*{Acknowledgments}
We thank Carles Domingo-Enrich for alerting us that an outdated proof of \cref{thm:compress_mmd} was previously included in the appendix. 
RD acknowledges the support by the National Science Foundation under Grant No. DMS-2023528 for the Foundations of Data Science Institute (FODSI). Part of this work was done when AS was interning at Microsoft Research New England.

\bibliographystyle{iclr2022_conference}
{\small \bibliography{refs}}

\begin{thebibliography}{27}
\providecommand{\natexlab}[1]{#1}
\providecommand{\url}[1]{\texttt{#1}}
\expandafter\ifx\csname urlstyle\endcsname\relax
  \providecommand{\doi}[1]{doi: #1}\else
  \providecommand{\doi}{doi: \begingroup \urlstyle{rm}\Url}\fi

\bibitem[Augustin et~al.(2016)Augustin, Neic, Liebmann, Prassl, Niederer,
  Haase, and Plank]{augustin2016anatomically}
Christoph~M Augustin, Aurel Neic, Manfred Liebmann, Anton~J Prassl, Steven~A
  Niederer, Gundolf Haase, and Gernot Plank.
\newblock Anatomically accurate high resolution modeling of human whole heart
  electromechanics: A strongly scalable algebraic multigrid solver method for
  nonlinear deformation.
\newblock \emph{Journal of computational physics}, 305:\penalty0 622--646,
  2016.

\bibitem[Berlinet \& Thomas-Agnan(2011)Berlinet and
  Thomas-Agnan]{berlinet2011reproducing}
Alain Berlinet and Christine Thomas-Agnan.
\newblock \emph{Reproducing kernel Hilbert spaces in probability and
  statistics}.
\newblock Springer Science \& Business Media, 2011.

\bibitem[Boucheron et~al.(2013)Boucheron, Lugosi, and
  Massart]{boucheron2013concentration}
S.~Boucheron, G.~Lugosi, and P.~Massart.
\newblock \emph{Concentration Inequalities: A Nonasymptotic Theory of
  Independence}.
\newblock OUP Oxford, 2013.
\newblock ISBN 9780199535255.
\newblock URL \url{https://books.google.com/books?id=koNqWRluhP0C}.

\bibitem[Chazelle \& Matousek(1996)Chazelle and Matousek]{chazelle1996linear}
Bernard Chazelle and Jiri Matousek.
\newblock On linear-time deterministic algorithms for optimization problems in
  fixed dimension.
\newblock \emph{Journal of Algorithms}, 21\penalty0 (3):\penalty0 579--597,
  1996.

\bibitem[Chen et~al.(2010)Chen, Welling, and Smola]{chen2012super}
Yutian Chen, Max Welling, and Alex Smola.
\newblock Super-samples from kernel herding.
\newblock In \emph{Proceedings of the Twenty-Sixth Conference on Uncertainty in
  Artificial Intelligence}, UAI’10, pp.\  109–116, Arlington, Virginia,
  USA, 2010. AUAI Press.
\newblock ISBN 9780974903965.

\bibitem[Dwivedi \& Mackey(2021)Dwivedi and Mackey]{dwivedi2021kernel}
Raaz Dwivedi and Lester Mackey.
\newblock Kernel thinning.
\newblock \emph{arXiv preprint arXiv:2105.05842}, 2021.

\bibitem[Dwivedi \& Mackey(2022)Dwivedi and Mackey]{dwivedi2022generalized}
Raaz Dwivedi and Lester Mackey.
\newblock Generalized kernel thinning.
\newblock In \emph{International Conference on Learning Representations}, 2022.

\bibitem[Girolami \& Calderhead(2011)Girolami and
  Calderhead]{girolami2011riemann}
Mark Girolami and Ben Calderhead.
\newblock Riemann manifold {Langevin and Hamiltonian Monte Carlo} methods.
\newblock \emph{Journal of the Royal Statistical Society: Series B (Statistical
  Methodology)}, 73\penalty0 (2):\penalty0 123--214, 2011.

\bibitem[Goodwin(1965)]{goodwin1965oscillatory}
Brian~C Goodwin.
\newblock Oscillatory behavior in enzymatic control process.
\newblock \emph{Advances in Enzyme Regulation}, 3:\penalty0 318--356, 1965.

\bibitem[Gretton et~al.(2012)Gretton, Borgwardt, Rasch, Sch{{\"o}}lkopf, and
  Smola]{JMLR:v13:gretton12a}
Arthur Gretton, Karsten~M. Borgwardt, Malte~J. Rasch, Bernhard Sch{{\"o}}lkopf,
  and Alexander Smola.
\newblock A kernel two-sample test.
\newblock \emph{Journal of Machine Learning Research}, 13\penalty0
  (25):\penalty0 723--773, 2012.

\bibitem[Haario et~al.(1999)Haario, Saksman, and Tamminen]{haario1999adaptive}
Heikki Haario, Eero Saksman, and Johanna Tamminen.
\newblock Adaptive proposal distribution for random walk {Metropolis}
  algorithm.
\newblock \emph{Computational Statistics}, 14\penalty0 (3):\penalty0 375--395,
  1999.

\bibitem[Hinch et~al.(2004)Hinch, Greenstein, Tanskanen, Xu, and
  Winslow]{hinch2004simplified}
Robert Hinch, JL~Greenstein, AJ~Tanskanen, L~Xu, and RL~Winslow.
\newblock A simplified local control model of calcium-induced calcium release
  in cardiac ventricular myocytes.
\newblock \emph{Biophysical journal}, 87\penalty0 (6):\penalty0 3723--3736,
  2004.

\bibitem[Kim et~al.(2016)Kim, Khanna, and Koyejo]{kim2016examples}
Been Kim, Rajiv Khanna, and Oluwasanmi~O Koyejo.
\newblock Examples are not enough, learn to criticize! criticism for
  interpretability.
\newblock \emph{Advances in neural information processing systems}, 29, 2016.

\bibitem[Lotka(1925)]{lotka1925elements}
Alfred~James Lotka.
\newblock \emph{Elements of physical biology}.
\newblock Williams \& Wilkins, 1925.

\bibitem[Matousek(1995)]{matousek1995approximations}
Jiri Matousek.
\newblock Approximations and optimal geometric divide-and-conquer.
\newblock \emph{Journal of Computer and System Sciences}, 50\penalty0
  (2):\penalty0 203--208, 1995.

\bibitem[Niederer et~al.(2011)Niederer, Mitchell, Smith, and
  Plank]{niederer2011simulating}
Steven~A Niederer, Lawrence Mitchell, Nicolas Smith, and Gernot Plank.
\newblock Simulating human cardiac electrophysiology on clinical time-scales.
\newblock \emph{Frontiers in Physiology}, 2:\penalty0 14, 2011.

\bibitem[Owen(2017)]{owen2017statistically}
Art~B Owen.
\newblock Statistically efficient thinning of a {Markov} chain sampler.
\newblock \emph{Journal of Computational and Graphical Statistics}, 26\penalty0
  (3):\penalty0 738--744, 2017.

\bibitem[Phillips(2008)]{phillips2008algorithms}
Jeff~M Phillips.
\newblock Algorithms for $\varepsilon$-approximations of terrains.
\newblock In \emph{International Colloquium on Automata, Languages, and
  Programming}, pp.\  447--458. Springer, 2008.

\bibitem[Phillips \& Tai(2020)Phillips and Tai]{phillips2020near}
Jeff~M Phillips and Wai~Ming Tai.
\newblock Near-optimal coresets of kernel density estimates.
\newblock \emph{Discrete \& Computational Geometry}, 63\penalty0 (4):\penalty0
  867--887, 2020.

\bibitem[Riabiz et~al.(2020{\natexlab{a}})Riabiz, Chen, Cockayne, Swietach,
  Niederer, Mackey, and Oates]{riabiz2020optimal}
Marina Riabiz, Wilson Chen, Jon Cockayne, Pawel Swietach, Steven~A Niederer,
  Lester Mackey, and Chris Oates.
\newblock Optimal thinning of {MCMC} output.
\newblock \emph{arXiv preprint arXiv:2005.03952}, 2020{\natexlab{a}}.

\bibitem[Riabiz et~al.(2020{\natexlab{b}})Riabiz, Chen, Cockayne, Swietach,
  Niederer, Mackey, and Oates]{DVN/MDKNWM_2020}
Marina Riabiz, Wilson~Ye Chen, Jon Cockayne, Pawel Swietach, Steven~A.
  Niederer, Lester Mackey, and Chris~J. Oates.
\newblock {Replication Data for: Optimal Thinning of MCMC Output},
  2020{\natexlab{b}}.
\newblock URL \url{https://doi.org/10.7910/DVN/MDKNWM}.
\newblock Accessed on Mar 23, 2021.

\bibitem[Robert \& Casella(1999)Robert and Casella]{robert1999monte}
Christian~P Robert and George Casella.
\newblock {Monte Carlo integration}.
\newblock In \emph{Monte Carlo statistical methods}, pp.\  71--138. Springer,
  1999.

\bibitem[Roberts \& Tweedie(1996)Roberts and Tweedie]{roberts1996exponential}
Gareth~O Roberts and Richard~L Tweedie.
\newblock Exponential convergence of {Langevin} distributions and their
  discrete approximations.
\newblock \emph{Bernoulli}, 2\penalty0 (4):\penalty0 341--363, 1996.

\bibitem[Strocchi et~al.(2020)Strocchi, Gsell, Augustin, Razeghi, Roney,
  Prassl, Vigmond, Behar, Gould, Rinaldi, Bishop, Plank, and
  Niederer]{strocchi2020simulating}
Marina Strocchi, Matthias~AF Gsell, Christoph~M Augustin, Orod Razeghi,
  Caroline~H Roney, Anton~J Prassl, Edward~J Vigmond, Jonathan~M Behar,
  Justin~S Gould, Christopher~A Rinaldi, Martin~J Bishop, Gernot Plank, and
  Steven~A Niederer.
\newblock Simulating ventricular systolic motion in a four-chamber heart model
  with spatially varying robin boundary conditions to model the effect of the
  pericardium.
\newblock \emph{Journal of Biomechanics}, 101:\penalty0 109645, 2020.

\bibitem[Tolstikhin et~al.(2017)Tolstikhin, Sriperumbudur, and
  Muandet]{tolstikhin2017minimax}
Ilya Tolstikhin, Bharath~K Sriperumbudur, and Krikamol Muandet.
\newblock Minimax estimation of kernel mean embeddings.
\newblock \emph{The Journal of Machine Learning Research}, 18\penalty0
  (1):\penalty0 3002--3048, 2017.

\bibitem[Tropp(2012)]{user_friendly_matrix}
Joel~A. Tropp.
\newblock User-friendly tail bounds for sums of random matrices.
\newblock \emph{Foundations of Computational Mathematics}, 12\penalty0
  (4):\penalty0 389--434, 2012.
\newblock \doi{10.1007/s10208-011-9099-z}.
\newblock URL \url{https://doi.org/10.1007/s10208-011-9099-z}.

\bibitem[Volterra(1926)]{volterra1926variazioni}
Vito Volterra.
\newblock Variazioni e fluttuazioni del numero d'individui in specie animali
  conviventi.
\newblock 1926.

\end{thebibliography}
\newpage
\appendix
\etoctocstyle{1}{Appendix}
\etocdepthtag.toc{mtappendix}
\etocsettagdepth{mtchapter}{none}
\etocsettagdepth{mtappendix}{section}
{\small\tableofcontents}

\section{Additional Definitions and Notation}
\label{sec:add_notation}

  This section provides additional definitions and notation used throughout the appendices.

  We associate with each algorithm \alg and input $\inputcoreset$ the {measure difference}
   \begin{talign}
  \label{eq:psi_alg}
    \normfunoutput_{\alg} \left( \inputcoreset \right)  &\defeq \P_{\inputcoreset} - \P_{\cset_{\alg}}   
    =  \frac{1}{\nin} \sum_{x \in \inputcoreset } \delta_x - \frac{1}{ \nout } \sum_{x \in \cset_{\alg}} \delta_x
  \end{talign}
    that characterizes how well the output empirical distribution approximates the input. %
    We will often write $\phi_{\alg}$ instead of $\phi_{\alg}(\inputcoreset)$ for brevity if   $\inputcoreset$ is clear from the context.
    
    We also make use of the following standard definition of a sub-Gaussian random variable \citep[see, e.g., ][Sec.~2.3]{boucheron2013concentration}.
     \begin{definition}[Sub-Gaussian random variable]
    \label{def:subgamma}
    We say that a random variable $G $ is sub-Gaussian with parameter $ \subgammavariance $ and write $G \in\subgsn (\subgammavariance)$ if
    \begin{talign}
        \mathbb{E} \left[ \exp \left( \lam \,  G  \right)  \right] \leq  \exp\left( \frac{ \lambda^2 \subgammavariance^2 }{2} \right)
        \qtext{for all} {\lam \in \real}.
    \end{talign}
    \end{definition}

    Given \cref{def:subgamma}, it follows that 
    $ \alg \in \fsingsubgamma \left( \subgammavariance \right) $ for a function $f$ as in \cref{def:subgamma_algo} if and only if the random variable $ \normfunoutput_{\alg}(f) \defeq \P_{\inputcoreset}f - \P_{\cset_{\alg}}f$
    is sub-Gaussian with parameter $\subgammavariance$ conditional on the input  $\inputcoreset$.
    
    In our proofs, it is often more convenient to work with an unnormalized \emph{measure discrepancy}
    \begin{talign}
    \label{eq:psi_phi_reln}
        \psi_{\alg}(\inputcoreset) \defeq \nin \cdot \normfunoutput_{\alg}(\inputcoreset) \seq{\cref{eq:psi_alg}}  \sum_{x \in \inputcoreset } \delta_x - \frac{\nin}{ \nout } \sum_{\cset_{\alg}} \delta_x. 
    \end{talign}
     By definition \cref{eq:psi_phi_reln}, we have the following useful equivalence:
    \begin{talign}
    \label{eq:phi_psi_equivalence}
    \psi_{\alg}(f) \defeq n \cdot  \normfunoutput_{\alg}(f) \in \subgsn \left( \sigma_{\alg} \right)
     \Longleftrightarrow 
    \normfunoutput_{\alg}(f) \in \subgsn \left( \subgammavariance_{\alg} \right) 
    \qtext{ for }
    \sigma_{\alg} \!=\! \nin \cdot \subgammavariance_{\alg}.
    \end{talign}
The following standard lemma establishes that the  sub-Gaussian property is closed under scaling and summation.
    \newcommand{\subgsnsumthmname}{Summation and scaling preserve sub-Gaussianity}
   \begin{lemma}[\tbf{\subgsnsumthmname}]
    \label{subgsn_sum} 
    Suppose 
    $G_1 \in \subgsn \left( \sigma_1 \right) $. 
    Then, for all $ \beta \in \reals $, we have $ \beta\cdot G_1 \in \subgsn \left( \beta \sigma_1 \right) $.
    Furthermore, if $G_1$ is $\mc{F}$-measurable and $G_2 \in \subgsn \left( \sigma_2 \right) $ given $\mc{F}$, then 
        $ G_1 +G_2 \in \subgsn ( \sqrt{\sigma_1^2 + \sigma_2^2} ) $. 
    \end{lemma}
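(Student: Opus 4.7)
The plan is to verify both claims directly from the moment generating function (MGF) bound in \cref{def:subgamma}, with the scaling claim handled by a change of variables and the summation claim handled by the tower property of conditional expectation.

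For the scaling claim, fix $\lambda \in \real$ and observe that $\lambda(\beta G_1) = (\beta\lambda) G_1$. Applying the MGF bound for $G_1 \in \subgsn(\sigma_1)$ at $\beta\lambda \in \real$ immediately yields $\E[\exp(\lambda \beta G_1)] \leq \exp(\tfrac{(\beta\lambda)^2 \sigma_1^2}{2}) = \exp(\tfrac{\lambda^2 (\beta\sigma_1)^2}{2})$, which is the defining inequality for $\beta G_1 \in \subgsn(\beta\sigma_1)$ (the bound depends on $\beta$ only through $\beta^2$, so it is equivalent to $\subgsn(|\beta|\sigma_1)$).

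For the summation claim, I would use the tower property, writing
\begin{talign*}
\E[\exp(\lambda(G_1 + G_2))]
    &= \E\!\left[\,\E[\exp(\lambda G_1)\exp(\lambda G_2) \mid \mc F]\,\right].
\end{talign*}
Because $G_1$ is $\mc F$-measurable, $\exp(\lambda G_1)$ factors out of the inner expectation, and the hypothesis $G_2 \in \subgsn(\sigma_2)$ given $\mc F$ yields $\E[\exp(\lambda G_2)\mid \mc F] \leq \exp(\tfrac{\lambda^2 \sigma_2^2}{2})$. Substituting this bound and then applying the marginal MGF bound for $G_1 \in \subgsn(\sigma_1)$ gives
\begin{talign*}
\E[\exp(\lambda(G_1 + G_2))]
    &\leq \exp\!\left(\tfrac{\lambda^2 \sigma_2^2}{2}\right) \E[\exp(\lambda G_1)]
    \leq \exp\!\left(\tfrac{\lambda^2(\sigma_1^2 + \sigma_2^2)}{2}\right),
\end{talign*}
which is the defining inequality for $G_1 + G_2 \in \subgsn(\sqrt{\sigma_1^2 + \sigma_2^2})$.

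There is no real obstacle here: this is a standard textbook argument (see, e.g., \citet[Section 2.3]{boucheron2013concentration}), and the only mildly delicate point is correctly invoking measurability of $G_1$ when conditioning on $\mc F$ so that the conditional MGF bound for $G_2$ can be applied before the marginal bound for $G_1$. The result will be used repeatedly in the appendix to combine the independent, mean-zero increments produced at different depths of the \compress recursion, which is precisely the random-cancellation mechanism flagged after \cref{thm:compress_sub_gamma}.
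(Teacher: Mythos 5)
Your proposal is correct and follows essentially the same route as the paper's proof: a change of variables $\lambda \mapsto \beta\lambda$ for the scaling claim, and the tower property with $G_1$ factored out of the conditional expectation (conditional MGF bound for $G_2$ first, marginal bound for $G_1$ second) for the summation claim. No gaps to flag.
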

    \begin{proof}
        Fix any $\beta \in \reals$. 
        Since $G_1 \in \subgsn(\sigma_1)$, for each $\lam \in \reals$, 
        \begin{talign}
             \E \left[ \exp \left(  \lam \cdot  \beta \cdot G_1 \right) \right] \leq \exp \left( \frac{\lam^2 (\beta \sigma_1)^2 }{2} \right) ,
        \end{talign}
        so that $\beta G_1 \in \subgsn(\beta \sigma_1)$ as advertised. 
        
        Furthermore, if $G_1$ is $\mc{F}$-measurable and $G_2 \in \subgsn(\sigma_2)$ given $\mc{F}$, then, for each $ \lambda \in \real  $,  
        \begin{talign}
             \E \left[ \exp \left( \lam \cdot  (G_1 + G_2 )\right)  \right] 
             = \E \left[ \exp \left( \lam \cdot G_1 + \lam \cdot G_2  \right)  \right] 
             & = \E \left[ \exp \left( \lam \cdot G_1 \right) \cdot  \E \left[  \exp \left( \lam \cdot  G_2 \right)   \mid \mc{F} \right]   \right] \\ 
             & \leq \exp \left( \frac{ \lambda^2 \sigma_2^2 }{2 } \right) \cdot \E \left[ \exp \left( \lam \cdot f \left(G_2 \right) \right)  \right] \\ 
             & = \exp \left( \frac{ \lambda^2 \sigma_1^2 }{2 } \right) \cdot \exp \left( \frac{ \lambda^2 \sigma_2^2 }{2 } \right) 
          = \exp \left( \frac{ \lambda^2 \left( \sigma_1^2 +  \sigma_2^2 \right) }{2 } \right),
        \end{talign}
        so that $ G_1 +G_2 \in \subgsn ( \sqrt{\sigma_1^2 + \sigma_2^2} ) $ as claimed. 
    \end{proof}

\section{Proof of \cref{thm:compress_sub_gamma}: \compresssubgammathmname}
\label{sec:compress_subgamma}
       First, we bound the running time of \compress.
        By definition, $ \compress $ makes four recursive calls to $\compress$ on inputs of size $ n/4$. 
        Then, $\halve$ is run on an input of size $ 2^{\ossymb+1} \sqrt{n}  $. 
        Thus, $r_{\compresssub}$ satisfies the recursion 
        \begin{talign}
            r_{\compresssub}(n) = 4r_{\compresssub}\left( \frac{n}{4} \right) + r_{\halvetag}( \sqrt{n}2^{\ossymb+1}). 
        \end{talign}
        Since $r_{\compresssub} (4^\ossymb) = 0$, we may unroll the recursion to find that
        \begin{talign}
            r_{\compresssub}(n) = \sum_{i=0}^{\rounds} 4^{i} r_{\halvetag}( 2^{\ossymb+1} \sqrt{n  4^{-i} }   ),
        \end{talign}
        as claimed in \cref{run_time_cp}. 

    Next, we bound the sub-Gaussian error for a fixed function $f$. 
    In the measure discrepancy \cref{eq:psi_phi_reln} notation of \cref{sec:add_notation} we have
    \begin{talign}
    \label{eq:psi_compress}
        \psi_{\compresssub} \left( \inputcoreset\right) = \sum_{i=1}^4 \psi_{\compresssub} \left( \cset_i \right) + \sqrt{n} 2^{-\ossymb-1} \psi_{\halvetag} (\wtil{\cset})
    \end{talign}
    where $ \cset_i $ and $\wtil{\cset}$ are defined as in \cref{algo:compress}.
    Unrolling this recursion, we find that running \compress on an input of size $n$ with \osname~$\ossymb$ leads to applying  \halve on $ 4^i $ coresets of size $n_i = 2^{\ossymb+1-i}\sqrt{n}$ for $0\leq i\leq\rounds$.
    Denoting these \halve inputs by $(\cset_{i,j}^{\intag})_{j\in [4^i]}$, we have
    \begin{talign} \label{eq:psi_unroll}
    \psi_{\compresssub}\left( \inputcoreset \right)  
        = 
    \sqrt{n} 2^{-\ossymb-1 } \sum_{i=0}^{ \rounds } \sum_{j=1}^{4^i} 2^{-i} \psi_{\halvetag} ( \cset^{\intag}_{ i , j}).
    \end{talign}
    
    Now define %
    $  \sigma_{\halvetag}(n) = n\subgammavariance_{\halvetag}(n) $. 
    Since $ \psi_{\halvetag} ( \cset^{\intag}_{ i , j})(f) $ are $\sigma_{\halvetag}(n_i)$ sub-Gaussian  given $(\cset^{\intag}_{ i' , j'})_{i'> i, j'\geq 1}$ and $(\cset^{\intag}_{ i , j'})_{j'\leq j}$, \cref{subgsn_sum} implies that $\psi_{\compresssub}\left( \inputcoreset \right)(f)$ is $\sigma_{\compresssub}$ sub-Gaussian given $\inputcoreset$ for
    \begin{talign}
        \sigma_{\compresssub}^2 \left( n \right) &= n 4^{-\ossymb-1} \sum_{i=0}^{ \rounds  } \sigma_{\halvetag}^2 \left( n_i  \right) . \label{eq:sigma_cp}
    \end{talign}
    Recalling the relation~\cref{eq:phi_psi_equivalence} between $\sigma$ and $\subgammavariance$ from \cref{sec:add_notation}, we  conclude that
    \begin{talign}
    \label{eq:sigma_compress}
        \subgammavariance^2_{\compresssub} \left( n \right) &= \sum_{i=0}^{ \rounds  } 4^{-i} \nu^2_{\halvetag} \left(n_i \right).
    \end{talign}
    as claimed in \cref{eq:nu_cp}.
\section{Proof of \cref{thm:compress_mmd}: \compressmmdthmname} \label{sec:compress_mmd}

    Our proof proceeds in several steps. To control the MMD \cref{eq:kernel_mmd_distance}, we will control the Hilbert norm of the measure discrepancy of \compress \cref{eq:psi_phi_reln}, which we first write  as a weighted sum of measure discrepancies from different (conditionally independent) runs of \halve. To effectively leverage the MMD tail bound assumption for this weighted sum, we reduce the problem to establishing a concentration inequality for the operator norm of an associated matrix.  We carry out this plan in four steps summarized below.
    
    First, in \cref{sub:reduce_mmd_vector_norm} we express the MMD associated with each \halve measure discrepancy as the Euclidean norm of a suitable vector (\cref{lem:mmd_to_vec}). Second, in \cref{sub:vector_to_matrix} we define a matrix dilation operator for a vector that allows us to control vector norms using matrix spectral norms (\cref{lem:dilation_results}). Third, in \cref{sub:matrix_freedman} we prove and apply a sub-Gaussian matrix Freedman concentration inequality (\cref{lem:matrix_exp_bernstein}) to control the MMD error for the \compress output, which in turn requires us to establish moment bounds for these matrices by leveraging tail bounds for the MMD error (\cref{lem:tailboundstomoments}).  Finally, we put together the pieces in \cref{sub:thm2_put_together} to complete the proof.
    
    We now begin our formal argument. 
    We will make use of the unrolled representation \cref{eq:psi_compress} for the \compress measure discrepancy $\psi_{\compresssub}(\inputcoreset)$ in terms of the \halve  inputs $(\cset^{\intag}_{ k , j})_{j\in[4^k]}$ of size $n_k = 2^{\ossymb+1-k}\sqrt{n}$ for $0\leq k\leq \log_4 n \!-\!\ossymb\!-\!1$.
    For brevity, we will use the shorthand $\psi_{\compresssub} \defeq \psi_{\compresssub}( \inputcoreset)$, $\spsi_{k,j} \defeq \psi_{\halvetag} ( \cset^{\intag}_{ k , j})$, and $\psi_{\thintag} \defeq \psi_{\thintag}(\cset_{\compresssub})$ hereafter.

    \subsection{Reducing MMD to vector Euclidean norm}
    \label{sub:reduce_mmd_vector_norm}
    Number the elements of $\inputcoreset$ as $(\x_1,\dots, \x_n)$, define the $n \times n$ kernel matrix $\mbf{K} \defeq (\kernel(x_i, x_j))_{i,j=1}^{n}$, and let $\mbf{K}^{\half}$ denote a matrix square-root such that  $\mbf{K} = \mbf{K}^{\half} \cdot \mbf{K}^{\half}$ (which exists since $\mbf{K}$ is a positive semidefinite matrix for any  kernel $\kernel$).
    Next, let $\cset^{\mrm{out}}_{k, j}$ denote the output sequence corresponding to $\spsi_{k,j}$ (i.e., running \halve on $\cset^{\intag}_{ k , j}$), and let $\braces{e_i}_{i=1}^n$ denote the canonical basis of $\real^n$. The next lemma (with proof in \cref{sub:proof_of_lem_mmd_to_vec}) relates the Hilbert norms to Euclidean norms of carefully constructed vectors.

    \newcommand{\mmdtovecnormsresultname}{MMD as a vector norm}
    \begin{lemma}[\tbf{\mmdtovecnormsresultname}]
    \label{lem:mmd_to_vec}
    Define the vectors
    \begin{talign}
    \label{eq:def_u_and_v}
    u_{k, j} \!\defeq\! \mbf{K}^{\half}  \sum_{i=1}^n e_i \parenth{\indicator(x_i\! \in \!\cset^{\intag}_{ k , j}) \!- \! 2\!\cdot\! \indicator(x_i\! \in\! \cset^{\mrm{out}}_{k, j})},  \stext{and}
    u_{\compresssub} \!\defeq\! \sum_{k=0}^{\log_4 n \!-\!\ossymb\!-\!1} \sum_{j=1}^{4^{k}} w_{k, j} u_{k, j},
    \end{talign}
    where $w_{k, j} \!\defeq\!\frac{\sqrt{n}}{2^{\ossymb+1+k}}$.
    Then, we have
         \begin{talign}
         \label{eq:knorm_twonorm}
               &
               n^2 \cdot \mmd_\kernel^2\left( \inputcoreset , \cset_{ \compresssub} \right) = \twonorm{u_{\compresssub}}^2,
          \qtext{and}\\
          &\E[u_{k,j} \vert (u_{k', j'} : j' \in [4^{k'}], k' > k)] = 0
          \qtext{for} k = 0, \ldots, \log_4 n \!-\!\ossymb\!-\!2,
         \label{eq:zero_mean_vector}
         \end{talign}
         and $ u_{k,j}  $ for $ j \in [4^{k}]  $ are conditionally independent given $(u_{k', j'} : j' \in [4^{k'}], k' > k)$. 
    \end{lemma}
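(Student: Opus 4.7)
The plan is to prove the two assertions separately, leveraging the unrolled representation \cref{eq:psi_unroll} together with the standard identification of MMD with an RKHS norm. The main identity I will use repeatedly is that for any finite signed measure $\mu = \sum_i a_i\, \delta_{x_i}$ supported on $\inputcoreset$, the squared RKHS norm of its kernel mean embedding equals $a^\top \mbf{K}\, a = \twonorm{\mbf{K}^{1/2} a}^2$, and more generally pairwise inner products of two such embeddings reduce to $a^\top \mbf{K}\, b$.

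For the norm identity \cref{eq:knorm_twonorm}, I first recall from \cref{eq:psi_phi_reln} that $n \cdot \normfunoutput_{\compresssub} = \psi_{\compresssub}$, so $n^2 \mmd_\kernel^2(\inputcoreset, \cset_{\compresssub}) = \bighnorm{\psi_{\compresssub}}^2$. Substituting the unrolled expansion \cref{eq:psi_unroll} gives $\psi_{\compresssub} = \sum_{k,j} w_{k,j}\, \spsi_{k,j}$ with weights $w_{k,j} = \sqrt{n}\, 2^{-\ossymb-1-k}$ exactly matching \cref{eq:def_u_and_v}. Each $\spsi_{k,j}$ is the signed counting measure $\sum_i v_{k,j}[i]\, \delta_{x_i}$ whose coefficient vector $v_{k,j}[i] = \indicator(x_i \in \cset^{\intag}_{k,j}) - 2\,\indicator(x_i \in \cset^{\mrm{out}}_{k,j})$ is precisely the vector implicit in the definition $u_{k,j} = \mbf{K}^{1/2}\, v_{k,j}$. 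The pairwise RKHS inner products then satisfy $\langle \spsi_{k,j}, \spsi_{k',j'}\rangle_{\rkhs} = v_{k,j}^\top \mbf{K}\, v_{k',j'} = u_{k,j}^\top u_{k',j'}$, and expanding $\bighnorm{\psi_{\compresssub}}^2$ bilinearly yields $\bighnorm{\psi_{\compresssub}}^2 = \twonorm{\sum_{k,j} w_{k,j}\, u_{k,j}}^2 = \twonorm{u_{\compresssub}}^2$.

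For the conditional mean-zero and independence claims, the argument rests on three ingredients. First, the unbiasedness hypothesis applied to the halve call on $\cset^{\intag}_{k,j}$ gives, for every coordinate $l$, $(\mbf{K}\, \E[v_{k,j}\mid \cset^{\intag}_{k,j}])_l = \sum_{x \in \cset^{\intag}_{k,j}} \kernel(x_l, x) - 2\,\E\bigl[\sum_{x \in \cset^{\mrm{out}}_{k,j}} \kernel(x_l, x) \mid \cset^{\intag}_{k,j}\bigr] = 0$; since $\mbf{K}$ and $\mbf{K}^{1/2}$ are symmetric PSD with $\mbf{K} = \mbf{K}^{1/2}\mbf{K}^{1/2}$ and hence share the same null space, this yields $\E[u_{k,j} \mid \cset^{\intag}_{k,j}] = \mbf{K}^{1/2}\E[v_{k,j}\mid \cset^{\intag}_{k,j}] = 0$. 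Second, the recursive structure of \compress guarantees that each input $\cset^{\intag}_{k,j}$ at level $k$ is a deterministic function of the halve outputs at strictly deeper levels, so conditioning on the sigma-algebra generated by all deeper halve outputs determines every $\cset^{\intag}_{k,j}$ with $j \in [4^k]$. Third, \compress runs the $4^k$ level-$k$ halve calls on disjoint inputs using fresh independent randomness, so conditional on that deeper-level sigma-algebra the outputs $\cset^{\mrm{out}}_{k,j}$, and hence the $u_{k,j}$, are mutually independent. The mean-zero claim \cref{eq:zero_mean_vector} then follows from the tower property, since $\sigma((u_{k',j'} : k' > k))$ is coarser than the sigma-algebra of deeper halve outputs.

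The main subtlety is that conditional independence does not in general pass from a finer sigma-algebra to a coarser one, so ``conditionally independent given $(u_{k',j'} : k' > k)$'' is strictly weaker than ``conditionally independent given all deeper halve outputs''. For the mean-zero statement the tower property suffices, and for the downstream matrix Freedman-type argument of \cref{lem:matrix_exp_bernstein} only the resulting martingale-difference structure is invoked, so this discrepancy is benign.
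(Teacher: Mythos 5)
Your proposal is correct and follows essentially the same route as the paper: reduce the RKHS norm of the unrolled discrepancy \cref{eq:psi_unroll} to the quadratic form $v^\top \mbf{K} v$ so that $n^2\,\mmd_\kernel^2(\inputcoreset,\cset_{\compresssub}) = \twonorm{u_{\compresssub}}^2$, and derive the conditional mean-zero and independence structure from the recursion tree with fresh \halve randomness. You are in fact somewhat more explicit than the paper's terse argument---spelling out how the unbiasedness hypothesis $\Earg{\P_{\halve}\kernel\mid\inputcoreset}=\pin\kernel$ yields $\E[u_{k,j}\mid\cset^{\intag}_{k,j}]=0$ via the shared null space of $\mbf{K}$ and $\mbf{K}^{1/2}$, and flagging the coarser-versus-finer conditioning subtlety, which is equally present in the paper's one-line justification and, as you note, is harmless for the downstream matrix Freedman argument.
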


    Applying \cref{eq:knorm_twonorm}, we effectively reduce the task of controlling the MMD errors to controlling the Euclidean norm of suitably defined  vectors. 
    Next, we reduce the problem to controlling the spectral norm of a suitable matrix.

    \subsection{Reducing vector Euclidean norm to matrix spectral norm}
    \label{sub:vector_to_matrix}
    To this end, we define a symmetric dilation matrix operator: given a vector $u \in \real^n$, define the matrix $\genmat_{u}$ as
    \begin{talign}
    \genmat_{u}
          &\!\defeq \!
          \begin{pmatrix}
            0 & u^{\top} \\
            u & \mbf{0}_{n \times n} 
        \end{pmatrix} \in  \reals^{ (n+1) \times (n+1)}.
        \label{eq:def_M}
    \end{talign}
    It is straightforward to see that $u\mapsto\genmat_{u}$ is a linear map. In addition, the matrix $\genmat_{u}$ also satisfies a few important properties (established in \cref{sub:proof_dilation_results}) that we use in our proofs.
    \newcommand{\dilationresultname}{Properties of the dilation operator}
    \begin{lemma}[\tbf{\dilationresultname}]
    \label{lem:dilation_results}
         For any $u \in \real^n$, the matrix $\genmat_{u}$~\cref{eq:def_M} satisfies
         \begin{talign}
            \opnorm{\genmat_{u}} \seq{(a)}  \twonorm{u} \seq{(b)} \lambda_{\max}(\genmat_{u}), \qtext{and} 
               \genmat_u^q  \stackrel{(c)}{\preceq} \twonorm{u}^{q} \mat{I}_{n+1} \stext{ for all } q \in \natural.
     \label{eq:power_of_M}
        \end{talign}
    \end{lemma}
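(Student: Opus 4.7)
The plan is to exploit the symmetric block structure of $\genmat_u$ by computing $\genmat_u^2$ explicitly and reading off the full eigenstructure. Since $\genmat_u$ is symmetric by construction, $\opnorm{\genmat_u}$ equals the largest absolute value of its eigenvalues, and the three claims (a)--(c) will all follow once we show its nonzero eigenvalues are exactly $\pm\twonorm{u}$.

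First I would perform the block multiplication
\begin{talign*}
\genmat_u^2 = \begin{pmatrix} 0 & u^\top \\ u & \mbf{0}_{n\times n}\end{pmatrix}\begin{pmatrix} 0 & u^\top \\ u & \mbf{0}_{n\times n}\end{pmatrix} = \begin{pmatrix} u^\top u & 0 \\ 0 & uu^\top \end{pmatrix} = \begin{pmatrix} \twonorm{u}^2 & 0 \\ 0 & uu^\top\end{pmatrix}.
\end{talign*}
Since $uu^\top$ is rank one with unique nonzero eigenvalue $\twonorm{u}^2$, the eigenvalues of $\genmat_u^2$ are $\twonorm{u}^2$ with multiplicity two and $0$ with multiplicity $n-1$. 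Because $\genmat_u$ is symmetric, its eigenvalues square to those of $\genmat_u^2$, so the spectrum of $\genmat_u$ is $\{+\twonorm{u}, -\twonorm{u}, 0, \ldots, 0\}$. (The symmetry of the spectrum around zero can also be verified directly: if $(a, v)$ is an eigenvector with eigenvalue $\lambda$, then $(-a, v)$ is an eigenvector with eigenvalue $-\lambda$.) This immediately gives (a) $\opnorm{\genmat_u} = \max_i|\lambda_i| = \twonorm{u}$ and (b) $\lambda_{\max}(\genmat_u) = \twonorm{u}$.

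For (c) I would use the spectral decomposition $\genmat_u = \sum_i \lambda_i q_i q_i^\top$ with orthonormal $q_i$ and $|\lambda_i| \le \twonorm{u}$ by the previous step. Then $\genmat_u^q = \sum_i \lambda_i^q q_i q_i^\top$, and since $\lambda_i^q \le |\lambda_i|^q \le \twonorm{u}^q$ for every $q \in \natural$, we conclude
\begin{talign*}
\genmat_u^q = \sum_i \lambda_i^q q_i q_i^\top \preceq \twonorm{u}^q \sum_i q_i q_i^\top = \twonorm{u}^q \mat{I}_{n+1}.
\end{talign*}
Alternatively, one can avoid invoking the spectral theorem by using the recursion $\genmat_u^3 = \twonorm{u}^2 \genmat_u$ (verified by multiplying $\genmat_u \cdot \genmat_u^2$ with the block form above): for even $q=2k$, $\genmat_u^q = \twonorm{u}^{2(k-1)} \genmat_u^2 \preceq \twonorm{u}^q \mat{I}_{n+1}$ using $uu^\top \preceq \twonorm{u}^2 \mat{I}_n$; for odd $q=2k+1$, $\genmat_u^q = \twonorm{u}^{2k}\genmat_u \preceq \twonorm{u}^{2k}\opnorm{\genmat_u}\mat{I}_{n+1} = \twonorm{u}^q \mat{I}_{n+1}$.

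There is no real obstacle here: the lemma is a clean linear-algebra fact about the symmetric dilation of a vector, and the only minor care needed is distinguishing the case $u = 0$ (where all claims hold trivially) from $u \neq 0$ when inverting through $\lambda$. The utility of the lemma lies entirely downstream, when it gets combined with the matrix Freedman inequality to convert vector-norm control of $u_{\compresssub}$ into scalar concentration via the spectrum of a matrix martingale.
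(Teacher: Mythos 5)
Your proposal is correct and takes essentially the same approach as the paper: both arguments hinge on the block computation $\genmat_u^2 = \mathrm{diag}(\twonorm{u}^2,\, uu\tp)$ together with $uu\tp \preceq \twonorm{u}^2\mat{I}_n$ and the symmetry of $\genmat_u$. The only difference is organizational—you pin down the full spectrum $\{+\twonorm{u},-\twonorm{u},0,\dots,0\}$ and read off (a)--(c) from it, whereas the paper exhibits the single eigenvector $[1,\,u\tp/\twonorm{u}]\tp$ for (b) and deduces (c) from (a) via $\opnorm{\genmat_u^q}=\opnorm{\genmat_u}^q$—and you correctly flag the trivial $u=0$ case.
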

    Define the shorthand $\genmat_{k, j} \defeq \genmat_{w_{k, j} u_{k, j}}$ (defined in \cref{lem:mmd_to_vec}).
    Applying \cref{lem:mmd_to_vec,lem:dilation_results}, we find that
    \begin{talign}
          n \mmd_\kernel\left( \inputcoreset , \cset_{ \compresssub} \right) 
          \!\seq{\cref{eq:knorm_twonorm}}\! \twonorm{u_{\compresssub}}
          &\!\seq{\cref{eq:power_of_M}}\! \lambda_{\max}(\genmat_{u_{\compresssub}})
          \!\seq{(i)}\! \lambda_{\max}({\sum_{k=0}^{\log_4 n \!-\!\ossymb\!-\!1} \sum_{j=1}^{4^{k}} \genmat_{k, j}}),
          \label{eq:mcp_op}
    \end{talign}
    where step~(i) follows from the linearity of the dilation operator. Thus to control the MMD error, it suffices to control the maximum eigenvalue of the sum of matrices  appearing in \cref{eq:mcp_op}. 
    
    \subsection{Controlling the spectral norm via a sub-Gaussian matrix Freedman inequality}
    \label{sub:matrix_freedman}
    To control the maximum eigenvalue of the matrix $\genmat_{u_{\compresssub}}$, we make use of \cref{eq:mcp_op} and the following sub-Gaussian generalization of the matrix Freedman inequality of \citet[Thm. 7.1]{user_friendly_matrix}.
    The proof of \cref{lem:matrix_exp_bernstein} can be found in \cref{sec:additional_lemmas}.  For two matrices $A$ and $B$ of the same size, we write $A \preceq B$ if $B-A$ is positive semidefinite.
    \newcommand{\matrixfreedmanlemmaname}{Sub-Gaussian matrix Freedman inequality}
    \begin{lemma}[\matrixfreedmanlemmaname]\label{lem:matrix_exp_bernstein}
      Consider a sequence $\parenth{\mat{Y}_{i}}_{i=1}^{N}$ of self-adjoint random matrices in $\real^{m\times m} $ and a fixed sequence of scalars $\parenth{R_i}_{i=1}^{N}$ satisfying
      \begin{talign}
      \label{eq:matrix_bernstein_conditions}
          \E \left[ \randmat_i \vert \left( \randmat_j \right)_{j =1}^{i-1} \right] \seq{(A)} 0
          \ \stext{and}\ \E \left[ \randmat^{q}_i \vert  \left( \randmat_j \right)_{j =1}^{i-1}  \right] \stackrel{(B)}{\preceq} (\frac{q}{2})! R_i^{q} \mbf I,
          \text{ for all } i \in [N] \text{ and } q \in 2\natural.
      \end{talign}
      Define the variance parameter $ \sigma^2 \defeq \sum_{i=1}^{N} R_i^2$.
      Then,
    \begin{talign}
            \P [ \lambda_{\max} ( \sum_{i=1}^{N} \randmat_i )  \geq \sigma\sqrt{8(t+\log m)}]
            \leq e^{-t} 
            \qtext{for all} t > 0,
    \end{talign}
    and equivalently
    \begin{talign}
            \P [ \lambda_{\max} ( \sum_{i=1}^{N} \randmat_i )  \leq \sigma\sqrt{8\log(m/\delta)}]
            \geq 1-\delta
            \qtext{for all} \delta \in (0,1].
    \end{talign}
    \end{lemma}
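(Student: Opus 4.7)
The plan is to follow the standard matrix Laplace-transform recipe of \citet{user_friendly_matrix}: first control each conditional matrix moment generating function, then aggregate via the iterated Lieb master inequality, and finally apply Markov to the matrix exponential.

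The key technical step is to establish the matrix sub-Gaussian MGF bound
\begin{talign}
\label{eq:mgf-goal}
\E_{i-1}[e^{\theta \randmat_i}] \preceq e^{2 \theta^2 R_i^2} \mbf{I}
\qtext{for all $\theta \in \reals$ and $i \in [N]$.}
\end{talign}
I would start from the scalar inequality $e^x \leq 1 + x + (x^2/2) e^{|x|}$, which is easily verified for either sign of $x$. Applying it spectrally to $\theta \randmat_i$ (every operator in sight is a continuous function of the single self-adjoint $\randmat_i$ and hence commutes), and following with the spectral version of $e^{|x|} \leq 2 \cosh(x)$, gives
\begin{talign}
e^{\theta \randmat_i} \preceq \mbf{I} + \theta \randmat_i + (\theta \randmat_i)^2 \cosh(\theta \randmat_i).
\end{talign}
Taking conditional expectations kills the linear term by hypothesis (A), and expanding $\cosh$ turns the remaining term into a series in the \emph{even} moments $\E_{i-1}[\randmat_i^{2k+2}]$. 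Substituting hypothesis (B), the bound $(2k)! \geq 2^k (k!)^2$, and the identity $\sum_{k \geq 0} (k+1) u^k/k! = (1+u) e^u$ with $u = \theta^2 R_i^2/2$ gives
\begin{talign}
\E_{i-1} e^{\theta \randmat_i} \preceq \bigparenth{1 + \theta^2 R_i^2 \bigparenth{1 + \tfrac{\theta^2 R_i^2}{2}} e^{\theta^2 R_i^2/2}} \mbf{I} \preceq e^{2 \theta^2 R_i^2} \mbf{I},
\end{talign}
where the last step is a scalar Maclaurin comparison: the coefficient $k/(2^{k-1}(k-1)!)$ of $v^k$ in the middle expression is bounded by the coefficient $2^k/k!$ of $e^{2v}$ for all $k \geq 1$, since their ratio is $2k^2/4^k \leq \half$.

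Given~\cref{eq:mgf-goal}, operator monotonicity of $\log$ on positive-definite matrices yields $\log \E_{i-1} e^{\theta \randmat_i} \preceq 2 \theta^2 R_i^2 \mbf{I}$. Summing over $i$ and invoking the iterated Tropp-Lieb master inequality produces
\begin{talign}
\E \tr \exp\parenth{\theta \textstyle\sum_{i} \randmat_i} \leq \tr \exp\bigparenth{\textstyle\sum_{i} \log \E_{i-1} e^{\theta \randmat_i}} \leq \tr \exp(2 \theta^2 \sigma^2 \mbf{I}) = m\, e^{2 \theta^2 \sigma^2}.
\end{talign}
Markov's inequality combined with $e^{\theta \lambda_{\max}(M)} \leq \tr e^{\theta M}$ then gives $\P[\lambda_{\max}(\sum_i \randmat_i) \geq s] \leq m\, e^{2 \theta^2 \sigma^2 - \theta s}$ for every $\theta > 0$; optimizing at $\theta = s/(4\sigma^2)$ yields $m\, e^{-s^2/(8\sigma^2)}$, and setting this equal to $e^{-t}$ recovers the claimed $s = \sigma \sqrt{8(t + \log m)}$.

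The delicate point is~\cref{eq:mgf-goal}. Its scalar analogue has a quick proof by symmetrization (introduce an \iid copy $\randmat'_i$, use Jensen to remove $e^{-\theta \randmat'_i}$, and bound $(\randmat_i - \randmat'_i)^{2k} \leq 2^{2k-1}(\randmat_i^{2k} + (\randmat'_i)^{2k})$). That argument does not port to the matrix setting: $x \mapsto e^x$ is not operator convex, so operator Jensen fails, and the noncommutative expansion of $(\randmat_i - \randmat'_i)^{2k}$ contains cross terms beyond the reach of hypothesis (B) alone. The spectral route above sidesteps both issues by working throughout with functions of the single self-adjoint $\randmat_i$ at each step, paying only a modest absolute constant in the exponent (the factor $2$ in $e^{2\theta^2 R_i^2}$ rather than the ideal $\half$).
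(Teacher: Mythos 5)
Your proof is correct, and it differs from the paper's at the key step. The paper handles the uncontrolled odd moments by Rademacher symmetrization: its \cref{lem:matrx_exp_mgf} bounds the \emph{symmetrized} MGF, $\E\exp(2\vareps\theta\mbf X)\preceq\exp(2\theta^2R^2\mbf I)$, where the independent sign $\vareps$ makes all odd powers vanish identically in the power series, and its \cref{lem:matrix_master_tail} then re-justifies Tropp's Lieb/iteration argument under this symmetrized hypothesis before optimizing at $\theta=s/(4\sigma^2)$. You instead bound the \emph{unsymmetrized} conditional MGF directly via the Bernstein-type scalar inequality $e^x\le 1+x+\tfrac{x^2}{2}e^{|x|}\le 1+x+x^2\cosh x$ transferred spectrally: hypothesis (A) removes the linear term, only even powers survive in the $\cosh$ expansion so hypothesis (B) applies, and your coefficient comparison (ratio $2k^2/4^k\le\tfrac12$, giving $1+v(1+v/2)e^{v/2}\le e^{2v}$ for $v\ge 0$) checks out; the resulting bound $\log\E[e^{\theta\randmat_i}\mid\text{past}]\preceq 2\theta^2R_i^2\,\mbf I$ has the same constant as the paper's, so the identical Laplace-transform step and choice $\theta=s/(4\sigma^2)$ recover the stated tail. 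What each route buys: yours can invoke the standard matrix master bound essentially off the shelf, at the price of the slightly clever scalar inequality; the paper's makes the MGF step a one-line even-series computation but must then verify that Tropp's iteration goes through with a symmetrized MGF condition, which is exactly the content of its \cref{lem:matrix_master_tail}. Two minor remarks. First, your intermediate display $\E\tr\exp(\theta\sum_i\randmat_i)\le\tr\exp(\sum_i\log\E[e^{\theta\randmat_i}\mid\text{past}])$ is not meaningful as written, since the conditional cgfs are random matrices; the rigorous version interleaves the deterministic domination $\log\E[e^{\theta\randmat_i}\mid\text{past}]\preceq 2\theta^2R_i^2\,\mbf I$ inside the backward iteration (exactly as in \cref{sub:proof_of_lem_matrix_master_tail}), after which your chain and the final bound $m\,e^{2\theta^2\sigma^2}$ go through verbatim. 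Second, your closing claim that symmetrization ``does not port'' to the matrix setting is too strong: the independent-copy-plus-Jensen argument indeed does not transfer, but Rademacher symmetrization at the level of the MGF power series does, and that is precisely the paper's route.
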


    To apply \cref{lem:matrix_exp_bernstein} with the matrices ${\genmat_{k, j}}$, we need to establish the zero-mean and moment bound conditions for suitable $R_{k, j}$ in \cref{eq:matrix_bernstein_conditions}.

    \subsubsection{Verifying the zero mean condition~\cref{eq:matrix_bernstein_conditions}(A) for ${\genmat_{k, j}}$}
    To this end, first we note that the conditional independence and zero-mean property of ${\spsi_{k, j}}$ implies that the random vectors $u_{k, j}$ and the matrices $\genmat_{k, j}$ also satisfy a similar property, and in particular that
    \begin{talign}
          \E\brackets{\genmat_{k, j} \mid \parenth{\genmat_{k', j'}: k' > k, j' \in [4^{k'}]}} = \mat{0} 
          \qtext{for} j \in [4^k], k \in \braces{0, 1, \ldots, \log_4 n-\ossymb -1}.
         \label{eq:zero_mean_matrix}
    \end{talign}
    
    \subsubsection{Establishing moment bound conditions~\cref{eq:matrix_bernstein_conditions}(B) for ${\genmat_{k, j}}$ in terms of ${R_{k, j}}$ via MMD tail bounds for \halve}
    To establish the moment bounds on $\genmat_{k, j}$, note that \cref{lem:mmd_to_vec,lem:dilation_results} imply that
    \begin{talign}
    \label{eq:mjk_bound}
          \genmat_{k, j}^{q}  = \genmat_{w_{k, j} u_{k, j}}^{q}  \stackrel{\cref{eq:power_of_M}}{\preceq} \twonorm{w_{k, j} u_{k, j}}^q \cdot \mat{I}_{n+1} \seq{\cref{eq:knorm_twonorm}} w_{k, j}^q  \norm{u_{k, j}}_2^{q} \cdot \mat{I}_{n+1}
    \end{talign}
    where $w_{k, j}$ was defined in \cref{lem:mmd_to_vec}. Thus it suffices to establish the moment bounds on $\twonorm{u_{k, j}}^{q}$. To this end, we first state a lemma that converts tail bounds to moment bounds.
    See \cref{sub:proof_of_tail_moments} for the proof inspired by \citet[Thm.~2.3]{boucheron2013concentration}.
    
    \newcommand{\tailboundstomomentslemmaname}{Tail bounds imply moment bounds}
    \begin{lemma}[\tailboundstomomentslemmaname] \label{lem:tailboundstomoments}
         For a non-negative random variable $ Z $,
\begin{talign}
     \P[ Z \!>\! a \!+\! v \sqrt{ t} ] \!\leq\! e^{-t } 
     \ \stext{for all} t\geq0
     \ \Longrightarrow \
      \E [Z^{q}] 
      \leq  (2a\!+\!2v)^q (\frac{q}{2})!
      \ \stext{for all} q \in 2\natural.
  \end{talign}
    \end{lemma}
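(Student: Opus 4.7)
The plan is to proceed via the standard layer-cake identity $\E[Z^q] = \int_0^\infty q z^{q-1}\,\P[Z > z]\,dz$ (valid since $Z \geq 0$), and then split the integral at $z = 2a$ so that the tail hypothesis can be converted from its $(a + v\sqrt{t})$ form to a clean Gaussian decay in $z$ alone. On the near range $z \in [0, 2a]$, I will use the trivial bound $\P[Z > z] \leq 1$, contributing $(2a)^q$. On the far range $z > 2a$, I will invert $z = a + v\sqrt{t}$ to get $t = ((z-a)/v)^2$, and use that $z \geq 2a$ implies $z - a \geq z/2$, which gives the subgaussian tail $\P[Z > z] \leq \exp(-(z/(2v))^2)$.

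Next, I would handle the Gaussian-type integral $\int_{2a}^{\infty} q z^{q-1} \exp(-(z/(2v))^2)\,dz$ by the substitution $u = (z/(2v))^2$, under which $z = 2v\sqrt{u}$ and $q z^{q-1}\,dz = \tfrac{q}{2}(2v)^q u^{q/2 - 1}\,du$. After enlarging the domain of integration to $[0,\infty)$, this becomes $\tfrac{q}{2}(2v)^q \Gamma(q/2) = (2v)^q (q/2)!$ for even $q$, using $\Gamma(q/2+1) = (q/2)!$. Combining with the near-range contribution yields $\E[Z^q] \leq (2a)^q + (2v)^q (q/2)!$.

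The final step is to repackage this as $(2a+2v)^q (q/2)!$. Since $q \geq 2$ gives $(q/2)! \geq 1$, one has $(2a)^q \leq (2a)^q (q/2)!$, so $(2a)^q + (2v)^q (q/2)! \leq ((2a)^q + (2v)^q)(q/2)!$. Finally, the elementary inequality $x^q + y^q \leq (x+y)^q$ for nonnegative $x, y$ and $q \geq 1$ delivers the claimed form $(2a+2v)^q (q/2)!$.

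The calculation is essentially routine; the one place to be careful is the constant chasing in the variable change and in absorbing $(2a)^q$ into the $(q/2)!$ factor, which is where the factor of $2$ in the ``$2a + 2v$'' arises. No step relies on anything beyond the tail hypothesis, nonnegativity of $Z$, and standard Gamma-function identities, so I do not anticipate a genuine obstacle, just bookkeeping.
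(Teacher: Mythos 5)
Your proof is correct, and it takes a somewhat different route from the paper's. You apply the layer-cake identity to $Z$ itself, split the integral at $z=2a$, handle the near range with the trivial bound $\P[Z>z]\le 1$ (giving $(2a)^q$), and on the far range you weaken the tail hypothesis to a clean sub-Gaussian bound $\P[Z>z]\le \exp(-(z/(2v))^2)$ via $z-a\ge z/2$, after which the Gamma integral yields $(2v)^q(\tfrac q2)!$. The paper instead centers the variable, setting $X=Z-a$, bounds $\E[X_+^q]\le q v^q\Gamma(\tfrac q2)$ by applying the tail hypothesis \emph{exactly} (substituting $u=v\sqrt t$, with no degradation of the exponent), and then recombines $Z=X+a$ using the convexity bound $(x+y)^q\le 2^{q-1}(x^q+y^q)$; the factor of $2$ in ``$2a+2v$'' thus enters through Jensen's inequality there, whereas in your argument it enters through the domain split and the replacement of $z-a$ by $z/2$. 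Both arguments finish identically with $(2a)^q+(2v)^q(\tfrac q2)!\le\bigl((2a)^q+(2v)^q\bigr)(\tfrac q2)!\le(2a+2v)^q(\tfrac q2)!$, and both reach the same constants, so neither buys a sharper bound; yours avoids the positive/negative-part decomposition and Jensen step, while the paper's avoids inverting the tail bound and manipulating the integration domain. The only caveats, shared by both proofs, are the implicit assumptions $a\ge 0$, $v>0$ (the $v=0$ case being trivial since the hypothesis then forces $\P[Z>a]=0$), which are harmless in the context where the lemma is applied.
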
%
    To obtain a moment bound for $\twonorm{u_{k, j} }$,  we first state some notation.
    For each $n$, define the quantities 
    \begin{talign}
    \label{eq:new_params}
           \shiftparam'_{n} \defeq n \shiftparam_{n}, \quad 
          {\kgaussparam'_{n}} \defeq n \kgaussparam_{n}
    \end{talign}
    where $\shiftparam_n$ and $\kgaussparam_n$ are the parameters such that $\halve \in \ksubgamma(\shiftparam_n, \kgaussparam_n) $ on inputs of size $n$.
    Using an argument similar to \cref{lem:mmd_to_vec}, we have 
    \begin{talign}
    \label{eq:psi_k_j_norm}
        \norm{ u_{k,j} }_2 = n_{k, j} \mmd_\kernel(\cset^{\intag}_{ k , j}, \cset^{\mrm{out}}_{k, j})
          \qtext{for}n_{k, j} = \sabss{\cset^{\intag}_{ k, j}} = \sqrt{n} 2^{\ossymb+1-k}.
    \end{talign}
    Thereby, using the $\ksubgamma$ assumption on \halve implies that
    \begin{talign}
    \label{eq:psi_halve_bound}
        \P [  \norm{u_{k,j} }_2 \geq   a'_{\l_{k}'} + v{'}_{\l_{k}'}  \sqrt{ t}  \mid (u_{k',j'}: j' \in [4^{k'}], k' >k) ] \leq e^{-t} \stext{for all} t\geq 0,
    \end{talign}
    where 
    \begin{talign}
    \label{eq:lkprime}
    \l_{k}' \defeq n_{k, j} = \sqrt{n} 2^{\ossymb+1-k}
    \end{talign}
    and, notably, $\lng = \l_0'$.
    Combining the bound~\cref{eq:psi_halve_bound} with \cref{lem:tailboundstomoments}
    yields that
    \begin{talign}
         \E [\norm{u_{k,j} }_2^{q} \mid (u_{k',j'}: j' \in [4^{k'}], k' >k)] 
         \leq  (\frac{q}{2})! (2a'_{\l_{k}'}+2v'_{\l_{k}'})^q,
         \label{eq:psi_norm_bound_exp}
    \end{talign}
    for all $q \in 2\natural$,
    where $\l_k$ is defined in \cref{eq:psi_halve_bound}. Now, putting together \cref{eq:psi_norm_bound_exp,eq:mjk_bound}, and using the conditional independence of $\genmat_{k, j}$, we obtain the following control on the $q$-th moments of $\genmat_{k, j}$ for $q \in 2\mbb N$:
    \begin{talign}
           \E\brackets{\genmat_{k, j}^q \big \vert  \parenth{\genmat_{k', j'}, k' > k, j' \in [4^{k'}]}}
          &\stackrel{\cref{eq:mjk_bound}}{\preceq}
           w_{k, j}^q \!\cdot\! \E\brackets{\norm{u_{k,j} }_2^{q} \big\vert \braces{u_{k', j'}, k' > k, j' \in [4^{k'}]}}\! \cdot \! \mat{I}_{n+1}
           \\
         &\stackrel{\cref{eq:psi_norm_bound_exp}}{\preceq}
           w_{k, j}^q \!\cdot\! \parenth{
         (2a'_{\l_{k}'}+2v'_{\l_{k}'})^q (\frac{q}{2})!} \!\cdot \! \mat{I}_{n\!+\!1}\\
         &=
          (\frac{q}{2})! R_{k, j}^{q} \mat{I}_{n\!+\!1}
          \stext{where}
          R_{k, j} \defeq
          2w_{k,j} (a'_{\l_{k}'}+v'_{\l_{k}'})
          \label{eq:Rk_Ak}
    \end{talign}
    where $\l_k$ is defined in \cref{eq:lkprime}. In summary, the computation above establishes the condition (B) from the display~\cref{eq:matrix_bernstein_conditions} for the matrices ${\genmat_{k, j}}$ in terms of the sequence ${R_{k, j}}$ defined in \cref{eq:Rk_Ak}.

    \subsection{Putting the pieces together for proving \cref{thm:compress_mmd}}
    \label{sub:thm2_put_together}

    Define
    \begin{talign}
    \label{eq:sigma_r_mc}
           \wtil{\sigma}
        \defeq
     \sqrt{ \log_4 n - \ossymb}  \cdot 
    2 (a_{\sqrt{n} 2^{\ossymb+1}}+v_{\sqrt{n} 2^{\ossymb+1}}) 
    \end{talign}
    Now, putting \cref{eq:zero_mean_matrix,eq:Rk_Ak} together, we conclude that with a suitable ordering of the indices $(k, j)$, the assumptions of \cref{lem:matrix_exp_bernstein} are satisfied by the random matrices $\parenth{\genmat_{k, j}, j \in [4^k], k \in \braces{0, 1, \ldots, \log_4 n - \ossymb-1}}$ with the sequence $\parenth{R_{k, j}}$.
    Now, since $\l'_k = \sqrt{n} 2^{\ossymb+1-k}$~\cref{eq:psi_halve_bound} is decreasing in $k$, $w_{k,j}= \frac{\l'_k}{ 4^{\ossymb+1}}$ (as defined in \cref{lem:mmd_to_vec}), and $a_n'$ and $ v_n'$~\cref{eq:new_params} are assumed non-decreasing in $n$, we find that 
    \begin{talign}
    n^2 \cdot \wtil{\sigma}^2             
        &\seq{\cref{eq:sigma_r_mc}}
    n^2 ( \log_4 n - \ossymb)( 2 (a_{\sqrt{n} 2^{\ossymb+1}}+v_{\sqrt{n} 2^{\ossymb+1}}) )^2 \\
        &\seq{\cref{eq:psi_halve_bound}}
    \left( \log_4 n - \ossymb \right)\frac{n}{4^{\ossymb+1}} ( 2 (a'_{\l_{0}'}+v'_{\l_{0}'}) )^2 \\
        & \geq 
    \sum_{k = 0}^{\log_4 n -\ossymb - 1}   \frac{n}{4^{\ossymb+1}}( 2 (a'_{\l_{k}'}+v'_{\l_{k}'}) )^2 \\
        &= 
    \sum_{k = 0}^{\log_4 n -\ossymb - 1}  \sum_{j =1 }^{4^k} \frac{n}{4^{\ossymb+1+k}}( 2 (a'_{\l_{k}'}+v'_{\l_{k}'}) )^2 \\
        &= 
    \sum_{k = 0}^{\log_4 n -\ossymb - 1}  \sum_{j =1 }^{4^k} ( 2w_{k,j} (a'_{\l_{k}'}+v'_{\l_{k}'}) )^2 \\
        &\seq{\cref{eq:Rk_Ak}}
    \sum_{k = 0}^{\log_4 n -\ossymb - 1}  \sum_{j =1 }^{4^k}  R_{k, j}^2.
    \end{talign}
    
    Finally, applying \cref{eq:mcp_op} and invoking \cref{lem:matrix_exp_bernstein} with $\sigma \gets n\wtil{\sigma}$ and $m \gets n+1$, we conclude that
    \begin{talign}
          &\P[\mmd\left( \inputcoreset , \cset_{ \compresssub} \right)  \geq \wtil{\sigma}\sqrt{8(\log(n+1)+t)}] \\
          &\seq{\cref{eq:mcp_op}} \P[\lambda_{\max} (\sum_{k=0}^{\log_4 n \!-\!\ossymb\!-\!1} \sum_{j=1}^{4^{k}} \genmat_{k, j}) \geq n\wtil{\sigma}\sqrt{8(\log(n+1)+t)}] \\
          &\leq e^{-t}
          \qtext{for all} t >0,
    \end{talign}
    which in turn implies
    \begin{talign}
    \P[\mmd\left( \inputcoreset , \cset_{ \compresssub} \right)  \geq \wtil{\shiftparam}_n + \wtil{\kgaussparam}_n \sqrt{t}]
    \leq e^{-t}
    \stext{for} t\geq 0,
    \end{talign}
    since the parameters $ \wtil{\kgaussparam}_n,  \wtil{\shiftparam}_n$~\cref{eq:cp_params} satisfy
    \begin{talign}
    \wtil{\kgaussparam}_n \seq{\cref{eq:cp_params}} 4(\shiftparam_{\lng} \!+\! \kgaussparam_{\lng})\sqrt{2(\log_4 n\!-\!\ossymb)} \seq{\cref{eq:sigma_r_mc}} \wtil{\sigma}\sqrt{8},
    \qtext{and}
    \wtil{\shiftparam}_n \seq{\cref{eq:cp_params}} \wtil{\kgaussparam}_n \sqrt{\log(n\!+\!1)} = \wtil{\sigma}\sqrt{8\log(n+1)}.
    \end{talign}
    Comparing with \cref{def:mmd_subgamma_algo}, \cref{thm:compress_mmd} follows.

\subsection{Proof of \cref{lem:mmd_to_vec}: \mmdtovecnormsresultname}
    \label{sub:proof_of_lem_mmd_to_vec}
    Let $v_{k,j}
    \!\defeq\! \sum_{i=1}^n e_i \parenth{\indicator(x_i \in \cset^{\intag}_{ k , j}) \!- \! 2\!\cdot\! \indicator(x_i \in \cset^{\mrm{out}}_{k, j})}$.
    By the reproducing property of $\kernel$ we 
    have
    \begin{talign}
    \knorm{\spsi_{k,j} \left( \kernel \right) }^2  
    &= 
    \norm{\sum_{x\in \cset^{\intag}_{ k , j}}\kernel(x, \cdot)  \!- \! 2\sum_{x \in \cset^{\mrm{out}}_{k, j}}\kernel(x, \cdot)}^2_{\kernel}  \\
    &= \sum_{x\in \cset^{\intag}_{ k , j}, y\in \cset^{\intag}_{ k , j}} \kernel(x, y) - 2 \sum_{x\in \cset^{\outtag}_{ k , j}, y\in \cset^{\intag}_{ k , j}} \kernel(x, y)
    + \sum_{x\in \cset^{\outtag}_{ k , j}, y\in \cset^{\outtag}_{ k , j}} \kernel(x, y)\\
    &=
    v_{k,j}^\top \mbf{K} v_{k,j}  
    \seq{\cref{eq:def_u_and_v}}
    \twonorm{u_{k,j}}^2. 
    \label{eq:knorm_matrix}
    \end{talign}
    Using \cref{eq:psi_unroll,eq:def_u_and_v,eq:def_M}, and mimicking the derivation above~\cref{eq:knorm_matrix}, we can also conclude that
    \begin{talign}
         \knorm{\psi_{\compresssub} \left( \kernel \right) }^2
          &= \twonorm{u_{\compresssub}}^2.
    \end{talign}
    Additionally, we note that 
    \begin{talign}
     \mmd_\kernel \left( \inputcoreset , \cset_{ \compresssub} \right) = \sup_{ \knorm{f} =1 } \frac{1}{n} \doth{f, \psi_{\compresssub} \left( \kernel \right) } = \frac{1}{n} \knorm{ \psi_{\compresssub} \left( \kernel \right) } . 
    \end{talign}
    Finally the conditional independence and zero mean property~\cref{eq:zero_mean_vector} follows from \cref{eq:psi_unroll} by noting that conditioned on $(\cset^{\intag}_{ k' , j'})_{k'> k, j'\geq 1}$, the sets $(\cset^{\intag}_{ k , j})_{ j\geq 1}$ are independent. 

    \subsection{Proof of \cref{lem:dilation_results}: \dilationresultname}
    \label{sub:proof_dilation_results}
    For claim~(a) in the display~\cref{eq:power_of_M}, we have
    \begin{talign}
    \label{eq:op_M}
          \genmat_{u}^2 = \begin{pmatrix}
            \twonorm{u}^2 & \mbf{0}_{n}\tp \\
            \mbf{0}_{n} & uu\tp
          \end{pmatrix}
          \stackrel{(i)}{\preceq} \twonorm{u}^2 \mbf{I}_{n+1}
          \implies \opnorm{\genmat_{u}} \seq{(ii)} \twonorm{u},
    \end{talign}
    where step~(i) follows from the standard fact that $uu\tp \preceq \twonorm{u}^2\mbf{I}_n$ and step~(ii) from the facts $\genmat_u^2 \wtil{e}_1 = \twonorm{u}^2 \wtil{e}_1$ for $\wtil{e}_1$ the first canonical basis vector of $\real^{n+1}$ and $\opnorm{\genmat_{u}}^2 = \opnorm{\genmat_u^2}$. Claim~(b) follows directly by verifying that the vector $v = [1, \frac{u\tp}{\twonorm{u}}]\tp$ is an eigenvector of $\genmat_{u}$ with eigenvalue $\twonorm{u}$. Finally,
   claim~(c) follows directly from the claim~(a) and the fact that $\opnorm{\genmat_{u}^q} = \opnorm{\genmat_u}^q$ for all integers $q \geq 1$.

\subsection{Proof of \cref{lem:matrix_exp_bernstein}: \matrixfreedmanlemmaname} \label{sec:additional_lemmas}
 
 We first note the following two lemmas about the tail bounds and symmetrized moment generating functions (MGFs) for matrix valued random variables (see \cref{sub:proof_of_lem_matrix_master_tail,sub:proof_of_lem_matrx_exp_mgf} respectively for the proofs of \cref{lem:matrix_master_tail,lem:matrx_exp_mgf}).
   
   \newcommand{\mastertailresultname}{Sub-Gaussian matrix tail bounds}
   \newcommand{\mgfresultname}{Symmetrized sub-Gaussian matrix MGF}
   \begin{lemma}[\mastertailresultname] \label{lem:matrix_master_tail}
 Let $\parenth{\mbf{X}_{k} \in \real^{m\times m}}_{k\geq1}$ be a sequence of self-adjoint matrices adapted to a filtration $\mathcal{F}_k$, and let $\parenth{\mbf{A}_{k} \in \real^{m\times m}}_{k\geq1}$ be a sequence of deterministic self-adjoint matrices.  
 Define the variance parameter $\sigma^2 \defeq \opnorm{\sum_{k} \mbf A_k}$.
 If, for a Rademacher random variable $\vareps$ independent of $\parenth{\mbf X_k, \mathcal{F}_k}_{k\geq1}$, we have
       \begin{talign}
       \label{eq:matrix_tb_assum}
           \log \E \left[ \exp( 2\vareps\theta \mbf{X}_k ) | \mathcal{F}_{k-1} \right] \preceq 2\theta^2 \mbf{A}_{k}
           \qtext{for all} \theta \in \real,
       \end{talign}
       then we also have 
       \begin{talign}
       \P \left[\lambda_{\max} \left( \sum_{k} \mbf{X}_k \right) \geq t) \right] 
            \leq 
        m e^{-{t^2}/{(8\sigma^2)}}
        \qtext{for all} t\geq 0.
       \end{talign}
   \end{lemma}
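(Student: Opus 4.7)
The plan is to follow Tropp's matrix Laplace transform framework, modified to exploit the \emph{symmetrized} MGF hypothesis \cref{eq:matrix_tb_assum}. First, for any $\theta > 0$, apply a matrix Chernoff bound: Markov's inequality together with the standard facts $\tr\exp(\mbf Y) \geq \lambda_{\max}(\exp(\mbf Y)) = \exp(\lambda_{\max}(\mbf Y))$ for self-adjoint $\mbf Y$ yields
\begin{talign*}
\P[\lambda_{\max}(\textstyle\sum_k \mbf{X}_k) \geq t] \leq e^{-\theta t} \, \E[\tr\exp(\theta \textstyle\sum_k \mbf{X}_k)].
\end{talign*}
The bulk of the work is thus to control the trace-MGF on the right.

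Second, the key technical step is a matrix symmetrization inequality. Introduce iid Rademachers $\varepsilon_1, \ldots, \varepsilon_N$ independent of everything, together with an independent tangent sequence $(\mbf{X}_k')$ sharing the $\mc{F}_{k-1}$-conditional distribution of $(\mbf{X}_k)$. Writing $\mbf{X}_k = \mbf{X}_k - \E[\mbf{X}_k' \mid \mc{F}_{k-1}]$ (valid when $\E[\mbf{X}_k \mid \mc{F}_{k-1}] = 0$), Jensen's inequality applied to the convex map $\mbf Y \mapsto \tr\exp(\mbf H + \mbf Y)$, and the exchangeability $\mbf{X}_k - \mbf{X}_k' \eqdist \varepsilon_k(\mbf{X}_k - \mbf{X}_k')$, combine to yield
\begin{talign*}
\E[\tr\exp(\theta \textstyle\sum_k \mbf{X}_k)] \leq \E[\tr\exp(2\theta \textstyle\sum_k \varepsilon_k\mbf{X}_k)],
\end{talign*}
a matrix analog of the classical scalar symmetrization inequality for martingale differences.

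Third, bound the symmetrized trace-MGF via the Tropp--Lieb tower. Because $\varepsilon_k$ is independent of $(\mc{F}_{k-1}, \varepsilon_1, \ldots, \varepsilon_{k-1})$ and distributed as $\varepsilon$ in the hypothesis,
\begin{talign*}
\log \E[\exp(2\theta \varepsilon_k \mbf{X}_k) \mid \mc{F}_{k-1}, \varepsilon_1, \ldots, \varepsilon_{k-1}] \preceq 2\theta^2 \mbf{A}_k.
\end{talign*}
Iterating Lieb's concavity (that $\mbf Y \mapsto \tr\exp(\mbf H + \log \mbf Y)$ is concave in positive definite $\mbf Y$) through the filtration then gives
\begin{talign*}
\E[\tr\exp(2\theta \textstyle\sum_k \varepsilon_k \mbf{X}_k)] \leq \tr\exp(2\theta^2 \textstyle\sum_k \mbf{A}_k) \leq m \exp(2\theta^2 \sigma^2).
\end{talign*}

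Assembling the three steps yields $\P[\lambda_{\max}(\sum_k \mbf{X}_k) \geq t] \leq m \exp(-\theta t + 2\theta^2 \sigma^2)$, and minimizing at $\theta^{*} = t/(4\sigma^2)$ produces $m\,e^{-t^2/(8\sigma^2)}$, as claimed. I expect the symmetrization step to be the main obstacle: it relies on the martingale difference property $\E[\mbf{X}_k \mid \mc{F}_{k-1}] = 0$, which is not stated as an explicit hypothesis of the lemma but is exactly the property established for the intended application in \cref{eq:zero_mean_matrix}. A more direct route that avoids the tangent sequence would instead combine $e^x \leq 2\cosh(x)$ with the hypothesis to obtain $\log\E[\exp(\theta\mbf{X}_k)\mid\mc{F}_{k-1}] \preceq (\theta^2/2)\mbf{A}_k + (\log 2)\mbf I$; this residual $\log 2$ compounds to a fatal $2^N$ factor in the Lieb tower, which the symmetrization reduction is precisely designed to eliminate.
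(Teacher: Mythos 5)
Your overall skeleton is the right one and matches the paper's: matrix Laplace transform, a Lieb-type tower through the filtration that converts the hypothesis \cref{eq:matrix_tb_assum} into $\E[\tr\exp(\theta\sum_k \mbf X_k)]\leq m\exp(2\theta^2\sigma^2)$, and optimization at $\theta=t/(4\sigma^2)$. Your observation that a conditional zero-mean property is silently required is also correct and worth making explicit: the hypothesis only constrains the \emph{symmetrized} MGF, which is invariant under $\mbf X_k\mapsto-\mbf X_k$, so without $\E[\mbf X_k\mid\mc F_{k-1}]=0$ the conclusion is simply false (take $m=1$ and $\mbf X_1$ a positive deterministic scalar); in the paper this property is supplied by condition (A) of \cref{lem:matrix_exp_bernstein} (and \cref{eq:zero_mean_matrix}) where the lemma is applied.

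The genuine gap is your step 2, the \emph{global} symmetrization $\E[\tr\exp(\theta\sum_k \mbf X_k)]\leq\E[\tr\exp(2\theta\sum_k\vareps_k \mbf X_k)]$ via a tangent sequence. The Jensen-plus-exchangeability argument you cite is an i.i.d.-sequence argument and does not go through for an adapted (martingale-difference) sequence: the decoupled tangent sequence $(\mbf X_k')$ does not share the joint law of $(\mbf X_k)$, the distributional identity $(\mbf X_k-\mbf X_k')_k\eqdist(\vareps_k(\mbf X_k-\mbf X_k'))_k$ only holds one increment at a time conditionally on $\mc F_{k-1}$, and the sequential route fails because when you try to symmetrize the $k$-th increment the later increments sitting inside $\tr\exp(\cdot)$ are not $\mc F_{k-1}$-measurable, so the conditional-Jensen step has nothing to condition on. The fix — and what the paper (following \citet{user_friendly_matrix}, Thm.~7.1) actually does — is to symmetrize \emph{increment by increment inside the tower}: condition on $\mc F_{n-1}$, use convexity of $\mbf Y\mapsto\tr\exp(\mbf H+\mbf Y)$ together with $\E[\mbf X_n\mid\mc F_{n-1}]=0$ to replace $\theta\mbf X_n$ by $2\theta\vareps\mbf X_n$, apply Lieb and \cref{eq:matrix_tb_assum} to absorb it into the deterministic matrix $2\theta^2\mbf A_n$, and only then recurse on $k=n-1,\dots,1$; after this replacement the remaining expression involves only earlier increments, which is exactly what makes the recursion legitimate. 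Your step 3 is then fine as stated, so rerouting the symmetrization into the conditional iteration (rather than doing it globally up front) repairs the argument and recovers the paper's proof; your closing remark about the $\cosh$ route and its fatal per-increment $\log 2$ is accurate.
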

   
   \begin{lemma}[\mgfresultname] \label{lem:matrx_exp_mgf}
     For a fixed scalar $R$, let $\mbf{X}$ be a self-adjoint matrix satisfying
     \begin{talign}
         \E \mbf{X} = 0 \qtext{and}
         \E \mbf{X}^{q} \preceq (\frac{q}{2})! R^{q} \mbf{I} & \qtext{ for } q \in 2\mathbb N.
         \label{eq:subgauss_condition}
     \end{talign}
     If $\vareps$ is a Rademacher random variable  independent of $\mbf X$, then
     \begin{talign}
     \label{eq:subgauss_mgf}
         \E \exp \left( 2\vareps\theta \mbf{X} \right) \preceq \exp \left(2 \theta^2 R^2\mbf{I}  \right) &  \qtext{ for all } \theta\in \real. 
     \end{talign}
   \end{lemma}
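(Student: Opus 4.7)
The plan is to expand the matrix exponential as a Taylor series, use the independence of $\vareps$ and $\mbf{X}$ together with the Rademacher moment identity $\E[\vareps^q]=\indicator{[q\in 2\mathbb{N}]}$ to eliminate odd powers, and then invoke the hypothesized even moment bound \cref{eq:subgauss_condition} together with a combinatorial inequality to match the Taylor series of $\exp(2\theta^2 R^2)$. Throughout, I would treat the infinite Taylor sums termwise by viewing $\preceq$ as a partial order on self-adjoint matrices that is preserved under non-negative scaling and summation, so taking $\E$ and passing to limits preserves the inequality.

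Concretely, the first step is to write
\begin{talign}
\exp(2\vareps\theta\mbf{X}) \;=\; \sum_{q=0}^{\infty}\frac{(2\theta)^q}{q!}\,\vareps^q\,\mbf{X}^q,
\end{talign}
which is a valid absolutely convergent expansion since $\mbf{X}$ is self-adjoint. The second step is to take expectations and use that $\vareps$ is independent of $\mbf{X}$ with $\E[\vareps^q]=1$ for even $q$ and $0$ for odd $q$, giving
\begin{talign}
\E\exp(2\vareps\theta\mbf{X}) \;=\; \sum_{k=0}^{\infty}\frac{(2\theta)^{2k}}{(2k)!}\,\E[\mbf{X}^{2k}].
\end{talign}
Note that the $q=0$ term equals $\mbf{I}$, consistent with the convention $\E[\mbf{X}^0]=\mbf{I}$ and $0!\,R^0\mbf{I}=\mbf{I}$.

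The third step applies the hypothesis $\E[\mbf{X}^{2k}]\preceq k!\,R^{2k}\mbf{I}$ termwise, yielding
\begin{talign}
\E\exp(2\vareps\theta\mbf{X}) \;\preceq\; \Bigg(\sum_{k=0}^{\infty}\frac{(2\theta)^{2k}\,k!\,R^{2k}}{(2k)!}\Bigg)\mbf{I} \;=\; \Bigg(\sum_{k=0}^{\infty}\frac{(4\theta^2R^2)^{k}\,k!}{(2k)!}\Bigg)\mbf{I}.
\end{talign}
The fourth step is the combinatorial comparison $(2k)!\geq 2^{k}(k!)^2$ (which is equivalent to $\binom{2k}{k}\geq 2^k$ and is easily proved by induction), so that $k!/(2k)!\leq 1/(2^k k!)$. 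Substituting yields
\begin{talign}
\sum_{k=0}^{\infty}\frac{(4\theta^2R^2)^{k}\,k!}{(2k)!} \;\leq\; \sum_{k=0}^{\infty}\frac{(2\theta^2R^2)^{k}}{k!} \;=\; \exp(2\theta^2 R^2),
\end{talign}
which gives $\E\exp(2\vareps\theta\mbf{X})\preceq\exp(2\theta^2R^2)\mbf{I}=\exp(2\theta^2R^2\mbf{I})$, as claimed.

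The main obstacle is a minor one: ensuring that the $\preceq$ ordering is preserved when passing from finite partial sums to the infinite series, and verifying the combinatorial inequality $\binom{2k}{k}\geq 2^k$. The symmetrization by $\vareps$ does all the heavy lifting, since it is exactly what kills the odd moments that are not controlled by the hypothesis; without it we would need to bound $\E[\mbf{X}^{2k+1}]$ separately, which is not available from \cref{eq:subgauss_condition} alone.
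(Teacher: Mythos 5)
Your proposal is correct and follows essentially the same route as the paper's proof: Taylor-expand the matrix exponential, use the independence of $\vareps$ and $\mbf{X}$ to kill odd powers, bound the even moments via \cref{eq:subgauss_condition}, and conclude with the combinatorial fact $(2k)! \geq 2^k (k!)^2$ (the paper states it as $\frac{2^k k!}{(2k)!} \leq \frac{1}{k!}$) to match the series of $\exp(2\theta^2 R^2 \mbf{I})$. No gaps; the extra care you take about termwise $\preceq$ in the infinite sum is a fine (if routine) addition.
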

  
    The assumed conditions~\cref{eq:matrix_bernstein_conditions} allow us to apply \cref{lem:matrx_exp_mgf} conditional on $ \left(\mbf{Y}_{i}  \right)_{i<k} $ along with the operator monotonicity of $\log$ to find that 
   \begin{talign}
       \log \E \left[ \exp \left(\vareps \theta \mbf{Y}_{k} \right) \vert \left\{ \mbf{Y}_{i}  \right\}_{i<k} \right] \preceq 2\theta^2R_k^2 \mbf I
       \qtext{for all}
       \theta \in \real,
   \end{talign}
   for a Rademacher random variable $\vareps$ independent of $\left(\mbf{Y}_{k}  \right)_{k\geq 1}$.
   Moreover, $ \opnorm{\sum_{k} \mbf A_k}=\opnorm{\sum_{k} R_k^2 \mbf I} = \sum_{k} R_k^2 = \sigma^2$.
   Thus, applying \cref{lem:matrix_master_tail}, we find that
    \begin{align}
       \P [ \lambda_{\max} ( \textsum_i \mbf{Y}_{i} )  \geq t] 
            \leq 
        m e^{-{t^2}/{(8\sigma^2)}}
        \stext{for all} t \geq 0.
    \end{align}
    As an immediate consequence, we also find that
    \begin{align}
       \P [ \lambda_{\max} ( \textsum_i \mbf{Y}_{i} )  \geq \sqrt{8\sigma^2(t+\log m)}] 
            \leq e^{-t} \qtext{for all} t\geq 0,
    \end{align}
    as claimed.

\subsection{Proof of \cref{lem:tailboundstomoments}: \tailboundstomomentslemmaname} 
\label{sub:proof_of_tail_moments}

    We begin by bounding the moments of the shifted random variable $X = Z- a$. Note that $Z \geq 0$, so that $X \geq -a$. Next, note that $ X  = X_{+} - X_{-} $ where $ X_{\pm} = \max \left( \pm X , 0 \right) $ and that $ |X|^{ q} = X_{+}^{q} + X_{-}^{q} $. 
    Furthermore, $X_{-}^{q} \leq a^{q}$ by the nonnegativity of $Z$, so that 
    $
    |X|^{q} \leq a^{q} +  X_{+}^{q}. 
    $
    For any $u>0$, since $\P \left[ X_{+} > u \right] = \P \left[ X > u \right] = \P[ Z > a + u]$ for any $u > 0 $, we apply the tail bounds on $Z$ to control the moments of $X_{+}$.
    In particular, we have 
    \begin{talign}
        \E \left[ X_{+}^{q} \right] ]
        & \seq{(i)} q \int_{0}^{\infty} u^{q-1} \P \left[ X_+> u  \right] du \\
        &\seq{(ii)} q \int_0^{\infty} (v\sqrt{t})^{q-1} \P [ X_{+} > v\sqrt{t} ] \cdot \frac{v}{2\sqrt{t}} dt \\
        &  \stackrel{(iii)}{\leq}  q v^q \int_{0}^{\infty} t^{q/2-1} e^{-t} dt 
        \seq{(iv)} qv^q \Gamma(\frac{q}{2}),
    \end{talign}
    where we have applied $(i)$
    integration by parts, 
    $(ii)$ the substitution $ u= v \sqrt{t}$, and 
    $ (iii) $ the assumed tail bound for $Z$.

    Since $Z = X+ a$, the convexity of the function $t\mapsto t^q$ for $q\geq 1$, and Jensen's inequality imply that for each $q\in 2\mbb N$, we have
    \begin{talign}
        \E Z^q \leq 2^{q-1}( a^q + \E |X|^q) \leq 2^{q-1}(2 a^q + \E X_{+}^q)
        &\leq (2a)^q + 2^{q-1} qv^q \Gamma(\frac{q}{2}) \\
        &= (2a)^q + 2^{q-1} qv^q (\frac{q}{2}-1)!\\
        &\leq  (2a+2v)^q (\frac{q}{2})!
    \end{talign}
    where the last step follows since $x^q+y^q\leq (x+y)^q$ for all $q\in\naturals$ and $x,y\geq 0$. The proof is now complete.

\subsection{Proof of \cref{lem:matrix_master_tail}: \mastertailresultname}
\label{sub:proof_of_lem_matrix_master_tail}
    The proof of this result is identical to that of \citet[Proof of Thm.~7.1]{user_friendly_matrix} as the same steps are justified under our weaker  assumption~\cref{eq:matrix_tb_assum}.
   Specifically, applying the arguments from  \citet[Proof of Thm.~7.1]{user_friendly_matrix}, we find that
   \begin{talign}
       \E\brackets{\tr \exp(\sum_{k=1}^n\theta \mbf X_k)} 
       &\leq 
       \E\brackets{\tr\exp\parenth{\sum_{k=1}^{n-1}\theta \mbf X_k + \log\E\brackets{\exp(2\vareps \theta \mbf X_n)\vert \mc F_{n-1}}}} \\
       &\stackrel{\cref{eq:matrix_tb_assum}}{\leq}
       \E\brackets{\tr\exp\parenth{\sum_{k=1}^{n-1}\theta \mbf X_k + 2\theta^2 \mathbf A_n}}  \\
       &\stackrel{(i)}{\leq}
       \tr\exp\parenth{2\theta^2\sum_{k=1}^{n} \mathbf A_k}
       \stackrel{(ii)}{\leq}
       m\exp\parenth{2\theta^2\sigma^2},
       \label{eq:mgf_bound}
   \end{talign}
   where step~(i) follows by iterating the arguments over $k=n-1, \ldots, 1$ and step~(ii) from the standard fact that $\tr(\exp(\mbf A)) \leq m\opnorm{\exp(\mbf A)} = m  \exp\sparenth{\opnorm{\mbf A}}$ for an $m\times m$ self-adjoint matrix~$\mbf A$. Next, applying the matrix Laplace transform method \citet[Prop.~3.1]{user_friendly_matrix}, for all $t> 0$, we have
    \begin{talign}
        \P \left[\lambda_{\max} \left( \sum_{k} \mbf{X}_k \right) \geq t) \right] 
        &\leq \inf_{\theta>0} \braces{e^{-\theta t} \cdot \E\brackets{\tr \exp(\sum_{k=1}^n\theta \mbf X_k)} }\\
        &\sless{\cref{eq:mgf_bound}} m\inf_{\theta>0} \braces{e^{-\theta t} \cdot e^{2\theta^2\sigma^2} }
        = me^{-{t^2}/{(8\sigma^2)}},
    \end{talign}
    where the last step follows from the choice $\theta = \frac{t}{4\sigma^2}$.
    The proof is now complete.
   
  \subsection{Proof of \cref{lem:matrx_exp_mgf}: \mgfresultname}
  \label{sub:proof_of_lem_matrx_exp_mgf}
   We have
   \begin{talign}
       \E[\exp(2\vareps  \theta\mbf X)] = \mathbf I + \sum_{q=1}^{\infty} \frac{2^q \theta^q}{q!} \E[\vareps^q \mbf X^q]
       &\seq{(i)} \mathbf I + \sum_{k=1}^{\infty} \frac{2^{2k}\theta^{2k}}{(2k)!} \E[\mbf X^{2k}] \\
       &\stackrel{(ii)}{\preceq} \mathbf I + \sum_{k=1}^{\infty} \frac{2^{2k} \theta^{2k} \, k!R^{2k}}{(2k)!} \mathbf I \\
       &\stackrel{(iii)}{\preceq} \mathbf I + \sum_{k=1}^{\infty} \frac{(2\theta^2R^2)^{k}}{k!} \mathbf I 
       = \exp(2\theta^2R^2 \mbf I),
   \end{talign}
   where step~(i) uses the facts that (a) $\E[\vareps^q]=\indicator(q \in 2\mathbb N)$ and (b) $\E[\vareps^q \mbf X^q]= \E[\vareps^q]\E[ \mbf X^q]$ since $\vareps$ is independent of $\mbf X$, step~(ii) follows from the assumed condition~\cref{eq:subgauss_condition}, and step~(iii) from the fact that $\frac{2^k k!}{(2k)!}\leq \frac{1}{k!}$  \citep[Proof of Thm.~2.1]{boucheron2013concentration}. 
\section{Proof of \cref{thm:compresspp_subgamma}: \compressppsubgammathmname} \label{sec:compresspp_subgamma}

         First, the runtime bound~\cref{eq:runtime_cpp} follows directly by adding the runtime of $\compress\left( \halve , \ossymb  \right)$ as given by \cref{run_time_cp} in \cref{thm:compress_sub_gamma}  and the runtime of $\thinning$. 

        Recalling the notation~\cref{eq:psi_alg,eq:psi_phi_reln} from \cref{sec:add_notation} and noting the definition of the point sequences $\cset_{\compresssub}$ and $\cset_{\compressppsub}$ in \cref{alg:compresspp}, we obtain the following relationship between the different discrepancy vectors:
          \begin{talign}
        \phi_{\compresssub}(\inputcoreset) &
        =  \frac{1}{n}\sum_{ x \in \inputcoreset } \delta_x - \frac{1}{2^{\ossymb}\sqrt{n}} \sum_{ x \in \cset_{\compresssub}  } \delta_x, \\
        \label{eq:psi_thin_stage_2}
        \phi_{\thintag} \left( \cset_{\compresssub} \right) &=  \frac{1}{2^{\ossymb}\sqrt{n}}\sum_{x \in \cset_{\compresssub} } \delta_x - \frac{1}{\sqrt{n}} \sum_{x \in \cset_{ \compressppsub } } \delta_x, \qtext{and}
        \\
        \label{eq:psi_compresspp}
        \phi_{\compressppsub}(\inputcoreset) 
        &= 
          \frac1n\sum_{ x \in \inputcoreset } \delta_x - \frac{1}{\sqrt n} \sum_{ x \in \cset_{ \compressppsub}  } \delta_x
         \\
        &= %
        \phi_{\compresssub}(\inputcoreset) + \phi_{\thintag} \left( \cset_{\compresssub} \right).
        \label{eq:psi_cpp_cp_thin}
    \end{talign}
   Noting the $\fsingsubgamma$ property of \halve and applying \cref{thm:compress_sub_gamma}, we find that $\phi_{\compresssub}(\inputcoreset)(f)$ is sub-Gaussian with parameter $\subgammavariance_{\compresssub}(n)$ defined in~\cref{eq:nu_cp}. Furthermore, by assumption on \thin, given $\cset_{\compresssub}$, the variable $\phi_{\thintag}(\cset_{\compresssub})(f)$ is $\subgammavariance_{\compresssub}(\frac{\lng}{2})$ sub-Gaussian. The claim now follows directly from \cref{subgsn_sum}.

\section{Proof of \cref{thm:compressppmmd}: \compressppmmdthmname} \label{sec:compressppmmd}

Noting that MMD is a metric, and applying triangle inequality, we have
\begin{talign}
\label{eq:mmd_triangle}
 \mmd_\kernel(\inputcoreset, \cset_{\compressppsub}) \leq
 \mmd_\kernel(\inputcoreset, \cset_{\compresssub})
 + \mmd_\kernel(\cset_{\compresssub}, \cset_{\compressppsub}).
\end{talign}
Since $\cset_{\compressppsub}$ is the output of $\thinning(2^{\ossymb})$ with $\cset_{\compresssub}$ as the input, applying the MMD tail bound assumption~\cref{eq:mmd_thin} with $ |\cset_{\compresssub} |= \sqrt{n} 2^{\ossymb}$ substituted in place of $n$, we find that
\begin{talign}
 \P \left[ \mmd ( \cset_{\compresssub}, \cset_{\compressppsub}) \! \geq \! {a'}_{ 2^{\ossymb} \sqrt{n} }\! +\! {v'}_{ 2^{\ossymb} \sqrt{n}   } \sqrt{  t} \! \right]\leq e^{-t} \qtext{for all } t\geq 0.
 \label{eq:stage_2_bound}
\end{talign}
Recall that $\lng / 2 = 2^{\ossymb} \sqrt{n}$. 
Next, we apply \cref{thm:compress_mmd} with $\halve $ to conclude that
\begin{talign}
\label{eq:stage_1_bound}
 \P [
    \mmd_\kernel(\inputcoreset, \cset_{\compresssub})  \geq \wtil{a}_n +  \wtil{v}_{n}\cdot\sqrt{ t}  ] &\leq e^{-t}
    \qtext{for all} t \geq 0.
\end{talign}
Thus, we have 
\begin{talign}
  \P\left[ \mmd_\kernel(\inputcoreset, \cset_{\compressppsub}) \geq a'_{\lng/2} + \wtil{a}_n + ( v'_{\lng/2} + \wtil{v}_n ) \sqrt{t} \right] \leq 2\cdot e^{-t}
  \qtext{for all} t\geq 0,
\end{talign}
which in turn implies that
\begin{talign}
  \P\left[ \mmd_\kernel(\inputcoreset, \cset_{\compressppsub}) \geq a'_{\lng/2} + \wtil{a}_n + ( v'_{\lng/2} + \wtil{v}_n ) \sqrt{\log 2} + ( v'_{\lng/2} + \wtil{v}_n ) \sqrt{t} \right] \leq e^{-t}
  \qtext{for all} t\geq 0,
\end{talign}
thereby yielding the claimed result.

\section{{Proofs of \cref{ex:ktsplitcompress,,ex:ktcompress,,ex:ktsplitcompresspp,,ex:ktcompresspp}}}
\label{sec:kt_example}
We begin by defining the notions of sub-Gaussianity and  $\kernel$-sub-Gaussianity  on an event.
\begin{definition}[Sub-Gaussian on an event]
    We say that a random variable $G$ is \emph{sub-Gaussian on an event $\event$} with parameter $\sigma$ if
    \begin{talign}
        \E [ \indicator[ \event ] \cdot \exp ( \lam \cdot G ) ] \leq \exp ( \frac{ \lam^2 \sigma^2 }{2} )
        \qtext{for all} \lam \in \real.
    \end{talign}
\end{definition}
\begin{definition}[\tbf{$\mbf{k}$-sub-Gaussian on an event}]
\label{def:mmd_subgamma_algo_event}
    For a kernel $\kernel$, we call a thinning algorithm $\alg$  \emph{$\kernel$-sub-Gaussian on an event $\event$}
    with parameter $\kgaussparam$ and shift $\shiftparam$ if %
    \begin{talign}
     \label{eq:mmd_thin}
            \P[ \event, \mmd_{\kernel} ( \inputcoreset , \cset_{\alg} ) \geq \shiftparam_{n} + \kgaussparam_{n} \sqrt{t}  \,\mid\, \inputcoreset] \leq e^{-t}
        \qtext{for all} t \geq 0.
    \end{talign}
\end{definition}

We will also make regular use of the unrolled representation \cref{eq:psi_compress} for the \compress measure discrepancy $\psi_{\compresssub}(\inputcoreset)$ in terms of the \halve  inputs $(\cset^{\intag}_{ k, j})_{j\in[4^k]}$ of size 
\begin{talign}
\label{eq:ni}
n_k = 2^{\ossymb+1-k}\sqrt{n}
\qtext{for} 
0\leq k\leq \rounds.
\end{talign} 
    For brevity, we will use the shorthand $\psi_{\compresssub} \defeq \psi_{\compresssub}( \inputcoreset)$, $\spsi_{k,j} \defeq \psi_{\halvetag} ( \cset^{\intag}_{ k, j})$, and $\psi_{\thintag} \defeq \psi_{\thintag}(\cset_{\compresssub})$ hereafter.

\subsection{Proof of \cref{ex:ktsplitcompress}: \ktsplit-\compress}
\label{sec:proof_of_ktsplitcompress}
    For $\halve = \ktsplit(\frac{\l^2}{n 4^{\ossymb+1} (\beta_n+1)} \delta)$ when applied to an input of size $\l$, the proof of Thm.~1 in \citet{dwivedi2022generalized} identifies a sequence of events $\event[k,j]$ and random signed measures $\tilde{\psi}_{k,j}$ such that, for each $0\leq k\leq \rounds$, $j\in[4^k]$, and $f$ with $\knorm{f}=1$, %
    \begin{enumerate}[label=(\alph*)]
        \item $\P[\event[k,j]^c] \sless{(i)} \frac{n_k^2}{n 4^{\ossymb+1} (\beta_n+1)} \frac{\delta}{2} \seq{(ii)} \half\frac{\delta}{4^{k}(\beta_n+1) } $,
        \item $ \indicator [ \event[k,j]  ] \spsi_{k,j} = \indicator [ \event[k,j]  ] \tilde{\psi}_{k,j}  $, and
        \item $\tilde{\psi}_{k,j}(f)$
        is $n_k\, \subgammavariance_{\halvetag}(n_k)$ 
        sub-Gaussian \cref{eq:ktsplit-halve-subgsn} given $(\tilde{\psi}_{ k' , j'})_{k'> k, j'\geq 1}$ and $(\tilde{\psi}_{ k , j'})_{j'< j}$,
    \end{enumerate}
    where step~(ii) follows from substituting the definition $n_k=2^{\ossymb+1-k}\sqrt{n}$~\cref{eq:ni}.
    To establish step~(ii) in property (a), we use the definition\cref{footnote:ktdelta} of $\ktsplit(\frac{n_k^2}{n 4^{\ossymb+1} (\beta_n+1)}\delta)$ for an input of size $n_k$, which implies that $\delta_i = \frac{n_k}{n 4^{\ossymb+1} (\beta_n+1)}\delta$ in the notation of \citet{dwivedi2022generalized}.  The proof of Thm.~1 in \citet{dwivedi2022generalized} then implies that
    \begin{talign}
    \P[\event[k,j]^c] \leq \sum_{i=1}^{n_k/2}\delta_i = \frac{n_k}{2} \frac{n_k}{n 4^{\ossymb+1} (\beta_n+1)}\delta =  \frac{n_k^2}{n 4^{\ossymb+1} (\beta_n+1)} \frac{\delta}{2}.
    \end{talign}
    
    Hence, on the event $ \event = \bigcap_{k,j} \event[k,j] $, these properties hold simultaneously for all \halve calls made by \compress, and, by the union bound,
    \begin{talign}
    \label{eq:compress_subgsn_prob}
    \P [ \event^c  ] 
        &\leq 
    \sum_{k=0}^{\rounds} \sum_{j=1}^{4^k}\P [ \event[k,j]^c  ]
        \leq
    \sum_{k=0}^{\rounds} 4^k \half\frac{\delta}{4^{k}(\beta_n+1)}
        =
    \frac{\delta}{2}.
    \end{talign}
    Now fix any $f$ with $\knorm{f}=1$.  
    We invoke the measure discrepancy representation \cref{eq:psi_compress}, the equivalence of $\spsi_{k,j}$ and $\tilde{\psi}_{ k, j}$ on $\event$, the nonnegativity of the exponential, and \cref{subgsn_sum} in turn to find
    \begin{talign}
    \E [ \indicator[\event] \cdot \exp( \lam \cdot \normfunoutput_{\compresssub} (f))] 
        &= 
    \E [ \indicator[\event] \cdot \exp( \lam \cdot \frac{1}{n}\psi_{\compresssub} (f))] \\
        &= 
    \E [ \indicator[\event] \cdot \exp( \lam \cdot  
        \frac{1}{n}\sqrt{n} 2^{-\ossymb-1 } \sum_{k=0}^{ \rounds } \sum_{j=1}^{4^k} 2^{-k} \spsi_{k,j}(f) ) ] \\
        &=
    \E [ \indicator[\event] \cdot \exp( \lam \cdot  
        \frac{1}{n}\sqrt{n} 2^{-\ossymb-1 } \sum_{k=0}^{ \rounds } \sum_{j=1}^{4^k} 2^{-k} \tilde{\psi}_{ k, j}(f) ) ] \\
        &\leq 
    \E [ \exp( \lam \cdot  
        \frac{1}{n}\sqrt{n} 2^{-\ossymb-1 } \sum_{k=0}^{ \rounds } \sum_{j=1}^{4^k} 2^{-k} \tilde{\psi}_{ k, j}(f) ) ] \\
        &\leq
    \exp(\frac{\lam^2 \subgammavariance^2_{\compresssub}(n)}{2})
        \qtext{for}
    \subgammavariance^2_{\compresssub}(n) 
        = 
    \sum_{k=0}^{ \rounds  } 4^{-k} \nu^2_{\halvetag} (n_k )
    \end{talign}
    so that $\normfunoutput_{\compresssub} (f)$ is $\subgammavariance_{\compresssub}$ sub-Gaussian on $\event$.

\subsection{Proof of \cref{ex:ktcompress}: \kt-\compress}
\label{sec:proof_of_ktcompress}
    For $\halve =$ symmetrized $\kt(\frac{\l^2}{n 4^{\ossymb+1} (\beta_n+1)} \delta)$ when applied to an input of size $\l$, the proofs of Thms.~1--4 in \citet{dwivedi2022generalized} identify a sequence of events $\event[k,j]$ and random signed measures $\tilde{\psi}_{k,j}$ such that, for each $0\leq k\leq \rounds$ and $j\in[4^k]$,
    \begin{enumerate}[label=(\alph*)]
       \item $\P[\event[k,j]^c] \leq \half \frac{\delta}{4^{k}(\beta_n+1) } $,
        \item $ \indicator [ \event[k,j]  ] \spsi_{k,j} = \indicator [ \event[k,j]  ] \tilde{\psi}_{k,j}$,
        \item $\P[ \frac{1}{n_k}\knorm{\tilde{\psi}_{k,j}(\kernel)} \geq \shiftparam_{n_k} + \kgaussparam_{n_k} \sqrt{t}  \,\mid\, (\tilde{\psi}_{ k' , j'})_{k'> k, j'\geq 1}, (\tilde{\psi}_{ k , j'})_{j'< j}] \leq e^{-t}$ for all $t\geq 0$, and
        \item $\E[\tilde{\psi}_{k,j}(\kernel) \mid (\tilde{\psi}_{ k' , j'})_{k'> k, j'\geq 1}, (\tilde{\psi}_{ k , j'})_{j'< j}] = 0$,
    \end{enumerate}
    where $n_k=2^{\ossymb+1-k}\sqrt{n}$ was defined in \cref{eq:ni}.
    We derive property (a) exactly as in \cref{sec:proof_of_ktsplitcompress}.
    
    Hence, on the event $ \event = \bigcap_{k,j} \event[k,j] $, these properties hold simultaneously for all \halve calls made by \compress, and, by the union bound  \cref{eq:compress_subgsn_prob}, $\P [ \event^c  ] \leq \frac{\delta}{2}$.

    Furthermore, we may invoke the measure discrepancy representation \cref{eq:psi_compress}, the equivalence of $\spsi_{k,j}$ and $\tilde{\psi}_{ k, j}$ on $\event$, the nonnegativity of the exponential, and  the proof of \cref{thm:compress_mmd} in turn to find
    \begin{talign}
    &\P[\event, \mmd(\inputcoreset, \cset_{\compresssub})
        \geq 
    \tilde{\shiftparam}_{n} + \tilde{\kgaussparam}_{n} \sqrt{t} 
        \mid 
    \inputcoreset]
        =
    \P[\event, \frac{1}{n}\knorm{\psi_{\compresssub}(\kernel)} 
        \geq 
    \tilde{\shiftparam}_{n} + \tilde{\kgaussparam}_{n} \sqrt{t} 
        \mid 
    \inputcoreset] \\
        &=
    \P[\event, \frac{1}{n}\knorm{\sqrt{n} 2^{-\ossymb-1 } \sum_{k=0}^{ \rounds } \sum_{j=1}^{4^k} 2^{-k} \tilde{\psi}_{ k, j}(\kernel)}
        \geq 
    \tilde{\shiftparam}_{n} + \tilde{\kgaussparam}_{n} \sqrt{t} 
        \mid 
    \inputcoreset] \\
        &\leq
    \P[\frac{1}{n}\knorm{\sqrt{n} 2^{-\ossymb-1 } \sum_{k=0}^{ \rounds } \sum_{j=1}^{4^k} 2^{-k} \tilde{\psi}_{ k, j}(\kernel)}
        \geq 
    \tilde{\shiftparam}_{n} + \tilde{\kgaussparam}_{n} \sqrt{t} 
        \mid 
    \inputcoreset]
        \leq 
    e^{-t} 
        \qtext{for all}
    t \geq 0,
    \end{talign}
    so that \compress is $\kernel$-sub-Gaussian on $\event$ with parameters $(\tilde v, \tilde a)$.

\subsection{Proof of \cref{ex:ktsplitcompresspp}: \ktsplit-\compresspp}\label{sec:proof_of_ktsplitcompresspp}
    For $\thin = \ktsplit(
     \frac{\ossymb}{\ossymb+2^{\ossymb}(\rounds+1)}  \delta
    )$ 
    and
    $\halve = \ktsplit(
    \frac{\l^2}{4n2^{\ossymb}(\ossymb+2^{\ossymb}(\rounds+1))}\delta
    )$ when applied to an input of size $\l$, the proof of Thm.~1 in \citet{dwivedi2022generalized} identifies a sequence of events $\event[k,j]$ and $\event[\thintag]$ and random signed measures $\tilde{\psi}_{k,j}$ and $\tilde{\psi}_{\thintag}$ such that, for each $0\leq k\leq \rounds$, $j\in[4^k]$, and $f$ with $\knorm{f}=1$, %
    \begin{enumerate}[label=(\alph*)]
        \item 
        $\P[\event[k,j]^c] \sless{(i)} 
        \frac{n_k^2}{4n2^{\ossymb}(\ossymb+2^{\ossymb}(\rounds+1))}\frac{\delta}{2}
        \seq{(ii)} \frac{2^{\ossymb}}{4^{k}(\ossymb+2^{\ossymb}(\rounds+1))}\frac{\delta}{2}
        \qtext{and}
        \P[\event[\thintag]^c] \sless{(iii)}  \frac{\ossymb}{\ossymb+2^{\ossymb}(\rounds+1)}  \frac{\delta}{2}
        $,
        \item 
        $\indicator [ \event[k,j]  ] \spsi_{k,j} = \indicator [ \event[k,j]  ] \tilde{\psi}_{k,j}  
        \qtext{and}
        \indicator[\event[\thintag]  ] \psi_{\thintag} = \indicator [ \event[\thintag]  ] \tilde{\psi}_{\thintag}$, and
        \item $\tilde{\psi}_{k,j}(f)$
        is $n_k\, \subgammavariance_{\halvetag}(n_k)$ 
        sub-Gaussian \cref{eq:ktsplit_zetas} given $(\tilde{\psi}_{ k' , j'})_{k'> k, j'\geq 1}$ and $(\tilde{\psi}_{ k , j'})_{j'< j}$ and
        $\tilde{\psi}_{\thintag}$ is $\frac{\l_n}{2}\, \subgammavariance_{\thintag}(\frac{\l_n}{2})$
        sub-Gaussian \cref{eq:ktsplit_zetas} given $\cset_{\compresssub}$.
    \end{enumerate}
    Here, step~(i) and (ii) follow exactly as in steps~(i) and (ii) of property~(a) in \cref{sec:proof_of_ktsplitcompress}. For step~(iii), we use the definition\cref{footnote:ktdelta} of $\ktsplit(\frac{\ossymb}{\ossymb+2^{\ossymb}(\rounds+1)}  \delta)$ for an input of size $2^{\ossymb} \sqrt{n}$, which implies that $\delta_i =
    \frac{\ossymb}{\sqrt{n} 2^{\ossymb}(\ossymb+2^{\ossymb}(\rounds+1))}\delta$ in the notation of \citet{dwivedi2022generalized}.  The proof of Thm.~1 in \citet{dwivedi2022generalized} then implies that
    \begin{talign}
    \P[\event[\thintag]^c] \leq \sum_{j=1}^{\ossymb}\frac{2^{j-1}}{\ossymb} \sum_{i=1}^{2^{\ossymb-j} \sqrt{n}} \delta_i = \sum_{j=1}^{\ossymb}\frac{2^{j-1}}{\ossymb} 2^{\ossymb-j} \sqrt{n} \frac{1}{\sqrt{n}2^{\ossymb}} \cdot \frac{\ossymb}{\ossymb+2^{\ossymb}(\rounds+1)}\delta = \frac{\ossymb}{\ossymb+2^{\ossymb}(\rounds+1)}\frac{\delta}{2},
    \end{talign}
    as claimed.

    Hence, on the event $ \event = \bigcap_{k,j} \event[k,j] \cap \event[\thintag]$, these properties hold simultaneously for all \halve calls made by \compress, and, repeating an argument similar to the union bound \cref{eq:compress_subgsn_prob},
    \begin{talign}
    \P [ \event^c  ] 
        \leq 
    \P [\event[\thintag]^c] 
        + 
    \sum_{k=0}^{\rounds} \sum_{j=1}^{4^k}\P [ \event[k,j]^c  ]
    & \leq\frac{\ossymb}{\ossymb+2^{\ossymb}(\rounds+1)}\frac{\delta}{2}
    + \sum_{k=0}^{\rounds} 4^{k}\frac{2^{\ossymb}}{4^{k}(\ossymb+2^{\ossymb}(\rounds+1))}\frac{\delta}{2}
        =
    \frac{\delta}{2}.
    \label{eq:compresspp_subgsn_prob}
    \end{talign}
    Moreover, since $\normfunoutput_{\compressppsub} = \frac{1}{n}(\psi_{\compresssub} + \psi_{\thintag})$, \cref{subgsn_sum} and the argument of \cref{sec:proof_of_ktsplitcompress} together imply that $\normfunoutput_{\compresssub} (f)$ is $\subgammavariance_{\compressppsub}$ sub-Gaussian on $\event$ for each $f$ with $\knorm{f}=1$.

\subsection{Proof of \cref{ex:ktcompresspp}: \kt-\compresspp}\label{sec:proof_of_ktcompresspp}

    In the notation of \cref{ex:kt_subgamma}, define
    \begin{talign}
    \frac{\l_n}{2}\shiftparam_{\l_n} 
        &=
    \sqrt{n}\shiftparam'_{\l_n/2} 
        = 
    C_{\shiftparam}\sqrt{\staticinfnorm{\kernel}},
        \qtext{and} \\
    \frac{\l_n}{2}\kgaussparam_{\l_n} 
       & =
    \sqrt{n}\kgaussparam'_{\l_n/2} 
        = 
    C_{\kgaussparam}\sqrt{\staticinfnorm{\kernel}\log(\frac{6(n-\sqrt{n}(2^{\ossymb}-\ossymb))}{\delta})}\ \mathfrak{M}_{\inputcoreset,\kernel}.
    \end{talign}
    Since $\halve = \trm{symmetrized } \kt(\frac{\l^2}{4n2^{\ossymb}(\ossymb+2^{\ossymb}(\rounds+1))}\delta)$ for inputs of size $\l$  and $\thin = \kt( \frac{\ossymb}{\ossymb+2^{\ossymb}(\rounds+1)}\delta)$, the proofs of Thms.~1--4 in \citet{dwivedi2022generalized} identify a sequence of events $\event[k,j]$ and $\event[\thintag]$ and random signed measures $\tilde{\psi}_{k,j}$ and $\tilde{\psi}_{\thintag}$ such that, for each $0\leq k\leq \rounds$ and $j\in[4^k]$,
    \begin{enumerate}[label=(\alph*)]
        \item 
       $\P[\event[k,j]^c] \leq \frac{2^{\ossymb}}{4^{k}(\ossymb+2^{\ossymb}(\rounds+1))}\frac{\delta}{2}
        \qtext{and}
        \P[\event[\thintag]^c] \leq \frac{\ossymb}{\ossymb+2^{\ossymb}(\rounds+1)}\frac{\delta}{2}$,
        \item 
        $\indicator [ \event[k,j]  ] \spsi_{k,j} = \indicator [ \event[k,j]  ] \tilde{\psi}_{k,j}  
        \qtext{and}
        \indicator[\event[\thintag]  ] \psi_{\thintag} = \indicator [ \event[\thintag]  ] \tilde{\psi}_{\thintag}$,
        \item $\P[ \frac{1}{n_k}\knorm{\tilde{\psi}_{k,j}(\kernel)} \geq \shiftparam_{n_k} + \kgaussparam_{n_k} \sqrt{t}  \,\mid\, (\tilde{\psi}_{ k' , j'})_{k'> k, j'\geq 1}, (\tilde{\psi}_{ k , j'})_{j'< j}] \leq e^{-t}$ 
            {and}
        $\P[ \frac{2}{\l_n}\knorm{\tilde{\psi}_{\thintag}(\kernel)} \geq \shiftparam_{\l_n/2}' + \kgaussparam_{\l_n/2}' \sqrt{t}  \,\mid\, \cset_{\compresssub}] \leq e^{-t}$
        for all $t\geq 0$, and
        \item $\E[\tilde{\psi}_{k,j}(\kernel) \mid (\tilde{\psi}_{ k' , j'})_{k'> k, j'\geq 1}, (\tilde{\psi}_{ k , j'})_{j'< j}] = 0$.
    \end{enumerate}
    We derive property (a) exactly as in \cref{sec:proof_of_ktsplitcompresspp}.
    Hence, on the event $ \event = \bigcap_{k,j} \event[k,j] \cap \event[\thintag]$, these properties hold simultaneously for all \halve calls made by \compress and
    \begin{talign}
    \wtil{\zeta}_{\halvetag}(\l_n) &= \wtil{\zeta}_{\thintag}(\frac{\l_n}{2}) = C_{\kgaussparam}\sqrt{\staticinfnorm{\kernel}\log(\frac{6(n-\sqrt{n}(2^{\ossymb}-\ossymb))}{\delta})}\ \mathfrak{M}_{\inputcoreset,\kernel}.
    \end{talign}
    Moreover, by the union bound \cref{eq:compresspp_subgsn_prob},
    $
    \P [ \event^c  ] 
        \leq 
    \frac{\delta}{2}.
    $
    
    Finally, since $\normfunoutput_{\compressppsub} = \frac{1}{n}(\psi_{\compresssub} + \psi_{\thintag})$ and the argument of \cref{sec:proof_of_ktcompress} implies that \compress is $\kernel$-sub-Gaussian on $\event$ with parameters $(\tilde v, \tilde a)$, 
    the triangle inequality implies that \compresspp is $\kernel$-sub-Gaussian on $\event$ with parameters $(\hat v, \hat a)$ as in \cref{sec:compressppmmd}.

\section{Supplementary Details for Experiments}
\label{sec:additional_experiments}

In this section, we provide supplementary experiment details deferred from \cref{sec:experiments}, as well as some additional results.

In the legend of each MMD plot, we display an empirical rate of decay.
In all experiments involving kernel thinning, we set the algorithm failure probability parameter $\delta = \half$ %
and compare $\kt(\delta)$ to \compress and \compresspp with $\halve$ and $\thin$ set as in \cref{ex:ktcompress,ex:ktcompresspp} respectively.

\subsection{Mixture of Gaussian target details and MMD plots}
\label{sec:mog_supplement}
For the target used for coreset visualization in \cref{fig:8mog}, the mean locations are on two concentric circles of radii $10$ and $20$, and are given by
 \begin{talign}
 \mu_j =\alpha_j \begin{bmatrix} \sin(j)\\\cos(j) \end{bmatrix}
 \qtext{where $\alpha_j = 10 \cdot \indicator(j\leq 16) + 20 \cdot \indicator(j>16)$}
 \qtext{for $j = 1, 2, \ldots, 32$.}
 \end{talign}
Here we also provide additional results with mixture of Gaussian targets given by
 $\P = \frac{1}{M}\sum_{j=1}^{M}\mc{N}(\mu_j, \mbf{I}_d)$ for $M \in \braces{4, 6, 8}$. The mean locations for these are given by
    \begin{talign}
    \label{eq:mog_description}
    \mu_1 &= [-3, 3]\tp,  \quad \mu_2 = [-3, 3]\tp,  \quad  \mu_3 = [-3, -3]\tp,  \quad  \mu_4 = [3, -3]\tp,\\
    \mu_5 &= [0, 6]\tp, \qquad \mu_6 = [-6, 0]\tp,  \quad  \mu_7 = [6, 0]\tp,  \qquad  \mu_8 = [0, -6]\tp.
    \end{talign}
\cref{fig:mog} plots the MMD errors of KT and herding experiments for the 
mixture of Gaussians targets with $4 , 6$ and $8$ centers, and notice again that \compresspp provides a competitive performance to the original algorithm, in fact suprisingly, improves upon herding.
  \begin{figure}[ht!]
        \centering
        \begin{tabular}{c}
        \includegraphics[width=\linewidth]{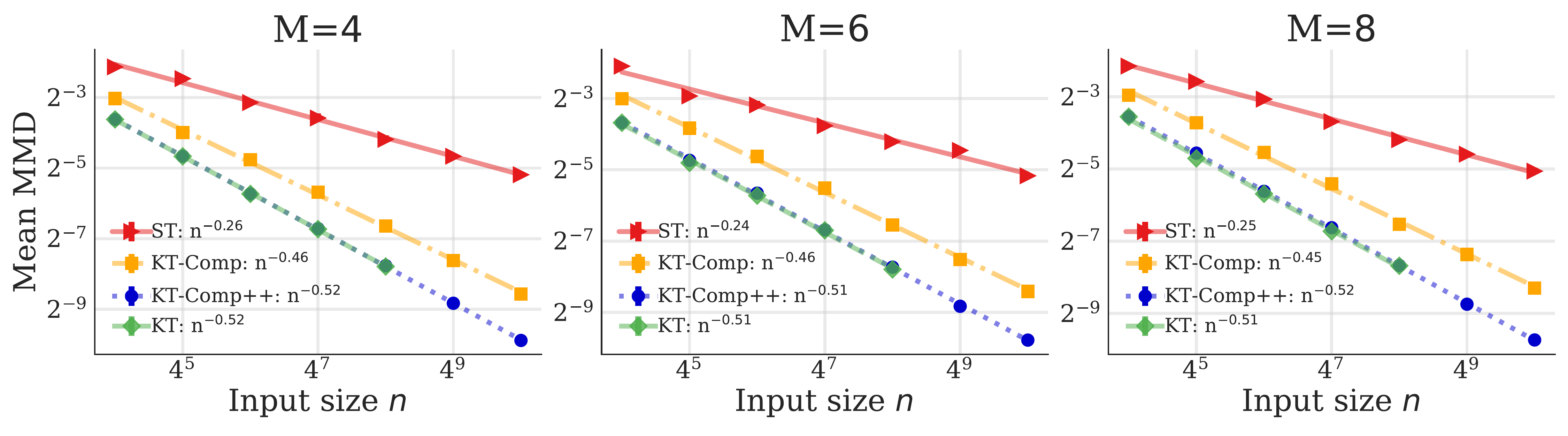}\\
        \hline
          \includegraphics[width=\linewidth]{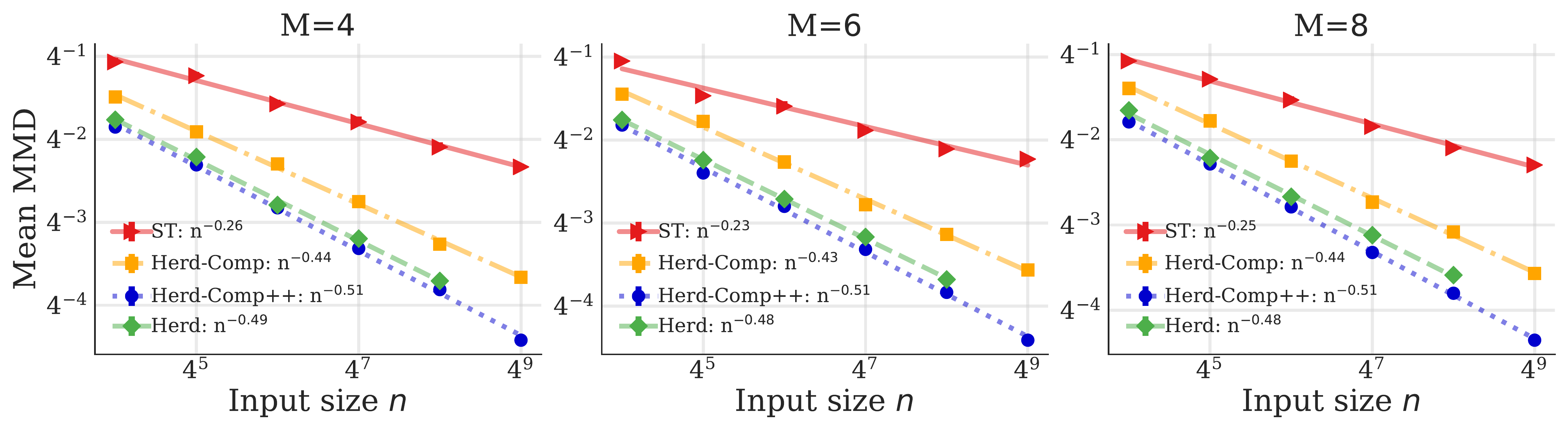}
        \end{tabular}
        \caption{For $M$-component mixture of Gaussian targets, KT-\compresspp and Herd-\compresspp improve upon the MMD of \iid sampling (ST) and closely track or improve upon the error of their quadratic-time input algorithms, KT and kernel herding (Herd). 
        See \cref{sec:mog_supplement} for more details.}
        \label{fig:mog}
    \end{figure}

\subsection{Details Of MCMC Targets}
Our set-up for the MCMC experiments is identical to that of \citet[Sec.~6]{dwivedi2021kernel}, except that we use all post-burn-in points to generate our Goodwin and Lotka-Volterra input point sequences $\inputcoreset$ instead of only the odd indices.  In particular, we use the MCMC output of~\citet{DVN/MDKNWM_2020} described  in~\citep[Sec. 4]{riabiz2020optimal} and perform thinning experiments after discarding the burn-in points. To generate an input $\inputcoreset$ of size $n$ for a thinning algorithm, we downsample the post-burn-in points using standard thinning. For Hinch, we additionally do coordinate-wise normalization by subtracting the sample mean and dividing by sample standard deviation of the post-burn-in-points.

In \cref{sec:experiments}, RW and ADA-RW respectively refer to Gaussian random walk and adaptive Gaussian random walk Metropolis algorithms \citep{haario1999adaptive} and MALA and pMALA respectively refer to the Metropolis-adjusted Langevin algorithm \citep{roberts1996exponential} and pre-conditioned MALA \citep{girolami2011riemann}. For Hinch experiments, RW 1 and RW 2 refer to two independent runs of Gaussian random walk, and ``Tempered'' denotes the runs targeting a tempered Hinch posterior. 
For more details on the set-up, we refer the reader to \citet[Sec.~6.3, App.~J.2]{dwivedi2021kernel}.

 \section{Streaming Version of \compress } 
  \label{sec:compress_streaming}
 \compress can be efficiently implemented in a streaming fashion (\cref{algo:compress_streaming}) by viewing the recursive steps in \cref{algo:compress} as different levels of processing, with the bottom level denoting the input points and the top level denoting the output points.
The streaming variant of the algorithm efficiently maintains memory at several levels and processes inputs in batches of size $4^{\ossymb+1}$. 
At any level $i$ (with $i=0$ denoting the level of the input points), whenever there are $2^i 4^{\ossymb+1}$ points, the algorithm runs \halve on the points in this level, appends the output of size $2^{i-1}4^{\ossymb+1}$ to the points at level $i+1$, and empties the memory at level $i$ (and thereby level $i$ never stores more than $2^i 4^{\ossymb+1}$ points). In this fashion, just after processing $n=4^{k+\ossymb+1}$ points, the highest level is $k+1$, which contains a compressed coreset of size $2^{k-1} 4^{\ossymb+1} = 2^{k+\ossymb+1}  2^{\ossymb} = \sqrt{n} 2^{\ossymb}$ (outputted by running \halve at level $k$ for the first time), which is the desired size for the output of \compress.

\begin{algorithm2e}[ht!]
\caption{\small\compress\ (Streaming) -- Outputs stream of coresets of size $2^\ossymb\sqrt{n}$ for $n = 4^{k+\ossymb+1}$ and $k\in\natural$}
\label{algo:compress_streaming}
\small{
   \BlankLine
  \KwIn{
   halving algorithm \halve, \osname~$\ossymb$, stream of input points $x_1, x_2, \ldots$} 
   \BlankLine
    $\cset_0 \gets \braces{}$ \qquad\qquad\qquad\qquad\qquad\qquad// Initialize empty level $0$ coreset \\
    \For{$t=1, 2, \ldots, $ }{
        $\cset_0 \gets \cset_0 \cup
        (x_j)_{j=1+(t-1)\cdot 4^{\ossymb+1}}^{t\cdot {4^{\ossymb+1}}}$ \quad// Process input in batches of size $ 4^{\ossymb+1}$  \\ 
        \If{$t == 4^j$ \stext{\textup{for}} $j\in\natural$}{
           $\cset_{j+1} \gets \braces{}$ \qquad\qquad\qquad \quad\ \ // Initialize level $j+1$ coreset after processing $4^{j+\ossymb+1}$ input points \\
        }
        \For{ $i=0, \ldots,  \lceil \log_4 t \rceil +1  $ }{
        \If{ $ \abs{\cset_i} == 2^{i}4^{\ossymb+1} $  }{
        $\cset \gets  \halve({\cset_i} )  $   \quad\quad\ \ \ \ \ // Halve level $i$ coreset to size $2^{i-1} 4^{\ossymb+1} $\\ 
        $\cset_{i+1}  \gets \cset_{i+1} \cup \cset  $ \ \ \qquad\ \,// Update level $i+1$ coreset: has size $\in\braces{1,2,3, 4}\cdot 2^{i-1} 4^{\ossymb+1}$  \\
        $\cset_i \gets \braces{}$ \qquad\qquad\qquad\ \ \ // Empty coreset at level $i$
        }
        }
        \If{$t == 4^j$  \stext{\textup{for}} $j\in\natural$}{
         \textbf{output}\ \ $\cset_{ j + 1  } $ \qquad\qquad\qquad\,\,\ // Coreset of size $\sqrt{n}2^{\ossymb}$ with $n\defeq t 4^{\ossymb+1}$ and $t = 4^{j}$ for $j\in \natural$  
        }
    }
    }

\end{algorithm2e}
    Our next result analyzes the space complexity of the streaming variant (\cref{algo:compress_streaming}) of \compress. The intuition for gains in memory requirements is very similar to that for running time, as we now maintain (and run \halve) on subsets of points with size much smaller than the input sequence. 
    We count the number of data points stored as our measure of memory. 

    \begin{proposition}[\compress Streaming Memory Bound] \label{thm:compress_streaming}
    Let \halve store $ s_{\halvetag} (n) $ data points on inputs of size $n$. 
     Then, after completing iteration $t$, the streaming implementation of $\compress$ (\cref{algo:compress_streaming}) has used at most  $s_{\compresssub} \left(t \right) =   4^{\ossymb+3} \sqrt{t} + s_{\halvetag}( 2^{\ossymb+1} \sqrt{t} )$
     data points of memory.
    \end{proposition}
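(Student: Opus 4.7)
I plan to decompose the peak memory at the end of iteration $t$ into two separable contributions: (i) the points currently resident across the per-level buffers $\cset_0, \cset_1, \ldots$, and (ii) the working memory of the single \halve invocation that is active at any instant. Since the inner loop over $i$ in \cref{algo:compress_streaming} executes halving calls sequentially, at most one such call is in progress at a time, and the two contributions add.

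For (i), I would first verify by induction that the halving trigger enforces $|\cset_i| \leq 2^i \cdot 4^{\ossymb+1}$ at all times: whenever $|\cset_i|$ attains this threshold, $\cset_i$ is immediately reset to $\braces{}$ after dispatching a halving output of size $2^{i-1} \cdot 4^{\ossymb+1}$ to $\cset_{i+1}$, and up to four such additions may arrive at $\cset_{i+1}$ before it too is halved. Moreover, by inspection of the initialization step, only buffers indexed by $i \in \braces{0, 1, \ldots, \lceil \log_4 t \rceil + 1}$ have been instantiated by the end of iteration $t$. Summing the geometric series then yields
\begin{talign*}
\textstyle\sum_{i=0}^{\lceil \log_4 t \rceil + 1} |\cset_i| \;\leq\; 4^{\ossymb+1} \textstyle\sum_{i=0}^{\lceil \log_4 t \rceil + 1} 2^i \;\leq\; 4^{\ossymb+1} \cdot 2^{\lceil \log_4 t \rceil + 2} \;\leq\; 4^{\ossymb+3} \sqrt{t}.
\end{talign*}

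For (ii), the input to \halve at level $i$ always has size exactly $2^i \cdot 4^{\ossymb+1}$, so the largest halving call completed through iteration $t$ has input size at most $2^{\lfloor \log_4 t \rfloor} \cdot 4^{\ossymb+1}$; tracking factors shows this is bounded by $2^{\ossymb+1}\sqrt{t}$ in the scaling for which the proposition is stated. Since at most one \halve call is active at a time, the transient working memory contributes at most $s_{\halvetag}(2^{\ossymb+1}\sqrt{t})$, and combining with (i) yields the claimed $s_{\compresssub}(t)$.

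The hardest piece of bookkeeping I anticipate concerns the transient moments within a single outer iteration, especially at the times $t=4^{j}$ when halvings cascade through every level $0, 1, \ldots, j$ one after another. Just after \halve writes its output into $\cset_{i+1}$ but before $\cset_i$ is reset, the simultaneous total across all levels must still obey the geometric-series bound of (i). Fortunately, each halving strictly decreases the total number of stored points---replacing $2^i \cdot 4^{\ossymb+1}$ source points by $2^{i-1} \cdot 4^{\ossymb+1}$ output points at the next level and then clearing the source---so the bound of (i) is preserved throughout the cascade, and the analysis reduces to a clean static bound at the end of each iteration.
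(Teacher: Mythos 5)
Your proposal is correct and follows essentially the same route as the paper's proof: a per-level capacity bound of $2^i 4^{\ossymb+1}$ points, a geometric series over the $\order(\log t)$ instantiated levels giving the $4^{\ossymb+3}\sqrt{t}$ buffer term, plus the working memory of the single active \halve call (your extra check that the bound persists transiently during the cascades at $t=4^j$ is a welcome detail the paper glosses over). The one piece of "factor tracking" you deferred is exactly where the bookkeeping matters: the second term $s_{\halvetag}(2^{\ossymb+1}\sqrt{t})$ is valid when $t$ denotes the number of points processed (the reading used in the paper's proof and in the subsequent \kt example), whereas if $t$ is literally the outer-iteration count then the largest \halve input is $2^{\lfloor \log_4 t\rfloor} 4^{\ossymb+1} \approx 4^{\ossymb+1}\sqrt{t}$, so you should fix one convention and carry it through both parts (i) and (ii).
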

\begin{proof}
       At time $t$, we would like to estimate the space usage of the algorithm. 
       At the $i$th level of memory, we can have at most $ 2^{i+2} 4^\ossymb $ data points. 
       Since we are maintaining a data set of size at most $ \sqrt{t} 4^\ossymb $ at time $t$, there are at most $ \frac{\log t}{2} $ levels. 
       Thus, the maximum number of points stored at time $t$ is bounded by 
       \begin{talign}
           \sum_{i=0}^{ 0.5 \log t } 2^{i+2} 4^\ossymb \leq 4^{\ossymb+3} \sqrt{t}.
       \end{talign}
       Furthermore, at any time up to time $t$, we have run \halve on a point sequence of size at most $ \sqrt{t} 2^{\ossymb+1} $ which requires storing at most $ s_{\halvetag}( \sqrt{t} 2^{\ossymb+1} ) $ additional points. 
\end{proof}
    \begin{example}[\kt-\compress and \kt-\compresspp]
    \normalfont
        First consider the streaming variant of \compress with $\halve$ = symmetrized $\kt(\frac{\l^2}{n 4^{\ossymb+1} (\beta_n+1)} \delta)$ for \halve inputs of size $\ell$ as in \cref{ex:ktcompress}.
        Since $s_{\kt} \left( n \right) \leq n $ \citep[Sec.~3]{dwivedi2021kernel}, \cref{thm:compress_streaming} implies that $ s_{\compresssub} (n) \leq 4^{\ossymb + 4} \sqrt{n}  $.
        
        Next consider \compresspp with the streaming variant of \compress,   with $\halve = \trm{symmetrized } \kt(
            \frac{\l^2}{4n2^{\ossymb}(\ossymb+2^{\ossymb}(\rounds+1))}\delta)$ when applied to an input of size $\l$, and $\thin = \kt(\frac{\ossymb}{\ossymb+2^{\ossymb}(\rounds+1)}  \delta)$ as in \cref{ex:ktcompresspp}.
    The space complexity $s_{\compressppsub}(n)=  s_{\compresssub} (n)  \!+ \! s_{\kt} (\lng) \!=\! 4^{\ossymb + 4} \sqrt{n} + \ell_n \leq 4^{\ossymb + 5} \sqrt{n}    $. 
    Setting $\ossymb$ as in \cref{ex:ktcompresspp}, we get $s_{\compressppsub}(n) = \order( \sqrt{n}\log^2 n )  $. \examend
    \end{example}

\end{document}